\documentclass{article}
\usepackage{iclr2025_conference}

\usepackage{hyperref}
\usepackage{microtype}
\usepackage{graphicx}
\usepackage{booktabs}
\usepackage{amsmath}
\usepackage{placeins}
\usepackage{amssymb}
\usepackage{mathtools}
\usepackage{amsthm}
\usepackage{algorithm}
\usepackage{algpseudocode}
\usepackage{caption}
\usepackage{pgfplots}
\usepackage{libertine}
\usepackage{xcolor}
\usepackage[T1]{fontenc}
\usepackage{tikz}
\usetikzlibrary{arrows.meta,fit,backgrounds,decorations.pathreplacing}
\usepackage{etoc}

\definecolor{accent}{HTML}{24405E}
\colorlet{tocaccent}{accent}

\definecolor{c0}{HTML}{2C6E91}
\definecolor{c1}{HTML}{D98841}
\definecolor{c2}{HTML}{6B9E55}
\definecolor{ink}{HTML}{1F2A33}
\definecolor{mut}{HTML}{6B7680}
\colorlet{panelbg}{accent!3!white}
\colorlet{panelln}{accent!22!white}
\colorlet{maskc}{accent!24!white}
\definecolor{boost}{HTML}{B23A48}
\definecolor{nullc}{HTML}{B9C2C9}

\tikzset{
  panel/.style={rounded corners=3pt, draw=panelln, line width=0.7pt, fill=panelbg},
  tok/.style={rounded corners=2pt, minimum width=8mm, minimum height=8mm,
              font=\small\bfseries, text=white, inner sep=0pt},
  mask/.style={tok, fill=maskc, text=ink!70, draw=panelln, font=\small},
  ttl/.style={font=\bfseries\small, text=ink},
  sub/.style={font=\scriptsize, text=mut},
  ar/.style={-{Latex[length=2.2mm,width=2mm]}, line width=0.9pt, ink},
}

\newcommand{\na}{\textendash{}}

\newcommand{\qprompt}[1]{{\footnotesize\itshape\color{mut}Prompt.\ #1}\par
  \smallskip{\color{panelln}\rule{\linewidth}{0.5pt}}\par\smallskip}
\newcommand{\qrow}[3]{\noindent{\footnotesize\color{#1}\scshape #2.}\
  {\footnotesize\color{ink}#3}\par\smallskip}
\newcommand{\qcard}[1]{\par\smallskip\noindent
  \begin{tikzpicture}\node[panel, inner sep=8pt]{\begin{minipage}{\dimexpr\linewidth-18pt\relax}#1\vspace{-\smallskipamount}\end{minipage}};\end{tikzpicture}\par\smallskip}

\colorlet{tango}{c0}
\colorlet{baseline}{boost}
\colorlet{refline}{mut}
\definecolor{attackb}{HTML}{B85C1E}
\definecolor{attackc}{HTML}{7B5EA7}
\pgfplotsset{compat=1.16}

\newtheoremstyle{thmthm}{0pt}{0pt}{\itshape}{0pt}{\bfseries\color{c0}}{.}{.5em}{}
\newtheoremstyle{thmlem}{0pt}{0pt}{\itshape}{0pt}{\bfseries\color{c0}}{.}{.5em}{}
\newtheoremstyle{thmsup}{0pt}{0pt}{\itshape}{0pt}{\bfseries\color{c0}}{.}{.5em}{}
\newtheoremstyle{thmcor}{0pt}{0pt}{\itshape}{0pt}{\bfseries\color{c0}}{.}{.5em}{}
\theoremstyle{thmthm}
\newtheorem{theorem}{Theorem}
\theoremstyle{thmlem}
\newtheorem{lemma}{Lemma}
\theoremstyle{thmsup}
\newtheorem{proposition}{Proposition}
\theoremstyle{thmcor}
\newtheorem{corollary}{Corollary}

\usepackage[capitalize,noabbrev]{cleveref}

\usepackage[most]{tcolorbox}
\tcolorboxenvironment{theorem}{
  enhanced,
  colback=accent!4, colframe=accent!50,
  boxrule=0.5pt, arc=2.5pt,
  left=7pt, right=7pt, top=3pt, bottom=3pt, boxsep=0pt,
  before skip=6pt, after skip=6pt}

\title{TANGO: Watermarking Masked Diffusion \\ Language Models in Token Pairs}

\author{Kasra Arabi \And Nir Weinberger \And Micah Goldblum \And Niv Cohen}

\newcommand{\ci}[2]{$#1^{\pm #2}$}

\begin{document}
\etocsettocdepth.toc{none}

\maketitle

\begin{abstract}
Masked-diffusion language models fill in masked positions in parallel and
in no fixed order. Most practical text watermarks assume left-to-right
generation. They key each token to the tokens before it, and in a
diffusion model those tokens may still be masked. A fixed green list needs
no such context, but it favors the same tokens at every position, so these
tokens appear more often in watermarked text. An attacker who compares
token frequencies in watermarked and unwatermarked text can recover the
list and forge text that the provider's own detector accepts. We present
\textbf{TANGO}, a watermark for masked-diffusion language models that keys
each new token to a nearby token that is already unmasked. A secret key
splits the vocabulary into color classes, and TANGO biases the new token
toward a color determined by the key and the nearby token's color. The
watermark is therefore embedded in \emph{pairs} of tokens. Because the
favored color changes from position to position, token frequencies stay much closer to those of unwatermarked text than under a fixed green list. Detection needs only the text and
the key, and it does not assume any unmasking order. On two
masked-diffusion models, TANGO detects nearly all unedited watermarked
texts and most edited ones, and frequency attacks that forge the fixed
green list fail against it.
\end{abstract}

\section{Introduction}
\label{sec:intro}

Language models now write text that readers cannot tell apart from human
writing, and their providers are increasingly asked to identify which texts
their models produced. Text watermarking addresses this need. The provider
embeds a secret statistical signal during generation and later tests for
its presence. A prominent method is the context-hashed green list
\citep{kirchenbauer2023watermark}, which derives its favored token set
from a hash of the preceding tokens. In left-to-right generation, these
tokens are already available when the next token is sampled.
Masked-diffusion language models
\citep{austin2021d3pm,sahoo2024mdlm,nie2025llada,dream2025} instead fill
positions out of order through a \emph{denoising loop}. Under confidence-based decoding, the model unmasks the positions it is most confident about first, regardless of their order in the sequence. When a position is unmasked, the tokens before it may still be masked, so the context-hashed green list cannot be computed.

Other watermarks run on diffusion models but face different problems. Gumbel-max sampling \citep{aaronson2023watermark}, which we call the Gumbel rule, can be applied at every denoising step \citep{bagchi2025ddlm}. However, when we key it on the preceding token, as in its autoregressive form, detection is weak in our experiments. A confidence-ordered sampler unmasks first the positions whose token the model is most certain of. With little uncertainty left there, the Gumbel rule tends to pick that same token, which likely explains why it embeds little signal (\Cref{sec:experiments-headline}).
The fixed green list of \citet{zhao2023provable}, which we call the red--green list, splits the vocabulary once into green and red tokens and favors the green ones. It needs no context and runs unchanged on diffusion models. However, it favors the same tokens at every position, so an attacker can estimate the green list from token frequencies in watermarked text. With the green list, the attacker can forge text that the detector accepts \citep{jovanovic2024stealing} or remove the watermark \citep{zhang2024mip}. A provider that needs to prevent forgery therefore cannot use the red--green list.

We present \textbf{TANGO}, a watermark that is designed for the denoising
loop and does not rely on a left-to-right generation order. A secret key
colors the vocabulary into $q$ classes. TANGO pairs each position with the
token at a fixed offset, which we call the position's \emph{tap}. If the
tap has already been generated when a position is unmasked, TANGO biases
the logits toward the one class whose color satisfies a keyed modular
relation with the tap's color. We call this relation the \emph{checksum}.
The detector recolors a candidate text with the key and counts the
positions where the checksum holds. Because the classes have near-equal
size, the checksum holds at about $1/q$ of the positions in unwatermarked
text, and the detector flags text where it holds much more often.
Detection needs only the text and the key, not the prompt or the model.

The watermark signal is therefore \emph{second order}. It comes from
pairs of token colors, not from any single token. When every class carries equal probability mass, the biases at different positions also cancel out on average. If the favored class is uniform and independent across positions, then under an idealized sampler every token has the same expected frequency as without the watermark (\Cref{thm:unforgeable}). Ranking tokens by how much more often they
appear in watermarked text therefore reveals, in expectation, no difference between the color classes. Empirically, this attack recovers and forges the
red--green list but stays at chance against TANGO, and none of its
forgeries is accepted (\Cref{sec:experiments-stealth}). An attacker who
knows TANGO's design can instead count token pairs. At every forging bias and every number of texts we test, forgeries built from these pair counts succeed less often against TANGO than forgeries of the red--green list
succeed against it.

Embedding the signal in token pairs raises two concerns. First, editing
either token of a pair can break it. TANGO therefore assigns tokens to
classes by \emph{meaning}. It sorts tokens by a keyed projection of their embeddings and cuts the
sorted list into equal-sized classes. Tokens with similar meanings then
tend to share a class, so a synonym substitution can keep the checksum. Second, forcing a relation
between neighbors could hurt the quality of the generated text. TANGO
favors a whole class of tokens, so the model still chooses freely within
that class, and quality degrades gradually as the bias grows. Under
synonym substitution and word insertion, TANGO detects nearly as many
texts as the red--green list. TANGO is more sensitive to word deletion than the red--green list.
Deleting a word breaks every pair that spans it, whereas the red--green
list scores each token independently (\Cref{sec:experiments-headline}).

\paragraph{Contributions.}
\begin{itemize}
    \item We introduce TANGO, a coloring-based watermark for masked-diffusion
    language models, which couples the color of each token to the color of
    one already-unmasked neighbor (\Cref{sec:method}).

    \item We prove three properties of TANGO (\Cref{sec:theory}). The
    expected detection score grows as $\sqrt{T}$, where $T$ is the number of
    scored pairs. When every class carries equal probability mass and the favored class is
    uniform and independent across positions, the watermark leaves the
    expected frequency of every token unchanged under an idealized sampler, so expected token frequencies do not reveal the key. Replacing tokens at random with rate $\rho$ multiplies
    the expected detection score by $(1-\rho)^{r}$, where $r$ is the number
    of tokens a checksum reads, so checksums over fewer tokens are more
    robust to edits.

    \item On two masked-diffusion models, TANGO detects nearly all unedited
    watermarked texts and most edited ones. Its detection rate on unedited
    text and its perplexity are comparable to those of the red--green list.
    At similar perplexity near our defaults, TANGO detects far more texts than the Gumbel rule
    (\Cref{sec:experiments-headline} and \Cref{tab:e7}). TANGO is also harder to forge, since frequency attacks that forge the
    red--green list fail against it (\Cref{sec:experiments-stealth}).

\end{itemize}

\section{Preliminaries}
\label{sec:prelim}

\textbf{Decoding in masked-diffusion language models.} A masked-diffusion model
starts from a block of masks and, at each step, chooses which masked
positions to fill and what to write there. Let $\mathcal{V}$ be the
vocabulary, $V = |\mathcal{V}|$, and $\mathsf{m} \notin \mathcal{V}$ the
mask symbol. Given a prompt, the model fills a length-$L$ block over $S$
denoising steps, producing states $x^{(0)}, \dots, x^{(S)} \in
(\mathcal{V} \cup \{\mathsf{m}\})^{L}$ with $x^{(0)}$ fully masked and
$x^{(S)} = (t_1, \dots, t_L)$ the output. At each step $j$, one
bidirectional forward pass of the model (parameters $\theta$) yields
marginals $p_\theta(x_i = v \mid x^{(j-1)})$ for every masked position.
The step samples a candidate at each masked position and keeps the $K$
most confident positions, re-masking the rest.
In LLaDA \citep{nie2025llada}, and in Dream \citep{dream2025} with the \texttt{maskgit\_plus} ordering that we use,
the confidence of a candidate is the probability
the model assigns to the sampled token, and the step's schedule sets
$K$.\footnote{Semi-autoregressive variants decode blocks left to right but
remain order-free within a block.} Unmasking is irreversible, so the order
in which positions are filled is a permutation of $\{1, \dots, L\}$ that depends on the data.

\textbf{Requirements.} This generation setting motivates three
requirements. \textit{(R1) Prompt-free detection.} The detector should use
only the candidate text and the secret key. \textit{(R2) Order-agnostic
detection.} Detection should not need the unmasking order, which the final
text does not reveal. \textit{(R3) Frequency-hidden key.} The token
frequencies of watermarked text should not make the secret key easy to
infer.

\textbf{Attacker.} We assume the attacker knows the watermarking method and every design parameter, but not the secret key. It can collect watermarked and
unwatermarked texts, edit a watermarked text, and generate text with a
language model it controls. Its goal is either to forge, that is, to
produce text that the provider's detector accepts, or to remove the
watermark from a text by editing it.

\section{TANGO}
\label{sec:method}

\textbf{Overview.} Before generation, the secret key $k$ pseudorandomly
assigns every vocabulary token one of $q$ colors through a coloring $\chi :
\mathcal{V} \to \mathbb{Z}_q$ that is balanced (\Cref{sec:method-coloring}), where $\mathbb{Z}_q =
\{0, \dots, q-1\}$ with arithmetic modulo $q$. During generation, TANGO
considers each masked position $i$ whose tap, the token at position $i - \delta$
at a small fixed lag $\delta$, is already unmasked. It adds a bias $\beta > 0$
to the logits of every token of the one color that satisfies the checksum
with the tap's color. The step then selects which candidates to unmask by
the sampler's usual rule, keeping the most confident ones, with
confidences computed from the biased distribution. Detection needs neither
the model nor the prompt. It recolors the text with $\chi$, counts the
positions where the checksum holds, and compares the count with the
binomial distribution expected by chance (\Cref{fig:schematic}).

\begin{figure}[t]
\centering
\resizebox{\textwidth}{!}{\begin{tikzpicture}[font=\sffamily]

\begin{scope}[shift={(0,0)}]
  \node (f1a) at (-3.35,-2.25) {}; \node (f1b) at (3.35,2.35) {};
  \node[ttl] at (0,1.95) {1.\; Keyed balanced coloring};

  \def\xs{-2.35}
  \foreach \i/\col in {0/c1,1/c0,2/c2,3/c0,4/c1,5/c2,6/c0,7/c2,8/c1,9/c0,10/c2,11/c1}{
    \node[circle,fill=\col,minimum size=3.4mm,inner sep=0pt]
      at ({\xs+0.427*\i},0.85) {};
  }

  \node[fill=c0, rounded corners=2pt, minimum width=18mm, minimum height=6.5mm,
        text=white, font=\scriptsize\bfseries] at (-1.85,-0.30) {class 0};
  \node[fill=c1, rounded corners=2pt, minimum width=18mm, minimum height=6.5mm,
        text=white, font=\scriptsize\bfseries] at (0,-0.30) {class 1};
  \node[fill=c2, rounded corners=2pt, minimum width=18mm, minimum height=6.5mm,
        text=white, font=\scriptsize\bfseries] at (1.85,-0.30) {class 2};

  \draw[decorate,decoration={brace,amplitude=4pt,mirror},mut]
    (-2.75,-0.80) -- (2.75,-0.80);
  \node[sub,anchor=north] at (0,-1.02)
    {each class holds ${\approx}\,V/q$ tokens};
\end{scope}

\begin{scope}[shift={(7.05,0)}]
  \node (f2a) at (-3.35,-2.25) {}; \node (f2b) at (3.35,2.35) {};
  \node[ttl] at (0,1.95) {2.\; Checksum in the denoising loop};

  \def\y{1.10}
  \node[tok, fill=c0, minimum width=7mm, minimum height=7mm] (g0) at (-2.35,\y-0.2) {0};
  \node[tok, fill=c1, minimum width=7mm, minimum height=7mm] (g1) at (-1.41,\y-0.2) {1};
  \node[tok, fill=c0, minimum width=7mm, minimum height=7mm] (g2) at (-0.47,\y-0.2) {0};
  \node[mask, minimum width=7mm, minimum height=7mm]         (gi) at (0.47,\y-0.2)  {?};
  \node[mask, minimum width=7mm, minimum height=7mm]         (g4) at (1.41,\y-0.2)  {\,};
  \node[tok, fill=c1, minimum width=7mm, minimum height=7mm] (g5) at (2.35,\y-0.2)  {1};

  \draw[ar] (g1.north) to[bend left=32] (gi.north);
  \draw[ar] (g5.north) to[bend right=32] (gi.north);
  \node[sub,anchor=north,inner sep=1.5pt] at (gi.south) {pos.\ $i$};
  \node[sub,anchor=north,inner sep=1.5pt] at (g1.south) {tap $i{-}2$};
  \node[sub,anchor=north,inner sep=1.5pt] at (g5.south) {or $i{+}2$};

  \def\by{-1.55}
  \fill[c0!85] (-1.55,\by) rectangle ++(0.50,0.85);
  \fill[c1!85] (-0.55,\by) rectangle ++(0.50,0.62);
  \fill[c2]    (0.45,\by)  rectangle ++(0.50,0.62);
  \fill[c2!55] (0.45,{\by+0.62}) rectangle ++(0.50,0.85);
  \draw[boost, line width=1pt, -{Latex[length=1.6mm,width=1.6mm]}]
    (1.17,{\by+0.70}) -- node[right,font=\scriptsize,boost]{$+\beta$} (1.17,{\by+1.50});
  \draw[mut, line width=0.7pt] (-1.80,\by) -- (1.70,\by);
  \node[font=\scriptsize\bfseries, text=c0] at (-1.30,{\by-0.22}) {0};
  \node[font=\scriptsize\bfseries, text=c1] at (-0.30,{\by-0.22}) {1};
  \node[font=\scriptsize\bfseries, text=c2] at (0.70,{\by-0.22})  {2};
\end{scope}

\begin{scope}[shift={(14.1,0)}]
  \node (f3a) at (-3.35,-2.25) {}; \node (f3b) at (3.35,2.35) {};
  \node[ttl] at (0,1.95) {3.\; Prompt-free detection};
  \node[sub, anchor=north] at (0,1.62) {match count vs.\ binomial null};

  \begin{scope}[shift={(-2.15,-1.70)}]
    \draw[->, mut, line width=0.6pt] (0,0) -- (4.4,0) node[right,sub]{$C$};
    \draw[->, mut, line width=0.6pt] (0,0) -- (0,2.85);
    \foreach \x/\h in {0.28/0.25,0.58/0.49,0.88/0.88,1.18/1.33,1.48/1.61,1.78/1.40,2.08/0.98,2.38/0.60,2.68/0.28}{
      \fill[nullc] (\x,0) rectangle ++(0.22,\h);
    }
    \node[sub,nullc!50!ink] at (1.5,2.02) {null $T/q$};
    \fill[boost] (3.65,0) rectangle ++(0.22,2.17);
    \draw[boost, line width=0.8pt, dashed] (3.15,0) -- (3.15,2.75);
    \node[font=\scriptsize\bfseries, boost, anchor=south] at (3.76,2.22) {$C$};
    \node[sub, boost, anchor=east] at (3.07,2.57) {$z_\star$};
  \end{scope}
\end{scope}

\begin{scope}[on background layer]
  \node[panel, fit=(f1a)(f1b), inner sep=0pt] {};
  \node[panel, fit=(f2a)(f2b), inner sep=0pt] {};
  \node[panel, fit=(f3a)(f3b), inner sep=0pt] {};
\end{scope}

\end{tikzpicture}
}
\caption{\textbf{How TANGO works.} \textbf{(1)}~A secret key sorts the
vocabulary by a keyed embedding score and cuts it into $q$ color classes
of near-equal size, so that tokens with similar embeddings tend to share
a color. \textbf{(2)}~When the tap of a masked position (or its right neighbor at
$i+2$) is already unmasked, a logit bias $+\beta$ favors the one class that satisfies the checksum with that token's color. The favored class changes from
position to position, so under the conditions of \Cref{thm:unforgeable}, no token is favored on average.
\textbf{(3)}~Detection counts the matches $C$ among the $T$ scored
positions and flags the text when $C$ exceeds the $\mathrm{Binom}(T,
1/q)$ null by more than a threshold $z_\star$ standard deviations.}
\label{fig:schematic}
\end{figure}
\subsection{Balanced colorings}
\label{sec:method-coloring}

A coloring is \emph{balanced} if every class contains either $\lfloor
V/q \rfloor$ or $\lceil V/q \rceil$ tokens. Balance makes plausible the
uniform-color hypothesis under which an unwatermarked position satisfies
the checksum with probability $1/q$ (\Cref{prop:null}), and it is the
token-count counterpart of the balance of class probabilities that
\Cref{thm:unforgeable} assumes. How tokens are assigned to classes
determines TANGO's robustness to edits.

We consider three ways of assigning tokens to colors. A \textbf{hash}
coloring partitions a keyed pseudorandom permutation of the vocabulary
into $q$ equal blocks. It ignores the meaning of the token, so replacing a
token by a synonym randomizes its color, and the pair it belongs to no
longer satisfies the checksum more often than chance. A \textbf{semantic-cluster} coloring is a balanced
clustering of the token embeddings. Tokens with similar embeddings, which
often occur near each other in text, then share a color, so unwatermarked text satisfies the
checksum more often than chance. At small $q$ this raises the
false-positive rate far above its nominal value ($60\%$ at $q = 2$ and
$30.5\%$ at $q = 3$, \Cref{app:sweeps}). We propose
\textbf{semantic-quantile} coloring, which sorts tokens by the projection
$\langle \mathbf{e}_v / \|\mathbf{e}_v\|, \mathbf{r}_k \rangle$ of the
unit-normalized embedding of each token $v$ onto a key-seeded direction
$\mathbf{r}_k$, and cuts the sorted list into $q$ equal-count buckets.
Synonyms tend to have nearby projections, so this coloring is designed
to let a substitution keep the color more often than chance and thereby
preserve part of the signal under editing. Unlike the semantic-cluster
coloring, it keeps the false-positive rate close to the nominal value
(\Cref{app:sweeps}).

Balance holds exactly for token counts, but text does not use every
token equally often. In natural text, one class then carries more or less
than $1/q$ of the probability, the chance rate drifts away from $1/q$, and
token frequencies begin to reveal the key (\Cref{prop:leak}). ZCA
whitening \citep{kessy2018whitening} of the embeddings reduces this drift
but does not remove it (\Cref{sec:experiments-null}). Our defaults do not
whiten, and we set the detection threshold per model on unwatermarked
text.

\subsection{Generation: enforcing the checksum inside the denoising loop}
\label{sec:method-generation}

The red--green list \citep{zhao2023provable} favors one token set for the
entire generation. TANGO instead chooses the favored class at each
position from a token that is already unmasked. We say that position $i$
is \emph{eligible} at a denoising step if it is still masked and its tap
$i - \delta$ is already unmasked. A position never waits for its tap, so
generation proceeds in whatever order the sampler chooses. At an eligible
position, TANGO favors the tokens that satisfy the checksum. With $a_\delta \in \mathbb{Z}_q \setminus \{0\}$ the
coefficient of the tap and $b \in \mathbb{Z}_q$ the target residue,
the checksum is
\begin{equation}
\label{eq:constraint}
\big( \chi(t_i) + a_\delta\, \chi(t_{i-\delta}) \big) \bmod q \;=\; b .
\end{equation}
The new token's color enters with unit weight, so once the tap is
unmasked, exactly one class satisfies \Cref{eq:constraint},
\begin{equation}
\label{eq:satisfying}
s_i \;=\; \big( b - a_\delta\, \chi(t_{i-\delta}) \big) \bmod q .
\end{equation}
TANGO adds $\beta$ to the logits of every token in class $s_i$ (after
classifier-free guidance on LLaDA), and the step then samples and unmasks
candidates as usual (\Cref{alg:bcc} in \Cref{app:protocol}). The
general form uses a set $\mathcal{D}$ of tap lags and replaces the tap
term of \Cref{eq:constraint} by $\sum_{\delta \in \mathcal{D}} a_\delta\,
\chi(t_{i-\delta})$, so that a checksum reads $|\mathcal{D}| + 1$ tokens.
By default we use a single tap, $\mathcal{D} = \{2\}$
(\Cref{sec:experiments-sweeps}).

\textbf{When the tap is still masked.} The sampler often commits a
position before its tap. With enforcement from the left only
(\emph{one-sided} enforcement), this leaves about $40\%$ of scored pairs
on LLaDA unenforced (\Cref{tab:e9-screen}). With a single tap and
$\gcd(a_\delta, q) = 1$, \Cref{eq:constraint} can also be solved for the
tap, so a position whose right neighbor at $i + \delta$ is already
committed is biased toward the color that completes that pair. With this
\emph{either-side} enforcement, most pairs are enforced at whichever
endpoint the sampler commits second. Detection and the guarantees of
\Cref{sec:theory} are unchanged, because each biased token still
completes exactly one pair through the same checksum.

\subsection{Detection}
\label{sec:method-detection}

The detector uses only the candidate text and the key, so it meets the
prompt-free requirement (R1). Given a text $t_1, \dots, t_L$, position $i$ is scored whenever its
tap $t_{i-\delta}$ exists, that is, whenever $i > \delta$. Let
$\mathcal{T}$ be the set of scored positions and $T = |\mathcal{T}|$
their number. Each scored position $i$ indexes the pair $(t_{i-\delta},
t_i)$, so we also call $T$ the number of scored pairs. Both depend only
on the observed text, so the detector never needs the order in which
positions were unmasked (R2). At each scored
position the detector recolors the token and its tap with $\chi$ and
checks whether \Cref{eq:constraint} holds. The statistic $C$ counts the
positions where it does, and $z$ measures how far $C$ exceeds its chance
value $T/q$, in standard deviations of the binomial null,
\begin{equation}
\label{eq:detector-z}
C \;=\; \sum_{i \in \mathcal{T}} \mathbf{1}\!\left[
\big( \chi(t_i) + a_\delta\, \chi(t_{i-\delta}) \big) \bmod q = b
\right],
\qquad
z \;=\; \frac{C - T/q}{\sqrt{T}\,\sigma_0},
\qquad
\sigma_0^2 \;=\; \tfrac{1}{q}\bigl(1 - \tfrac{1}{q}\bigr),
\end{equation}
where $\mathbf{1}[\cdot]$ is the indicator and $\sigma_0^2$ is the
variance of one match under the null.
The text is declared watermarked when $z$ exceeds a threshold $z_\star$.
Choosing $z_\star$ so that unwatermarked text is flagged at a chosen
false-positive rate is what we call \emph{calibration}. Under three
approximations stated in \Cref{app:optimal} and with one-sided
enforcement, this count test is the likelihood-ratio test among texts
with the same $T$. At every false-positive rate that a threshold on $C$
attains, no detector then detects more (\Cref{prop:optimal}).

The binomial null is exact under two conditions. First, the scored
token's color is uniform over the $q$ classes and independent of the
tap's color. Adding the tap's term then gives a uniform residue, so the
checksum holds with probability $1/q$ at every position. Second, the
match events at different positions are independent. Then $C$ is exactly
binomial.

\begin{proposition}[Binomial null]
\label{prop:null}
Assume that at each scored position the color $\chi(t_i)$ is uniform on
$\mathbb{Z}_q$ and independent of the colors of its taps, and that the
match indicators are mutually independent across positions. Then $C \sim
\mathrm{Binom}(T, 1/q)$ for every residue $b$, and $z$ converges in
distribution to $\mathcal{N}(0,1)$ as $T \to \infty$. (Proof in
\Cref{app:proof-null}.)
\end{proposition}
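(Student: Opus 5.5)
The proof has two steps: show that each match indicator is Bernoulli$(1/q)$, then use the assumed independence to get a binomial count and apply the central limit theorem.

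\textbf{Step 1: one position.} Fix a scored position $i$ and let $U = \chi(t_i)$ and $W = a_\delta\,\chi(t_{i-\delta})$; in the general form, $W = \sum_{\delta\in\mathcal{D}} a_\delta\,\chi(t_{i-\delta})$. By hypothesis $U$ is uniform on $\mathbb{Z}_q$ and independent of the tap colors, so $U$ is independent of $W$, which is a function of those colors. Condition on $W = w$. The match event is $\{U = (b-w) \bmod q\}$, which singles out exactly one residue because $U$ enters the checksum with coefficient $1$. Its conditional probability is therefore $1/q$. Averaging over $w$ gives
\[
\Pr\bigl[(U+W)\bmod q = b\bigr] = 1/q
\]
for every $b$. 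This uses no property of $a_\delta$ and no assumption on the distribution of the tap colors, and that is why the result holds for every residue $b$.

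\textbf{Step 2: the count.} The indicators $\mathbf{1}_i$, $i\in\mathcal{T}$, are Bernoulli$(1/q)$ by Step 1 and mutually independent by hypothesis. Their sum $C$ is therefore $\mathrm{Binom}(T,1/q)$, with mean $T/q$ and variance $T\sigma_0^2$.

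\textbf{Step 3: the limit.} Since $z = (C - T/q)/(\sqrt{T}\,\sigma_0)$ is the standardized sum of $T$ i.i.d.\ variables with finite, nonzero variance $\sigma_0^2$ (valid for $q\ge 2$), the classical Lindeberg--L\'evy central limit theorem, equivalently de Moivre--Laplace, gives $z \Rightarrow \mathcal{N}(0,1)$ as $T\to\infty$.

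\textbf{The main obstacle} is Step 1, and specifically being careful that independence of $U$ from the individual tap colors implies independence of $U$ from their combination $W$. That step is immediate because $W$ is a measurable function of the tap colors. One should also note that the mutual-independence hypothesis in Step 2 is genuinely needed: overlapping pairs share tokens, so pairwise uniformity alone would not give a binomial count. The statement assumes this independence explicitly, so we can simply invoke it. The rest is routine.
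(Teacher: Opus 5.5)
Your proof is correct and follows the paper's argument: condition on the tap contribution, use that $\chi(t_i)$ enters with coefficient $1$ so a match is a single residue of a uniform variable, then use the assumed independence to get the binomial law and de Moivre--Laplace to get the limit. One wording fix in your closing remark: the hypothesis must be read as independence from the \emph{joint} vector of tap colors, which is how your Step~1 and the paper actually use it, because independence from each tap separately is not enough when $|\mathcal{D}|\ge 2$ --- for example, $q=2$, $a_1=a_2=1$ and $\chi(t_i)=\chi(t_{i-1})+\chi(t_{i-2}) \bmod 2$ with independent uniform taps makes $\chi(t_i)$ uniform and independent of each tap, yet every position matches $b=0$.
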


Real text satisfies neither condition exactly. For a hash coloring,
uniformity holds in expectation over the key to within $1/V$
(\Cref{prop:keyconc}). For the semantic-quantile coloring, which is not
uniformly random, we measure the class masses directly (\Cref{tab:residue-diag}). Both kinds of independence fail because neighboring tokens are correlated, most strongly when adjacent. Under the default key, the
null mean score of natural text ranges from $-0.93$ to $+0.98$ across
residues at lag $1$ and from $-0.42$ to $+0.24$ at lag $2$
(\Cref{tab:residue-diag}). This correlation is why the default tap skips
the adjacent token.

\subsection{Theoretical guarantees}
\label{sec:theory}
\label{sec:theory-detect}
\label{sec:theory-unforge}
\label{sec:theory-robust}

We analyze how much text the detector needs (\Cref{prop:ez}), what token
frequencies reveal about the key (\Cref{thm:unforgeable}), and how edits
weaken the score (\Cref{thm:robust}).

\textbf{Setting and assumptions.}
Let $M_i \in \{0, 1\}$ indicate whether the checksum holds at scored
position $i$, so that $C = \sum_{i \in \mathcal{T}} M_i$. On the
generation side, let $p_i$ be the model's distribution $p_\theta(x_i =
\cdot \mid x^{(j-1)})$ at position $i$ after temperature scaling, and
$m_s(p) = \sum_{v : \chi(v) = s} p(v)$ the total probability that $p$
assigns to tokens of class $s$. Adding $\beta$ to the logits of class $s$ multiplies the
probability of each of its tokens by $e^{\beta}$ and renormalizes. We
call the result the \emph{tilt} of $p$ toward $s$,
\begin{equation}
\label{eq:tilt}
(\mathsf{T}_s p)(v) \;=\; \frac{p(v)\, e^{\beta \mathbf{1}[\chi(v) = s]}}{1 +
\gamma\, m_s(p)},
\qquad
\gamma \;=\; e^{\beta} - 1 .
\end{equation}
A token is \emph{enforced} when the sampler draws it from a
tilt, and a scored pair is enforced when the token of the pair committed second was tilted toward that pair. The
\emph{enforced fraction} $\kappa$ is the expected fraction of scored
pairs that are enforced. The results use four idealizations, stated in
full in \Cref{app:setting}.
\textbf{(S)}~Sampling: candidates are drawn independently given the
state and the favored classes, from $\mathsf{T}_{s_i} p_i$ at enforced
positions and from $p_i$ elsewhere. They are committed by the unwatermarked
rule, and a committed token is treated as a draw from its candidate
distribution. This idealizes the confidence selection, whose confidences come
from the biased logits.
\textbf{(U)}~Unpredictability: given the history of states and $\chi$,
the favored classes of the positions enforced at a step are independent
and uniform on $\mathbb{Z}_q$. For a real key, the favored class is a
function of the tap's color. (U) drops this link, which is what the
detector counts, so we use (U) only for token frequencies. The detection
score uses a local form (U$'$) that keeps the tilt toward the pair being
scored.
\textbf{(B)}~Balance: $m_s(p_i) = 1/q$ for every class $s$ and position
$i$. It holds in expectation over the key to within $1/V$ for a hash
coloring (\Cref{prop:keyconc}).
\textbf{(I)}~Independence: the match indicators $M_i$ are mutually
independent. It is the second hypothesis of \Cref{prop:null}.
\Cref{sec:experiments-null} measures how far unwatermarked text departs
from (B) and (I).

\textbf{How much text detection needs.}
The detector needs enough scored positions for the matches caused by the
watermark to stand out from the noise of the null distribution. At an
enforced position the bias raises the probability that the checksum holds
above the chance value $1/q$. We call the average of this excess over all
scored positions the \emph{coupling margin} $\varepsilon$. The margin
grows with $\beta$ and shrinks when $p_i$ is already peaked, because a
confident model has little probability to move into the favored class.
Under (S), (U$'$), and (B), for a single tap with $\gcd(a_\delta, q) = 1$, unenforced pairs match at the chance rate, so
$\varepsilon = \kappa\, \bar\varepsilon_{\mathrm{enf}}$ with
$\bar\varepsilon_{\mathrm{enf}}$ the margin at enforced pairs
(\Cref{lem:factor}), and either-side enforcement raises $\varepsilon$ by
raising $\kappa$. The expected score is the ratio of the $T\varepsilon$
matches beyond chance to the null standard deviation
$\sqrt{T}\,\sigma_0$.

\begin{proposition}[Expected detection score]
\label{prop:ez}
For a fixed number $T$ of scored positions, let $\mu_i = \Pr(M_i = 1)$
under the watermarked process and $\varepsilon = \frac{1}{T}\sum_{i \in
\mathcal{T}}(\mu_i - 1/q)$. Then $\mathbb{E}[z] = \sqrt{T}\,\varepsilon /
\sigma_0$.
\end{proposition}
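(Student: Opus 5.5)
This follows from linearity of expectation, with $T$ held fixed. Because $T$ is deterministic (it is set once the text length is fixed, $T = L - \delta$ for a single tap), the normalizing constant $\sqrt{T}\,\sigma_0$ in \Cref{eq:detector-z} is non-random. It can therefore be pulled outside the expectation, and $z$ is an affine function of $C$. The plan is to compute $\mathbb{E}[C]$ from the indicator decomposition $C = \sum_{i \in \mathcal{T}} M_i$, substitute it into the affine expression, and rewrite the result in terms of the coupling margin $\varepsilon$.

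First, since $M_i \in \{0,1\}$, we have $\mathbb{E}[M_i] = \Pr(M_i = 1) = \mu_i$, so linearity gives $\mathbb{E}[C] = \sum_{i\in\mathcal{T}} \mu_i$. No independence is needed here, so neither (I) nor the sampling idealizations enter; the statement holds for any joint law of the $M_i$. Second, we subtract the chance value:
\[
\mathbb{E}[C] - \frac{T}{q} \;=\; \sum_{i\in\mathcal{T}}\Bigl(\mu_i - \frac{1}{q}\Bigr) \;=\; T\varepsilon ,
\]
which is the definition of $\varepsilon$ multiplied by $T$. Third, dividing by $\sqrt{T}\,\sigma_0$ gives
\[
\mathbb{E}[z] \;=\; \frac{T\varepsilon}{\sqrt{T}\,\sigma_0} \;=\; \frac{\sqrt{T}\,\varepsilon}{\sigma_0}.
\]

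The only step that needs care is the claim that $T$ is not random. If the number of scored positions depended on the generated tokens, $\mathbb{E}[(C - T/q)/\sqrt{T}]$ would not factor this way. The statement's phrase ``for a fixed number $T$'' covers this: when $T$ varies across texts, one conditions on $T$, and the identity holds conditionally, with $\mu_i$ read as conditional probabilities. Beyond that, the argument is a direct computation.
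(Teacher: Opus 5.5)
Your proposal is correct and matches the paper's proof: linearity of expectation gives $\mathbb{E}[C] = \sum_{i \in \mathcal{T}} \mu_i = T(1/q + \varepsilon)$, and because $z$ is affine in $C$ with the non-random normalizer $\sqrt{T}\,\sigma_0$, the identity follows without any independence assumption. Your remark that one conditions on $T$ when it varies across texts also matches the paper's note that the statements of that subsection hold conditionally on $T$.
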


The identity follows from linearity of expectation and needs no
independence (\Cref{app:proof-detect}). The score thus grows like the
square root of the text length. With a fixed positive margin it reaches
$z_\star$ once $T \geq z_\star^2 \sigma_0^2 / \varepsilon^2$
(\Cref{cor:samples}, compared with LLaDA text in \Cref{fig:ztheory}), and
under (I) the false-negative rate decays exponentially in $T$
(\Cref{cor:detect}).

\textbf{First-order unforgeability.}
An attacker who reconstructs $\chi$ can forge, and requirement (R3) asks
that token frequencies not make $\chi$ easy to infer. The attacker can
rank tokens by their frequency shift $\Delta f(v) = f_{\mathrm{wm}}(v) -
f_{\mathrm{base}}(v)$ between watermarked and unwatermarked generations,
and the area under the ROC curve (AUC) of this ranking measures how well
it separates the color classes. Under balance, every tilt has the same
normalizer $1 + \gamma/q$ and each token is boosted by exactly one of the
$q$ tilts, so the tilts average to the unwatermarked distribution,
$\tfrac{1}{q} \sum_{s} \mathsf{T}_s p = p$ (\Cref{lem:mixture}).

\begin{theorem}[First-order unforgeability]
\label{thm:unforgeable}
Fix a coloring $\chi$. Under (S), (U), and (B), the generated text has the
same distribution with and without the watermark. Hence $\mathbb{E}[\Delta
f(v)] = 0$ for every $v \in \mathcal{V}$, and any attacker who ranks
each token $v$ by $g(\mathbb{E}[\Delta f(v)])$, for a fixed function $g$,
separates the color classes no better than chance (AUC $\tfrac{1}{2}$).
(Proof in
\Cref{app:proof-unforge}.)
\end{theorem}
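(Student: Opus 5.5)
The plan is to first prove the mixture identity of \Cref{lem:mixture}, then lift it from one position to the whole denoising trajectory by induction over steps, and finally read off the two consequences.

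\textbf{Step 1: the single-position identity.} Under (B), $m_s(p)=1/q$ for every $s$, so every tilt has the same normalizer $1+\gamma/q$. Each token $v$ lies in exactly one class, so in $\sum_s \mathsf{T}_s p(v)$ it receives the factor $e^\beta$ once and the factor $1$ in the other $q-1$ tilts. Hence
\begin{equation*}
\tfrac1q\sum_s \mathsf{T}_s p(v)=\frac{p(v)\,(e^\beta+q-1)}{q\,(1+\gamma/q)}=\frac{p(v)\,(q+\gamma)}{q+\gamma}=p(v).
\end{equation*}

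\textbf{Step 2: induction over denoising steps.} I will couple the watermarked and unwatermarked processes step by step and show that each state $x^{(j)}$ has the same law in both, given the same prompt. Fix the history up to step $j-1$ and the coloring $\chi$. Under (S), the candidates at masked positions are drawn independently given the state and the favored classes. Enforced positions draw from $\mathsf{T}_{s_i}p_i$ and the others from $p_i$. Under (U), the favored classes of the positions enforced at this step are i.i.d.\ uniform on $\mathbb{Z}_q$ given the history.

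Integrating out the favored classes therefore factorizes across positions, and each enforced position becomes the mixture $\tfrac1q\sum_s\mathsf{T}_s p_i=p_i$. So the joint law of the candidate vector given the history equals the unwatermarked one. By (S), the commit rule is the unwatermarked rule applied to these candidates, treating a committed token as a draw from its candidate distribution. The conditional law of $x^{(j)}$ given $x^{(j-1)}$ therefore coincides in the two processes. Chaining these transition kernels from the common fully masked $x^{(0)}$ gives equality of the law of $x^{(S)}$. The same argument applies to each generation separately, so independent samples of texts agree as well.

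\textbf{Step 3: the frequency and AUC consequences.} Because $f_{\mathrm{wm}}(v)$ and $f_{\mathrm{base}}(v)$ are the same measurable function of identically distributed corpora, their expectations agree, so $\mathbb{E}[\Delta f(v)]=0$. For any fixed $g$, the score $g(\mathbb{E}[\Delta f(v)])=g(0)$ is then constant across $v$. A constant score ranks every pair of tokens in different classes as a tie, so its AUC, counting ties as $\tfrac12$, is exactly $\tfrac12$ for every class versus the rest.

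\textbf{Main obstacle.} The delicate point is Step 2. The favored class for a real key depends on the tap's color, and confidence-based selection uses biased confidences. Both would break the factorization. (U) and (S) are precisely the idealizations that remove these dependencies, and I must take care that conditioning on the history never reintroduces them. In particular, the set of enforced positions at a step may depend on the history, but not on the current favored classes. I would state this explicitly as a measurability condition and check that the induction conditions only on $\sigma(x^{(0)},\dots,x^{(j-1)},\chi)$.
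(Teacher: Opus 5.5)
Your proposal is correct and follows the paper's proof essentially step for step. The shared steps are the mixture identity under (B); equality of the one-step transition law given the history, obtained by averaging over the independent uniform favored classes of (U); induction from the common fully masked state; and the constant-score tie argument giving AUC $\tfrac12$. The one difference is that you also invoke the committed-token part of (S), which the paper notes this theorem does not need, since the two processes already share the averaged candidate law and the commit rule; and the transition law should be stated conditional on the whole history $h_{j-1}$ rather than on $x^{(j-1)}$ alone, as your final paragraph already anticipates.
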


In words, when the favored class behaves like a fresh uniform draw at
every position, the watermark pushes no token's expected frequency in a
direction set by its color. The theorem does not say that one key
produces unwatermarked text, because (U) drops the link that the detector
counts. It also covers only expected shifts. The attacker of
\Cref{sec:experiments-stealth} ranks tokens by shifts measured on finitely
many texts, so its chance-level result is an empirical finding. When (B)
fails, the cancellation is inexact. The leak is proportional to the class
imbalance and, for small $\gamma$, second order in $\gamma$
(\Cref{prop:leak}). The red--green list favors the same tokens at every
position. Its favored class never varies, so nothing averages out, and
ranking by frequency shift recovers the green list
(\Cref{prop:greensep}).

\textbf{Edit robustness.}
An edit can break a checksum whenever it changes any of the $r =
|\mathcal{D}| + 1$ tokens the checksum reads, so every added tap makes
every checksum easier to break.

\begin{theorem}[Edit robustness]
\label{thm:robust}
Suppose that an attacker corrupts each token with probability $\rho$,
independently across tokens and of the text, that a corrupted token's
color is uniform on $\mathbb{Z}_q$ and independent of everything else,
that positions stay aligned, and that $\gcd(a_\delta, q) = 1$ for every $\delta \in \mathcal{D}$
(which holds automatically when $q$ is prime). Then, exactly,
$\mathbb{E}[z_{\mathrm{att}}] = (1 - \rho)^{|\mathcal{D}| + 1}\,
\mathbb{E}[z_{\mathrm{clean}}]$, where $z_{\mathrm{att}}$ and
$z_{\mathrm{clean}}$ are the scores of the attacked and the clean text.
\end{theorem}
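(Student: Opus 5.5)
By linearity, as in \Cref{prop:ez}, $\mathbb{E}[z] = \frac{1}{\sqrt{T}\,\sigma_0}\sum_{i \in \mathcal{T}}\bigl(\Pr(M_i = 1) - 1/q\bigr)$. The positions stay aligned, so $T$ and $\mathcal{T}$ are the same for the attacked and the clean text. It therefore suffices to prove, for each scored position $i$,
\[
\Pr(M_i^{\mathrm{att}} = 1) - \tfrac{1}{q} \;=\; (1-\rho)^{|\mathcal{D}|+1}\,\Bigl(\Pr(M_i^{\mathrm{clean}} = 1) - \tfrac{1}{q}\Bigr).
\]
Summing over $i$ then gives the claim. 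No independence across positions is needed, since only per-position marginals enter.

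Fix $i$ and let $R = \{i\} \cup \{i - \delta : \delta \in \mathcal{D}\}$ be the $r = |\mathcal{D}|+1$ positions the checksum reads. Condition on the corruption pattern restricted to $R$. Let $E$ be the event that no position in $R$ is corrupted. Since corruption is independent of the text and across tokens, $\Pr(E) = (1-\rho)^r$. On $E$ the attacked and clean colors at $R$ coincide, so $\Pr(M_i^{\mathrm{att}}=1 \mid E) = \Pr(M_i^{\mathrm{clean}}=1)$; here independence of $E$ from the text lets the conditioning drop.

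On $E^c$, at least one position $j \in R$ is corrupted. Its new color $U$ is uniform on $\mathbb{Z}_q$ and independent of everything else, including the other colors in the checksum. The residue has the form $c_j U + W \bmod q$. The coefficient is $c_j = 1$ if $j = i$ and $c_j = a_\delta$ if $j = i - \delta$, and $W$ collects the other terms, which are independent of $U$. Because $\gcd(c_j, q) = 1$, multiplication by $c_j$ is a bijection of $\mathbb{Z}_q$, so $c_j U$ is uniform. Adding an independent $W$ keeps the sum uniform. Hence $\Pr(M_i^{\mathrm{att}} = 1 \mid E^c) = 1/q$. This is exactly where the gcd hypothesis is used. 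The one delicate point is that when several positions are corrupted we pick a single corrupted $j$ and fold the rest, which may also be random, into $W$; this is fine because $U$ is independent of all of them.

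Combining the two cases,
\[
\Pr(M_i^{\mathrm{att}}=1) = (1-\rho)^r \Pr(M_i^{\mathrm{clean}}=1) + \bigl(1-(1-\rho)^r\bigr)\tfrac{1}{q}.
\]
Subtracting $1/q$ gives the per-position identity, and summing gives $\mathbb{E}[z_{\mathrm{att}}] = (1-\rho)^{r}\,\mathbb{E}[z_{\mathrm{clean}}]$ exactly. The main obstacle is measure-theoretic care rather than difficulty. One must state precisely that the corruption indicators are independent of the clean text, and that the corrupted colors are independent of the clean colors and of each other. With that, the conditioning on $E$ and the uniformity argument on $E^c$ are both rigorous.
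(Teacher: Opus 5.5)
Your proposal is correct and matches the paper's proof essentially step for step. Like the paper, you condition on the corruption pattern over the $|\mathcal{D}|+1$ positions the checksum reads. You get the clean match probability when none of them is corrupted, and exactly $1/q$ otherwise because a single corrupted term is uniform by the gcd condition and independent of the rest. You then average, and since $T$ is unchanged, apply the linearity identity of \Cref{prop:ez}.
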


Random substitution thus multiplies the expected score by the
probability that a whole checksum is untouched (proof in
\Cref{app:proof-robust}). The gcd condition handles a tap that was
edited. Such a tap contributes $a_\delta$ times a uniform residue.
Multiplication by $a_\delta$ permutes the residues exactly when
$a_\delta$ is a unit modulo $q$, and the checksum then holds with
probability exactly $1/q$. Every added tap costs another factor of
$(1-\rho)$, and the ablation of \Cref{sec:experiments-sweeps} shows that
detection under edits falls as taps are added.

\section{Experiments}
\label{sec:experiments}

\subsection{Setup}
\label{sec:experiments-setup}

\textbf{TANGO configuration.} Unless noted otherwise, TANGO uses the
semantic-quantile coloring with $q = 3$ colors, a single tap at lag $\delta=2$
($a_2 = 1$), target residue $b = 0$, logit bias $\beta = 5$, and
enforcement from either side, with the unwhitened coloring on both models.
\Cref{app:protocol} gives the rule that selected these defaults.

\textbf{Models and baselines.} We evaluate LLaDA-8B-Instruct
\citep{nie2025llada} and Dream-v0-Instruct-7B \citep{dream2025}, which is
initialized from Qwen2.5-7B \citep{qwen25} and shares its tokenizer. Prompts
come from the C4 dataset \citep{raffel2020t5}, and TANGO and the baselines generate $128$ tokens at temperature $1.0$, with classifier-free guidance (CFG) at
scale $2.0$ on LLaDA (\Cref{app:protocol}). The baselines are the
red--green list (the Unigram watermark of \citet{zhao2023provable}, with
green fraction $0.25$ and logit bias $5$) and the Gumbel rule keyed on the
preceding token \citep{aaronson2023watermark}, both applied along the
confidence-ordered unmasking. Dream's sampler is incompatible with the
Gumbel rule, so we run the Gumbel rule on LLaDA only. Each comparison in \Cref{tab:headline} uses $n = 200$
generations per method on LLaDA and $n = 100$ per method on Dream. We also
run two published watermarks for masked-diffusion models on the same
prompts, using their public code. They are a context-hashed green list for
diffusion models by \citet{wmdlm2025}, which we call the DLM watermark, and dgMARK
\citep{dgmark2026}.

\textbf{Attacks and metrics.} Edits are word-level
attacks: deleting $30\%$ of the words (del30),
replacing $30\%$ of them with synonyms (syn30), and inserting $20\%$
random words (ins20). The deletion and synonym attacks are adapted from
MarkLLM \citep{pan2024markllm}, and synonyms come from WordNet
\citep{miller1995wordnet}. We report the area under the
detector's ROC curve (AUROC) and the true-positive rate (TPR) at a $1\%$
false-positive rate (FPR), written TPR@1\%FPR, both against unwatermarked
text. Unless stated otherwise, generation quality is measured as perplexity (PPL) under
Qwen2.5-7B-Instruct \citep{qwen25} conditioned on the prompt
(\Cref{app:protocol} gives this protocol and the full attack set). \Cref{app:seeds} reports
the variation over three seeds and four keys.

\subsection{Detection and q{}uality}
\label{sec:experiments-headline}

\Cref{tab:headline} compares TANGO with the red--green list and the
Gumbel rule. TANGO detects $97\%$ (LLaDA) and $100\%$ (Dream) of
unedited watermarked texts, and $68$--$92\%$ and $73$--$98\%$ after
edits, with deletion the hardest attack. Its perplexity is $7.2$ on LLaDA and
$7.8$ on Dream, against $7.6$ and $9.1$ for the red--green list. On
Dream, the red--green list degenerates into repetition on $62\%$ of its
texts, against $31\%$ for TANGO and $20\%$ for unwatermarked text
(\Cref{app:dream}). \Cref{app:samples} shows qualitative samples.

\begin{table}[tbp]
\centering
\caption{TANGO detects nearly all unedited and most edited watermarked
texts at a perplexity close to the red--green list's. TPR@1\%FPR under no
attack (clean) and under the three edits, with the half-width of a $95\%$
bootstrap interval as superscript. TANGO at the default setting, red--green
list with green fraction $0.25$ and bias $5$, Gumbel rule at temperature
$\tau = 1.0$. $n = 200$
per method on LLaDA and $100$ on Dream, with the unwatermarked
continuations of the same prompts as negatives. PPL is Qwen2.5-7B-Instruct
perplexity over non-degenerate continuations (\Cref{app:protocol}).}
\label{tab:headline}
\footnotesize
\setlength{\tabcolsep}{3.5pt}
\begin{tabular}{@{}llrccccr@{}}
\toprule
 & & & \multicolumn{4}{c}{TPR@1\%FPR} & \\
\cmidrule(lr){4-7}
model & method & AUROC & clean & del30 & syn30 & ins20 & PPL \\
\midrule
LLaDA & TANGO    & 0.998 & \ci{0.97}{0.02} & \ci{0.68}{0.07} & \ci{0.92}{0.04} & \ci{0.91}{0.04} & 7.2 \\
LLaDA & Red--green list & 0.993 & \ci{0.98}{0.02} & \ci{0.96}{0.02} & \ci{0.94}{0.03} & \ci{0.98}{0.02} & 7.6 \\
LLaDA & Gumbel   & 0.815 & \ci{0.15}{0.05} & \ci{0.10}{0.04} & \ci{0.07}{0.04} & \ci{0.07}{0.04} & 4.6 \\
LLaDA & unwatermarked & \na & \na & \na & \na & \na & 4.2 \\
\midrule
Dream & TANGO    & 1.000 & \ci{1.00}{0.00} & \ci{0.73}{0.08} & \ci{0.98}{0.03} & \ci{0.97}{0.03} & 7.8 \\
Dream & Red--green list & 1.000 & \ci{1.00}{0.00} & \ci{0.98}{0.03} & \ci{0.98}{0.03} & \ci{0.98}{0.03} & 9.1 \\
Dream & unwatermarked & \na & \na & \na & \na & \na & 3.8 \\
\bottomrule
\end{tabular}
\end{table}

Of the three edits, deletion is the one where the red--green list keeps
a clear lead
($96\%$ against $68\%$ of deleted texts on LLaDA; \Cref{tab:e7} in
\Cref{app:results} sweeps both biases). A deleted word breaks every
TANGO pair that spans it, whereas the red--green list scores each token on
its own. That list's robustness comes from applying the same vocabulary
bias at every position, which is also what exposes its key
(\Cref{sec:experiments-stealth}). Of the published diffusion watermarks,
the DLM watermark lies on the red--green list's trade-off between
detection and perplexity, and dgMARK without its beam lookahead, in its best-detecting setting, detects fewer clean texts than TANGO at a larger perplexity cost (\Cref{tab:external} in \Cref{app:results}).
The Gumbel rule is distortion-free in expectation over its key and costs
little perplexity. It detects poorly, however, with AUROC $0.815$ and
clean TPR@1\%FPR $0.15$, likely because the sampler commits the most confident positions first, where the keyed draw seldom changes the token (\Cref{sec:intro}). Raising the
temperature restores detection at the price of quality. At temperature
$1.5$ it detects $83\%$ of deleted texts, at perplexity $20.6$
(\Cref{tab:e7}).

\subsection{Key recovery and forgery}
\label{sec:experiments-stealth}

\textbf{Key recovery and forgery from token frequencies.} Token frequencies reveal the red--green list's key but not TANGO's. On LLaDA with one-sided enforcement at $\beta = 8$, the total-variation distance between
watermarked and unwatermarked token frequencies is $0.217$ for TANGO. This
is close to the $0.210$ that finite samples alone produce, and below the
$0.340$ of the red--green list (\Cref{tab:stealth} in \Cref{app:results}).
Ranking tokens by how much more often they appear in watermarked than in
unwatermarked text recovers the green list, and the recovery improves as
the attacker collects more texts, from AUC $0.738$ at $25$ texts to
$0.806$ at $100$ (\Cref{fig:steal}, a separate run with TANGO at $\beta =
6$ and the whitened coloring). Against TANGO, recovery stays at chance
at every number of texts ($0.504$ at $200$ texts on LLaDA and $0.501$ at $100$ texts on
Dream). An attacker can then generate text that favors the tokens of the
recovered key. Every such forgery passes the red--green detector, and none
passes TANGO's (\Cref{tab:stealth}).

\textbf{Key recovery and forgery from token pairs.} An attacker who knows
TANGO's design can instead count token \emph{pairs} at the tap lag and
rank them by how much more often they appear in watermarked text. This
attack is much harder, because it must estimate the frequencies of
$O(V^2)$ pairs instead of $O(V)$ tokens, but the attacker may still find
some signal. Scored against the true key, the ranking separates pairs
that satisfy the checksum from the rest with AUC $0.62$ at $200$ texts,
and $0.78$ on the pairs seen at least three times. Turning that signal into a forgery succeeds less often than against the red--green
list. We let an attacker who has the model but not the key collect the over-represented pairs from $200$ texts (against the red--green list, the recovered green list) and bias its own generation toward them. We score forgeries at each detector's $1\%$ false-positive
threshold. At the bias level that lets $51\%$ of forgeries pass the
red--green detector, only $4\%$ pass TANGO's
(\Cref{tab:forge-fluent} in \Cref{app:results}). At this bias the
forgeries against TANGO stay fluent, with GPT-2-large \citep{radford2019gpt2} perplexity $13.5$ against $13.1$
for unwatermarked text. At twice the bias, $55\%$ pass TANGO's detector
and $98\%$ pass the red--green detector. A simpler attacker that strings
over-represented pairs together without a language model shows the same
ordering at every number of texts ($74\%$ against $96\%$ at $200$ texts and $z > 4$,
\Cref{tab:steal-pairs}). Increasing the number of
taps may further weaken both attacks, at the cost of lower detection,
especially under edits (\Cref{sec:experiments-sweeps}).

\subsection{Ablations}
\label{sec:experiments-sweeps}

\Cref{tab:ablations} varies TANGO's two main design choices, the number
of colors $q$ and the tap set $\mathcal{D}$, on LLaDA. The target residue
$b = 0$ is the one that unwatermarked text satisfies least often under the
default key, because the class masses are unequal and the colors of
neighboring tokens are correlated
(\Cref{app:residue-diag}).

\begin{table}[t]
\centering
\caption{Three colors and a single tap detect best under edits.
LLaDA-8B, one-sided enforcement at $\beta = 8$, $n = 24$ per row, defaults
in bold. The $q$ rows use lag $1$ (and, for $q \geq 4$, the key's residue), and the tap rows use $q = 3$, so they are separate runs. PPL is GPT-2-large perplexity, and null FPR is the
fraction of $200$ natural C4 texts with $z > 4$. The residue ablation is
in \Cref{app:residue-diag}.}
\label{tab:ablations}
\footnotesize
\setlength{\tabcolsep}{4pt}
\begin{tabular}{@{}llrccccrr@{}}
\toprule
 & & & \multicolumn{4}{c}{TPR@1\%FPR} & & \\
\cmidrule(lr){4-7}
axis & setting & AUROC & clean & del30 & syn30 & ins20 & PPL & null FPR (\%) \\
\midrule
colors $q$ & $2$ & 0.989 & 0.92 & 0.62 & 0.71 & 0.83 & 20.4 & 0.0 \\
 & $\mathbf{3}$ & \textbf{1.000} & \textbf{1.00} & \textbf{0.92} & \textbf{0.96} & \textbf{1.00} & \textbf{22.2} & \textbf{0.0} \\
 & $4$ & 1.000 & 1.00 & 0.79 & 0.83 & 0.96 & 26.8 & 0.5 \\
 & $8$ & 1.000 & 1.00 & 0.83 & 0.83 & 1.00 & 29.9 & 0.5 \\
 & $16$ & 0.998 & 0.96 & 0.88 & 0.96 & 0.96 & 29.5 & 0.0 \\
\midrule
taps $\mathcal{D}$ & $\boldsymbol{\{2\}}$ & \textbf{1.000} & \textbf{1.00} & \textbf{0.92} & \textbf{0.96} & \textbf{0.96} & \textbf{25.0} & \textbf{0.0} \\
 & $\{1,2\}$ & 0.990 & 0.96 & 0.25 & 0.58 & 0.79 & 19.6 & 0.0 \\
 & $\{1,2,3\}$ & 0.986 & 0.83 & 0.25 & 0.25 & 0.54 & 16.7 & 0.0 \\
 & $\{1,2,3,4\}$ & 0.984 & 0.83 & 0.12 & 0.38 & 0.42 & 18.9 & 0.0 \\
\bottomrule
\end{tabular}
\end{table}

\textbf{Colors and taps.} Detection under edits improves sharply from
$q = 2$ to $q = 3$ (del30 TPR $0.62 \to 0.92$) and does not improve
further, while GPT-2-large perplexity rises from $22.2$ at $q = 3$ to $29.9$ at $q =
8$, so we use $q = 3$. Consistent in direction with \Cref{thm:robust}, deletion TPR drops from $0.92$ with one tap to $0.12$ with four (\Cref{fig:taps} in
\Cref{app:design}), so we use a single tap.

\textbf{Enforced fraction.} The enforced fraction $\kappa$ sets the
margin of \Cref{prop:ez}. With one-sided enforcement at $\beta = 8$,
$\kappa$ is $0.61$ on LLaDA. Either-side enforcement raises it to $0.74$
at the same bias, and to $0.76$ at the default $\beta = 5$
(\Cref{tab:e9-screen} in \Cref{app:design}). Because $\varepsilon =
\kappa\,\bar\varepsilon_{\mathrm{enf}}$, raising $\kappa$ from $0.61$ to
$0.74$ at a fixed margin per enforced pair raises the expected score by a
factor of about $1.2$.

\section{Related work}
\label{sec:related}

\textbf{Biased and distortion-free watermarks.}
Statistical text watermarks differ in whether they change token
frequencies. Biased watermarks do. A keyed green list adds a logit bias to
a vocabulary subset that is either derived from a hash of the context
\citep{kirchenbauer2023watermark} or fixed once \citep{zhao2023provable}.
The resulting frequency shift lets an attacker approximate the green list
and forge or remove the watermark \citep{jovanovic2024stealing,zhang2024mip}.
Reweighting watermarks change the next-token
distribution under a fixed key but preserve it in expectation over keys
\citep{hu2024unbiased,dipmark2024}, and distortion-free watermarks use the
key to seed the randomness of sampling \citep{aaronson2023watermark,
kuditipudi2023robust,christ2023undetectable,dathathri2024synthid}. Like
biased watermarks, TANGO biases the sampler, so detection counts matches
and needs no prompt. Unlike the red--green list, it favors a different class at each
position. Under the assumptions of \Cref{thm:unforgeable}, expected token frequencies are then unchanged, and the signal appears in pair
statistics.

\textbf{Watermarks for diffusion language models.}
The DLM watermark of \citet{wmdlm2025} applies a context-hashed
green-list bias in expectation over still-masked context, and dgMARK
\citep{dgmark2026} steers the unmasking order so that the final text
carries a statistical signature. Other diffusion watermarks add
predicted and bidirectional context to the context hash
\citep{dmark2025}, apply Gumbel-max sampling at every denoising step with
independent \citep{bagchi2025ddlm} or locally correlated
\citep{li2026saccopula} perturbations, or control a global sketch of the
whole sequence \citep{zhao2026sketch}. The closest work, LR-DWM
\citep{raban2026lrdwm}, hashes the token id of the left and of the right
neighbor, whenever each is unmasked, into two green lists and biases the
token toward both. TANGO also keys on an unmasked neighbor, but only
through its color, so the favored set is always one of $q$ balanced
classes, and a substitution that keeps the neighbor's color keeps the
favored class. \Cref{app:related} discusses these works in more detail,
along with semantic watermarks, watermark stealing, and signatures.

\enlargethispage{\baselineskip}
\section{Discussion}
\label{sec:discussion}

Most text watermarks assume left-to-right generation, while
masked-diffusion language models generate tokens in no fixed order. The
red--green list still runs in this setting, but it exposes its key
through token frequencies. TANGO embeds its signal in pairs of tokens
during the denoising loop. Under our idealized assumptions, the expected frequency of every token is then unchanged (\Cref{thm:unforgeable}). In
our experiments, key recovery from token frequencies stays at chance, and
forgeries built from those frequencies fail. Detection needs only the
candidate text and the key.

\textbf{Scope of the guarantees.}
\Cref{thm:unforgeable} covers expected token frequencies under idealized assumptions. An attacker who counts token pairs can forge TANGO,
though less often than the red--green list in every setting we test.
In a word-level test, even an attacker who holds the key and uses it to
choose which words to regenerate removes the watermark about as fast as
random editing does, or slower (\Cref{tab:b3} in \Cref{app:results}).
Rotating the key per deployment window limits how many texts an attacker
can collect under one key, at the cost of testing each candidate text
against every active key. Key rotation is the mitigation we recommend in
practice.

\textbf{Robustness and calibration.}
TANGO trades some robustness to deletion for a frequency-hidden key. The red--green
list survives deletion better at matched perplexity because it biases the
same tokens at every position, which is the property that exposes its key.
\Cref{thm:robust} covers only edits that keep positions aligned.
Paraphrase by back-translation reorders clauses and lowers TANGO's detection more than the red--green list's (\Cref{app:paraphrase}). Dream's natural text
scores above chance, so the detection threshold should be set on
unwatermarked text for each new model (\Cref{sec:experiments-null}).

\textbf{Which watermark to deploy.}
A provider whose detections must hold up against forgery gains the most
from TANGO. Under synonym substitution and word insertion, TANGO detects
at most eight percentage points fewer watermarked texts than the
red--green list on both models (\Cref{tab:headline}). Its key could not be read from token frequencies in our
experiments, and every attack we test forges it less often than the
red--green list. For such a provider, we argue that a frequency-hidden key is worth some loss of robustness to deletion.

\clearpage

\bibliography{iclr2025_conference}
\bibliographystyle{iclr2025_conference}

\clearpage

\appendix
\crefalias{section}{appendix}\crefalias{subsection}{subappendix}\crefalias{subsubsection}{subsubappendix}\etocsettocdepth.toc{subsection}

\etocsettocstyle
  {\begin{center}{\Large\bfseries\color{tocaccent}Appendix Contents}\end{center}\vspace{0.3em}\hrule height 0.8pt\vspace{0.8em}}
  {\vspace{0.8em}\hrule height 0.8pt}
\etocsetstyle{section}
  {}
  {\leftskip 0pt}
  {\smallskip\noindent\textbf{\color{tocaccent}\etocnumber\hspace{0.5em}\etocname}
   \nobreak\hfill\nobreak\etocpage\par}
  {}
\etocsetstyle{subsection}
  {}
  {\leftskip 1.5em}
  {\noindent\etocnumber\hspace{0.5em}\etocname
   \nobreak\hfill\nobreak\etocpage\par}
  {}
\tableofcontents

\newpage
\noindent \Cref{app:proofs} proves the formal results of \Cref{sec:method}
and states every assumption they use. \Cref{app:protocol} gives the
decoding and evaluation settings and the rule that chose TANGO's defaults.
\Cref{sec:experiments-null} checks how well the detection threshold controls false positives on unwatermarked text for both models and explains where
such text departs from the null. \Cref{app:results} gives the full tables
behind \Cref{sec:experiments}, including the Dream results, the published
diffusion watermarks, the attacks that the main text only mentions, and
checks over paraphrase, seeds, and keys. \Cref{app:design} gives the full ablations. \Cref{app:related}
covers further related work, and \Cref{app:samples} shows generated text.

\FloatBarrier
\section{Proofs}
\label{app:proofs}

This appendix proves the four formal statements of \Cref{sec:method} and
eight supporting results. \Cref{tab:results-map} lists every result, what
it says, and the assumptions it uses. \Cref{app:setting} first restates the
notation and the assumptions, so that the appendix can be read without
returning to the main text.

\begin{table}[htbp]
\centering
\caption{The formal results, what each says, and what each assumes. The
assumptions (S), (U), (U$'$), (B), and (I) are stated in \Cref{app:setting}.
Results marked $^\dagger$ also treat a committed token as a draw from its
candidate distribution, the second part of (S).}
\label{tab:results-map}
\footnotesize
\setlength{\tabcolsep}{4pt}
\begin{tabular}{@{}lp{0.46\linewidth}p{0.25\linewidth}l@{}}
\toprule
result & claim in words & assumes & appendix \\
\midrule
\Cref{prop:null} & Unwatermarked text gives a $\mathrm{Binom}(T, 1/q)$ count. & uniform colors, (I) & \ref{app:proof-null} \\
\Cref{prop:keyconc} & A uniformly random balanced coloring gives every class mass close to $1/q$ unless the context is peaked. & hash coloring & \ref{app:proof-null} \\
\Cref{prop:ez} & The expected score is $\sqrt{T}\,\varepsilon / \sigma_0$. & definitions only, fixed $T$ & \ref{app:proof-detect} \\
\Cref{cor:samples} & \raggedright The expected score reaches $z_\star$ once $T \geq T_\star = z_\star^2 \sigma_0^2 / \varepsilon^2$. & $z_\star > 0$, constant $\varepsilon > 0$ & \ref{app:proof-detect} \\
\Cref{lem:factor}$^\dagger$ & \raggedright Unenforced pairs match at the chance rate, so $\varepsilon = \kappa\, \bar\varepsilon_{\mathrm{enf}}$. & \raggedright (S), (U$'$), (B), single tap, $\gcd(a_\delta, q) = 1$ & \ref{app:proof-detect} \\
\Cref{cor:detect} & The false-negative rate decays exponentially in $T$. & (I) & \ref{app:proof-tpr} \\
\Cref{prop:optimal}$^\dagger$ & With one-sided enforcement, the count test is the likelihood-ratio test. & Approximations~1--3 & \ref{app:optimal} \\
\Cref{lem:mixture} & Averaging the $q$ tilts returns the context. & (B) & \ref{app:proof-unforge} \\
\Cref{thm:unforgeable} & For a fixed coloring, expected token frequencies do not change. & (S), (U), (B) & \ref{app:proof-unforge} \\
\Cref{prop:leak} & Without balance, the frequency shift is bounded by the imbalance. & fixed context & \ref{app:proof-unforge} \\
\Cref{prop:greensep} & The red--green list shifts frequencies in the direction of its key. & fixed contexts & \ref{app:proof-unforge} \\
\Cref{thm:robust} & Random substitutions scale the expected score by $(1 - \rho)^{|\mathcal{D}| + 1}$. & \raggedright random substitutions, $\gcd(a_\delta, q) = 1$ & \ref{app:proof-robust} \\
\bottomrule
\end{tabular}
\end{table}

\FloatBarrier
\subsection{Setting, notation, and assumptions}
\label{app:setting}

This subsection restates the checksum, the tilt, and the four assumptions
that the proofs below use. \Cref{tab:notation} collects the symbols.

\begin{table}[htbp]
\centering
\caption{Notation used in the proofs.}
\label{tab:notation}
\footnotesize
\setlength{\tabcolsep}{4pt}
\begin{tabular}{@{}lp{0.72\linewidth}@{}}
\toprule
symbol & meaning \\
\midrule
$\mathcal{V}$, $V$ & vocabulary and its size, $V = |\mathcal{V}|$ \\
$t_1, \dots, t_L$ & tokens of a text of length $L$ \\
$x^{(j)}$ & state after $j$ denoising steps, the partly unmasked sequence \\
$q$, $\mathbb{Z}_q$ & number of colors, and $\{0, \dots, q-1\}$ with arithmetic modulo $q$ \\
$\chi$ & balanced coloring $\mathcal{V} \to \mathbb{Z}_q$ fixed by the key; class $s$ has $n_s \in \{\lfloor V/q \rfloor, \lceil V/q \rceil\}$ tokens \\
$\mathcal{D}$, $\delta$, $a_\delta$ & set of tap lags, one lag, and its coefficient in $\mathbb{Z}_q \setminus \{0\}$ \\
$b$ & target residue in $\mathbb{Z}_q$, fixed for each key \\
$R_i$, $M_i$ & residue and match indicator at scored position $i$ \\
$\mathcal{T}$, $T$ & set of scored positions and its size (the number of scored pairs) \\
$C$, $z$, $z_\star$ & match count, detection score, and detection threshold \\
$\sigma_0^2$ & $\tfrac{1}{q}(1 - \tfrac{1}{q})$, the variance of one match under the null \\
$\beta$, $\gamma$ & logit bias and $\gamma = e^{\beta} - 1$ \\
$p_i$, $m_s(p)$ & context (model distribution) at position $i$, and the class mass, the total probability $p$ assigns to tokens of class $s$ \\
$\mathsf{T}_s p$ & tilt of $p$ toward class $s$ \\
$s_i$ & favored class at position $i$ (for a real key, the class that completes its checksum, \Cref{eq:satisfying}) \\
$\kappa$ & enforced fraction, the expected fraction of scored pairs that are enforced \\
$\mu_i$, $\varepsilon$ & match probability at position $i$ and coupling margin \\
$\bar\varepsilon_{\mathrm{enf}}$ & excess match probability at an enforced pair under balance \\
$\Delta f(v)$ & frequency shift of token $v$ between watermarked and unwatermarked generations \\
$h_j$, $P_j$, $P^0_j$ & history $(x^{(0)}, \dots, x^{(j)})$, and its law with and without the watermark \\
$\bar Z$ & $1 + \gamma/q$, the normalizer of a tilt under balance \\
$\rho$ & substitution rate of the edit attack \\
$A_i$ & positions whose tokens the checksum at position $i$ reads \\
\bottomrule
\end{tabular}
\end{table}

\paragraph{The checksum.}
A secret key fixes a balanced coloring $\chi : \mathcal{V} \to \mathbb{Z}_q$ of the vocabulary into $q$ classes, and we fix a target residue $b \in \mathbb{Z}_q$ for each key. Here $\mathbb{Z}_q = \{0, \dots, q-1\}$ with addition
and multiplication modulo $q$, and \emph{balanced} means that every class
has $\lfloor V/q \rfloor$ or $\lceil V/q \rceil$ tokens. The edit-robustness
result and the tap ablation of \Cref{app:design} use several taps, so the
proofs use a general checksum with a set $\mathcal{D}$ of tap lags and
coefficients $a_\delta \in \mathbb{Z}_q \setminus \{0\}$. The main text's
single tap is the case $\mathcal{D} = \{2\}$ with $a_2 = 1$. In a text $t_1,
\dots, t_L$, position $i$ is scored when all its taps exist, and
$\mathcal{T}$ is the set of scored positions, $T = |\mathcal{T}|$. For a
scored position $i$, the residue and the match indicator are
\[
R_i \;=\; \Big( \chi(t_i) + \sum_{\delta \in \mathcal{D}} a_\delta\,
\chi(t_{i-\delta}) \Big) \bmod q ,
\qquad
M_i \;=\; \mathbf{1}[R_i = b] ,
\]
where $\mathbf{1}[\cdot]$ is the indicator. The detector computes
\[
C \;=\; \sum_{i \in \mathcal{T}} M_i ,
\qquad
z \;=\; \frac{C - T/q}{\sqrt{T}\, \sigma_0} ,
\qquad
\sigma_0^2 \;=\; \tfrac{1}{q}\big(1 - \tfrac{1}{q}\big) ,
\]
as in \Cref{eq:detector-z}, and flags the text when $z > z_\star$. Here
$\sigma_0^2$ is the variance of one match under the null.

\paragraph{The tilt.}
Let $p_i$ be the model's distribution over the vocabulary at position $i$
after temperature scaling. For brevity, in these proofs we call such a
distribution a \emph{context}, since it is what the model predicts from the
tokens around position $i$. Let $m_s(p) = \sum_{v : \chi(v) = s} p(v)$ be
the total probability that a distribution $p$ assigns to tokens of class
$s$. We call $m_s(p)$ the \emph{class mass} of $s$. Adding
$\beta > 0$ to the logits of class $s$ multiplies the probability of each of
its tokens by $e^{\beta}$ and renormalizes. With $\gamma = e^{\beta} - 1$ the
result is the tilt
\[
(\mathsf{T}_s p)(v) \;=\; \frac{p(v)\, e^{\beta \mathbf{1}[\chi(v) = s]}}{1 +
\gamma\, m_s(p)} .
\]
Once the taps of position $i$ are committed, exactly one class completes
its checksum,
\[
s_i \;=\; \Big( b - \sum_{\delta \in \mathcal{D}} a_\delta\,
\chi(t_{i-\delta}) \Big) \bmod q ,
\]
which is \Cref{eq:satisfying} for a general tap set. The tilt favors this
class, so we also call it the \emph{favored class}. All TANGO experiments sample at temperature $\tau = 1$. \Cref{alg:bcc} adds $\beta$ to the logits before
dividing by $\tau$, so for general $\tau$ the temperature-scaled
distribution is tilted by $e^{\beta/\tau}$ instead of $e^{\beta}$.

\paragraph{Enforced pairs.}
This paragraph and \Cref{lem:factor} concern a single tap with
$\gcd(a_\delta, q) = 1$, as \Cref{alg:bcc} requires, so that once one
token of a pair is committed, exactly one class completes the pair.
The detector counts pairs, but the sampler tilts tokens, so we need to say
which pairs the tilt acts on. A position is \emph{committed} when the
sampler unmasks it, and its token is then fixed for the rest of
generation. A token is \emph{enforced} when the sampler draws it from a
tilt. A scored pair is \emph{enforced} when the token of the pair that was
committed second was tilted toward this pair, that is, its favored class was computed from the pair's other token.
With one-sided enforcement a token is tilted only toward the pair it forms
with its tap. Either-side enforcement is defined for a single tap, and
there a token can instead be tilted toward the pair it forms with its right
neighbor at lag $\delta$. Each tilted token completes exactly one pair, so
tilted tokens and enforced scored pairs correspond one to one. The one
exception is a token whose tap lies in the prompt. \Cref{alg:bcc} tilts it,
but the detector does not score its pair. There are at most $\max
\mathcal{D}$ such tokens per text, and we ignore them. Whether a pair is
enforced depends on the unmasking order, so it is random. The
\emph{enforced fraction} is the expected fraction of scored pairs that are
enforced,
\[
\kappa \;=\; \frac{1}{T} \sum_{i \in \mathcal{T}} \Pr(\text{pair $i$ is
enforced}) ,
\]
and \Cref{tab:e9-screen} reports its empirical value.

\paragraph{Assumptions.}
The results use four idealizations of the sampler and the text, and a
local form (U$'$) of the second.
\begin{itemize}
\item \textbf{(S) Sampling.} At each denoising step the sampler draws a
candidate at every masked position, independently given the current
\emph{state} (the partly unmasked sequence) and the favored classes of
that step. An enforced candidate is drawn
from $\mathsf{T}_{s_i} p_i$ and every other candidate from $p_i$. The
sampler then commits candidates by the same \emph{commit rule} as the
unwatermarked sampler. In \Cref{alg:bcc} the commit rule keeps the $K$
most confident candidates. (S) is exact for the candidate draw. For the
commit rule it is an idealization, because the confidences come from the
biased logits and so differ from those of the unwatermarked sampler.

\emph{Committed tokens.} \Cref{lem:factor} and \Cref{app:optimal} also
treat a committed token as a draw from its candidate distribution. This is
exact when the commit rule ignores the candidates, and an idealization
otherwise. \Cref{thm:unforgeable} does not need it, because, averaged over the favored classes, both processes share the candidate distributions and the commit rule.
\item \textbf{(U) Unpredictability.} At each denoising step, given the
history of states $x^{(0)}, \dots, x^{(j-1)}$ and the coloring $\chi$, the
favored classes $s_i$ of the positions enforced in that step are
independent and uniform on $\mathbb{Z}_q$.
\item \textbf{(U$'$) Local unpredictability.} Fix a scored pair $i$, and
let $c$ be the class that completes it once one of its tokens is
committed. A token tilted toward pair $i$ is tilted toward $c$, as in
\Cref{alg:bcc}. Given the history of states, the coloring, and $c$, the
favored class of every token tilted toward a different pair is uniform on
$\mathbb{Z}_q$, and the favored classes of distinct tokens are
independent. In short, (U$'$) is (U) applied to every tilt except the
tilt toward the pair being scored.
\item \textbf{(B) Balance.} Every context spreads its mass evenly over the
classes, $m_s(p_i) = 1/q$ for every $s$ and $i$. \Cref{prop:keyconc} shows
that a uniformly random balanced coloring, such as the hash coloring,
achieves this balance in expectation to within $1/V$. The semantic-quantile
coloring is not uniformly random, so for it (B) is an assumption, and
\Cref{app:residue-diag} measures the class masses under the default key.
\item \textbf{(I) Independence.} The match indicators $M_i$ are mutually
independent across scored positions. (I) is the second hypothesis of
\Cref{prop:null}. Beyond that, it enters only the finite-sample bounds of
\Cref{app:proof-tpr} and Approximation~3 of \Cref{app:optimal}.
\end{itemize}

\paragraph{What (U) idealizes.}
For a real key the favored class $s_i$ is a function of the coloring, the target residue $b$, and the committed taps, so (U) cannot hold exactly. (U) describes
an idealized sampler in which each enforced position draws its favored
class afresh, as if every pair had its own secret residue. Like the real
sampler, it favors a class that changes from position to position. It
also spreads the favored classes evenly over the classes, which the real
sampler does when the colors of the taps are evenly spread. Because the
fresh draw is independent of the coloring, conditioning on the color of
any token leaves it uniform.

For a real key, (U) is therefore an approximation. Its accuracy depends
on how evenly the colors of the taps are spread over the classes and how
unrelated they are to the color of the token drawn at the tilted
position. Natural text meets neither condition exactly
(\Cref{tab:residue-diag}). The red--green list favors the same class at
every position. Its favored class never varies, so no sampler of the kind
(U) describes approximates it.

\paragraph{Why detection uses (U$'$).}
(U) also drops the link between a pair's favored class and the tap of that
pair, and this link is what the detector counts. Under (S), (U), and (B)
the generated text has the unwatermarked law (\Cref{thm:unforgeable}), so
every scored pair matches with probability $1/q$ and the margin
$\varepsilon$ is zero. The reason is that (U) turns the tilt toward a pair
into a tilt toward a uniformly drawn class. With probability $(q-1)/q$
this class is not the one that completes the pair, and the pair then
matches with probability $1/(q + \gamma)$, below chance. Weighted by $(q-1)/q$, this deficit exactly cancels the excess of \Cref{eq:eps-enf}, which occurs with probability $1/q$. (U) and (U$'$)
idealize the same sampler for two different observers. An attacker who
counts single tokens never sees the tap that a token was tilted toward.
Given (B), what matters to this attacker is whether the favored class at a position
is evenly spread and unrelated to the color of the token drawn there,
which is the property that (U) idealizes. The detector reads each token together with its tap, so an analysis of its
score must keep the tilt toward the pair being scored. (U$'$) keeps it
and randomizes only the tilts toward other pairs. We therefore use (U)
for \Cref{thm:unforgeable} and (U$'$) for \Cref{lem:factor}. Neither
describes a real key exactly.

\FloatBarrier
\subsection{The score of unwatermarked text}
\label{app:proof-null}

We call the distribution of the score $z$ on text written without the
watermark the \emph{null}. The threshold $z_\star$ controls the
false-positive rate only if such text satisfies the checksum at rate
$1/q$. \Cref{prop:null} gives conditions under which the count is exactly
binomial, and \Cref{prop:keyconc} shows how close a uniformly random
balanced coloring comes to meeting them.

\noindent\textbf{\Cref{prop:null} (restated).} \emph{Assume that at each
scored position the color $\chi(t_i)$ is uniform on $\mathbb{Z}_q$ and
independent of the colors of its taps, and that the match indicators are
mutually independent across positions. Then $C \sim \mathrm{Binom}(T,
1/q)$ for every residue $b$, and $z$ converges in distribution to
$\mathcal{N}(0,1)$ as $T \to \infty$.}

\begin{proof}[Proof of \Cref{prop:null}]
The proof uses only the two hypotheses and the fact that the scored token's
color enters $R_i$ with coefficient $1$.

\textit{Step 1: each position matches with probability $1/q$.}
Condition on the tap colors of position $i$. They fix the tap contribution
\[
y \;=\; \sum_{\delta \in \mathcal{D}} a_\delta\, \chi(t_{i-\delta}) \bmod q ,
\qquad\text{so that}\qquad
R_i \;=\; \big(\chi(t_i) + y\big) \bmod q .
\]
The map $x \mapsto (x + y) \bmod q$ is a bijection of $\mathbb{Z}_q$ for
every $y$. By hypothesis $\chi(t_i)$ is uniform and independent of the tap
colors, so it stays uniform after conditioning, and a bijection maps a
uniform variable to a uniform variable. Hence $R_i$ is uniform given the
taps, and
\[
\Pr\big( M_i = 1 \mid \text{taps} \big) \;=\; \Pr\big( R_i = b \mid
\text{taps} \big) \;=\; \tfrac{1}{q}
\]
for every residue $b$. The right-hand side does not depend on the taps, so
$\Pr(M_i = 1) = 1/q$.

\textit{Step 2: the count is binomial and the score is asymptotically
normal.} The $M_i$ are independent by hypothesis, so $C \sim
\mathrm{Binom}(T, 1/q)$, with mean $T/q$ and variance $T\sigma_0^2$. The
score $z$ is $C$ standardized by this mean and variance, and the de
Moivre--Laplace theorem gives $z \Rightarrow \mathcal{N}(0, 1)$ as $T \to
\infty$ at fixed $q$.
\end{proof}

The proof does not use the condition $\gcd(a_\delta, q) = 1$. That
condition lets \Cref{alg:bcc} solve \Cref{eq:constraint} for the left token
of a pair, and it handles a corrupted tap in \Cref{thm:robust}.

\paragraph{Where balance enters.}
The hypotheses of \Cref{prop:null} do not mention balance. Balance is what
makes the uniformity hypothesis plausible, because it leaves no color class
larger than another. A token drawn uniformly from the vocabulary lands in
class $s$ with probability $n_s / V$, where $n_s \in \{\lfloor V/q \rfloor,
\lceil V/q \rceil\}$ is the class size, and $|n_s / V - 1/q| < 1/V$. Real
contexts are far from uniform over the vocabulary, so the relevant question
is how much probability a given context $p$ assigns to each class. The next
proposition answers it for a coloring drawn uniformly from the balanced
colorings, which is how the hash coloring of \Cref{sec:method-coloring} is
built.

\begin{proposition}[A random key nearly balances every context]
\label{prop:keyconc}
Let $\chi$ be drawn uniformly from the balanced colorings with class sizes
$n_0, \dots, n_{q-1}$, and fix a distribution $p$ on $\mathcal{V}$. Write
$\|p\|_2^2 = \sum_v p(v)^2$ for the collision probability of $p$, which is
$1/V$ for the uniform distribution and $1$ for a point mass. Then for every
class $s$,
\[
\Big| \mathbb{E}_\chi[m_s(p)] - \tfrac{1}{q} \Big| \;<\; \frac{1}{V},
\qquad
\mathrm{Var}_\chi[m_s(p)] \;=\; \frac{n_s (V - n_s)}{V (V - 1)} \Big(
\|p\|_2^2 - \frac{1}{V} \Big),
\]
and by Chebyshev's inequality, for every $\zeta > 0$,
\[
\Pr\nolimits_\chi\Big( \big|m_s(p) - \mathbb{E}_\chi[m_s(p)]\big| \geq \zeta \Big)
\;\leq\; \frac{\mathrm{Var}_\chi[m_s(p)]}{\zeta^2} .
\]
\end{proposition}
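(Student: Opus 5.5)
The plan is to write the class mass as a sum of indicators and use that a uniformly random balanced coloring makes $\mathbf{1}[\chi(v)=s]$ a sampling-without-replacement indicator. Fix $s$ and set $X_v = \mathbf{1}[\chi(v) = s]$, so that $m_s(p) = \sum_v p(v) X_v$. A uniformly random coloring with prescribed class sizes is invariant under permutations of $\mathcal{V}$. The set $\chi^{-1}(s)$ is therefore a uniformly random subset of size $n_s$, and we get
\[
\Pr(X_v = 1) = \frac{n_s}{V},
\qquad
\Pr(X_v = 1, X_w = 1) = \frac{n_s(n_s-1)}{V(V-1)} \quad (v \neq w).
\]
This exchangeability is the one structural fact the proof needs.

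\emph{The mean.} Linearity gives $\mathbb{E}_\chi[m_s(p)] = \frac{n_s}{V}\sum_v p(v) = n_s/V$. Balance gives $n_s \in \{\lfloor V/q\rfloor, \lceil V/q\rceil\}$, so $|n_s - V/q| < 1$ and hence $|n_s/V - 1/q| < 1/V$. This is the bound the paper already noted for a uniform token.

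\emph{The variance.} Expand $\mathrm{Var}(\sum_v p(v) X_v)$ into diagonal and off-diagonal terms. The diagonal gives $\mathrm{Var}(X_v) = \frac{n_s}{V}(1 - \frac{n_s}{V}) = \frac{n_s(V-n_s)}{V^2}$. Off the diagonal, $\mathrm{Cov}(X_v, X_w) = \frac{n_s(n_s-1)}{V(V-1)} - \frac{n_s^2}{V^2} = -\frac{n_s(V-n_s)}{V^2(V-1)}$. Write $c = \frac{n_s(V-n_s)}{V^2}$. The variance is then
\[
c\sum_v p(v)^2 - \frac{c}{V-1}\sum_{v\neq w} p(v)p(w).
\]
Since $\sum_v p(v) = 1$, we have $\sum_{v \neq w} p(v)p(w) = 1 - \|p\|_2^2$. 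The variance becomes $c\big(\|p\|_2^2 - \frac{1 - \|p\|_2^2}{V-1}\big) = c\,\frac{V\|p\|_2^2 - 1}{V-1}$. This equals $\frac{n_s(V-n_s)}{V(V-1)}\big(\|p\|_2^2 - \frac1V\big)$, as claimed.

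\emph{The tail bound.} The last inequality is Chebyshev's inequality applied to $m_s(p)$ under the randomness of $\chi$.

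The only step that needs care is the covariance algebra and the rewriting of $\sum_{v\neq w} p(v)p(w)$ in terms of the collision probability. Everything else is immediate once the exchangeability of the class indicators is in place.
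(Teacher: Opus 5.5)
Your proposal is correct and follows the paper's proof essentially step for step. You write $m_s(p)$ as a weighted sum of the indicators of a uniformly random $n_s$-subset, get the mean $n_s/V$ and the bound $|n_s/V - 1/q| < 1/V$ from balance, compute the variance from the diagonal terms and the pairwise covariance $-n_s(V-n_s)/(V^2(V-1))$ using $\sum_{v\neq w} p(v)p(w) = 1 - \|p\|_2^2$, and finish with Chebyshev, exactly as the paper does; your algebra checks out, and your explicit justification that $|n_s - V/q| < 1$ spells out a step the paper leaves implicit.
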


\begin{proof}
Under a uniformly random balanced coloring with fixed class sizes, the set
$\chi^{-1}(s)$ is a uniformly random subset of $\mathcal{V}$ of size $n_s$.
Let $J_v = \mathbf{1}[\chi(v) = s]$, so that $m_s(p) = \sum_v p(v)\, J_v$.

\textit{Step 1: the mean.} Each $J_v$ has mean $n_s/V$, so
$\mathbb{E}_\chi[m_s(p)] = n_s/V$, which is within $1/V$ of $1/q$.

\textit{Step 2: the variance.} Each indicator has $\mathrm{Var}(J_v) =
\tfrac{n_s}{V}(1 - \tfrac{n_s}{V})$, and for $u \neq v$
\[
\mathrm{Cov}(J_u, J_v) \;=\; \frac{n_s (n_s - 1)}{V (V - 1)} -
\frac{n_s^2}{V^2}
\;=\; -\frac{1}{V - 1} \cdot \frac{n_s}{V}\Big(1 - \frac{n_s}{V}\Big) .
\]
The variance of the weighted sum is
\[
\mathrm{Var}_\chi[m_s(p)] \;=\; \sum_v p(v)^2\, \mathrm{Var}(J_v) +
\sum_{u \neq v} p(u)\, p(v)\, \mathrm{Cov}(J_u, J_v) .
\]
Substituting the two moments and using $\sum_{u \neq v} p(u) p(v) = 1 -
\|p\|_2^2$,
\[
\mathrm{Var}_\chi[m_s(p)]
\;=\; \frac{n_s}{V}\Big(1 - \frac{n_s}{V}\Big) \Big[ \|p\|_2^2 - \frac{1 -
\|p\|_2^2}{V - 1} \Big]
\;=\; \frac{n_s (V - n_s)}{V (V - 1)} \Big( \|p\|_2^2 - \frac{1}{V} \Big) ,
\]
where the second equality uses $\|p\|_2^2 - \frac{1 - \|p\|_2^2}{V - 1} =
\frac{V}{V - 1}\big(\|p\|_2^2 - \frac{1}{V}\big)$.

\textit{Step 3: the tail.} Chebyshev's inequality gives the tail bound.
\end{proof}

For $n_s \approx V/q$ the variance is close to $\sigma_0^2\, \|p\|_2^2$. A
random key therefore balances a spread-out context well and a peaked
context poorly. At $q = 3$, a context with collision probability $0.01$ has
class masses with standard deviation about $0.05$ across keys, and a
context that puts almost all its mass on one token has class masses near
$0$ or $1$ for every key.

The proposition does not cover the semantic-quantile coloring. Its classes
are rank buckets of a keyed projection, so a token whose embedding projects
near the median for most directions lands in the middle class for most
keys. For this coloring $\mathbb{E}_\chi[m_s(p)]$ can therefore differ from
$1/q$ by more than $1/V$.

\paragraph{One fixed key.}
\Cref{prop:keyconc} averages over keys, but a deployment uses one key, and
its class masses on natural text can be unequal. Under the default key the
three classes carry $32\%$, $29\%$, and $39\%$ of the tokens of natural text
(\Cref{tab:residue-diag}). This imbalance is why we calibrate the
threshold on unwatermarked text (\Cref{sec:experiments-null}).

\FloatBarrier
\subsection{Expected score and the text length needed for detection}
\label{app:proof-detect}

\Cref{prop:ez} relates the expected score to the coupling margin
$\varepsilon$, which is how far the match probability exceeds the chance
rate $1/q$, averaged over the $T$ scored positions. The proposition needs
no assumption, because it uses only linearity of expectation. Texts can end
early, so $T$ varies across texts. The statements of this subsection then
hold conditionally on $T$.

\noindent\textbf{\Cref{prop:ez} (restated).} \emph{For a fixed number $T$
of scored positions, let $\mu_i = \Pr(M_i = 1)$ under the watermarked
process and $\varepsilon = \frac{1}{T}\sum_{i \in \mathcal{T}}(\mu_i -
1/q)$. Then $\mathbb{E}[z] = \sqrt{T}\,\varepsilon / \sigma_0$.}

\begin{proof}[Proof of \Cref{prop:ez}]
Linearity of expectation gives $\mathbb{E}[C] = \sum_{i} \mu_i = T(1/q +
\varepsilon)$. The score $z$ is an affine function of $C$, so
\[
\mathbb{E}[z] \;=\; \frac{\mathbb{E}[C] - T/q}{\sqrt{T}\, \sigma_0}
\;=\; \frac{T\varepsilon}{\sqrt{T}\, \sigma_0}
\;=\; \sqrt{T}\,\frac{\varepsilon}{\sigma_0} . \qedhere
\]
\end{proof}

Setting the expected score equal to the threshold gives the number of
scored positions that detection needs.

\begin{corollary}[Scored positions to detect]
\label{cor:samples}
If $z_\star > 0$ and the margin $\varepsilon > 0$ does not depend on $T$,
then $\mathbb{E}[z] \geq z_\star$ if and only if
\[
T \;\geq\; T_\star \;=\; \frac{z_\star^2\, \sigma_0^2}{\varepsilon^2} .
\]
\end{corollary}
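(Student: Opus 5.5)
We will derive the corollary directly from \Cref{prop:ez}, which gives the exact identity $\mathbb{E}[z] = \sqrt{T}\,\varepsilon/\sigma_0$ for a fixed number $T$ of scored positions. Since $\varepsilon$ is assumed not to depend on $T$, the right-hand side is an explicit function of $T$ alone, so the condition $\mathbb{E}[z] \geq z_\star$ becomes the scalar inequality
\[
\frac{\sqrt{T}\,\varepsilon}{\sigma_0} \;\geq\; z_\star .
\]

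The steps are as follows. First, multiply both sides by $\sigma_0/\varepsilon$; this preserves the direction of the inequality because $\sigma_0 > 0$ (as $q \geq 2$) and $\varepsilon > 0$ by hypothesis. That gives $\sqrt{T} \geq z_\star \sigma_0 / \varepsilon$. Second, note that both sides are nonnegative: $\sqrt{T} \geq 0$ trivially, and $z_\star \sigma_0/\varepsilon > 0$ because $z_\star > 0$. Since $x \mapsto x^2$ is strictly increasing on $[0,\infty)$, squaring is an equivalence, which yields $T \geq z_\star^2\sigma_0^2/\varepsilon^2 = T_\star$. Each step is reversible, so this proves the ``if and only if''.

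The only point that needs care, and the closest thing to an obstacle, is the positivity bookkeeping that makes both operations equivalences rather than one-way implications. Dividing by $\varepsilon$ requires $\varepsilon > 0$; if $\varepsilon \le 0$ the direction would flip or the bound would be vacuous. Squaring requires $z_\star > 0$; if $z_\star \le 0$, every $T$ would satisfy $\mathbb{E}[z] \geq z_\star$, and the stated equivalence would fail for small $T$. We will state explicitly where each hypothesis is used. We will also remark that, as in \Cref{prop:ez}, the statement is conditional on $T$, with $\varepsilon$ treated as a constant across text lengths.
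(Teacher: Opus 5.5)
Your proof is correct and is essentially the paper's argument. The paper also starts from the identity $\mathbb{E}[z] = \sqrt{T}\,\varepsilon/\sigma_0$ of \Cref{prop:ez} and concludes by noting that this quantity increases in $T$ when $\varepsilon > 0$ and equals $z_\star$ at $T = T_\star$, which is the same content as your chain of reversible inequalities. One small correction to your side remark: the equivalence fails only for $z_\star < 0$, since at $z_\star = 0$ we have $T_\star = 0$ and both sides hold for every $T$.
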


\begin{proof}
By \Cref{prop:ez}, $\mathbb{E}[z] = \sqrt{T}\,\varepsilon / \sigma_0$
increases in $T$ when $\varepsilon > 0$ and equals $z_\star$ at $T =
T_\star$.
\end{proof}

The number of scored positions that detection needs grows as
$1/\varepsilon^2$, so halving the margin quadruples it.

\paragraph{The margin at an enforced pair.}
The margin depends on how much probability the tilt moves into the favored
class. Consider an enforced pair whose second token has context $p$ and
favored class $s$, and let $m = m_s(p)$. If we treat the committed token as a
draw from $\mathsf{T}_s p$, the pair matches exactly when that token lands
in class $s$, which has probability
\[
m_s(\mathsf{T}_s p) \;=\; \frac{(1 + \gamma)\, m}{1 + \gamma m} .
\]
This exceeds $m$ for every $0 < m < 1$ and equals $m$ at $m \in \{0, 1\}$.
A confident context, whose mass sits almost entirely inside or outside the
favored class, therefore gains almost nothing from the tilt. Under (B) we
have $m = 1/q$, and the excess over chance at an enforced pair is
\begin{equation}
\label{eq:eps-enf}
\bar\varepsilon_{\mathrm{enf}}
\;=\; \frac{(1 + \gamma)/q}{1 + \gamma/q} - \frac{1}{q}
\;=\; \frac{\gamma\,(1 - 1/q)}{q + \gamma} ,
\end{equation}
which increases in $\beta$ and saturates at $1 - 1/q$. Raising $\beta$
therefore increases the margin with diminishing returns.

The main text factors the margin as $\varepsilon = \kappa\,
\bar\varepsilon_{\mathrm{enf}}$. This needs one more fact, that a pair the
sampler did not enforce matches only at the chance rate. The lemma uses
the local assumption (U$'$), because under (U) itself every pair matches
at the chance rate and the margin is zero (\Cref{app:setting}).

\begin{lemma}[Margin factorization]
\label{lem:factor}
Under (S), (U$'$), and (B), for a single tap with $\gcd(a_\delta, q) = 1$,
an enforced pair matches with probability $1/q +
\bar\varepsilon_{\mathrm{enf}}$, with $\bar\varepsilon_{\mathrm{enf}}$ as
in \Cref{eq:eps-enf}, and an unenforced pair matches with probability
$1/q$. Hence $\varepsilon = \kappa\, \bar\varepsilon_{\mathrm{enf}}$.
\end{lemma}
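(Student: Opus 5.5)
We prove \Cref{lem:factor} by fixing a scored pair $i$, with tap $i-\delta$, and conditioning on which of its two tokens was committed second and on what that token was tilted toward. There are three cases. In the first, the second token was tilted toward pair $i$; then the pair is enforced. In the second, it was tilted toward a different pair. This happens when the token at $i$ was tilted toward its own tap pair under either-side enforcement, or when the tap token was tilted toward its own left tap. In the third, it was not tilted at all. In every case we condition on the history of states up to the commit step, which fixes the first-committed token of the pair. Since $\gcd(a_\delta,q)=1$, that token determines a unique completing class $c$: solve \Cref{eq:constraint} for whichever endpoint is still free. The pair then matches if and only if the second token lands in class $c$.

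\textbf{Enforced case.} By (S) we treat the second token as a draw from $\mathsf{T}_c p$. By (B), $m_c(p)=1/q$, so the match probability is $m_c(\mathsf{T}_c p) = \frac{(1+\gamma)/q}{1+\gamma/q}$. By \Cref{eq:eps-enf} this equals $1/q+\bar\varepsilon_{\mathrm{enf}}$.

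\textbf{Untilted case.} The second token is drawn from $p$, and $m_c(p)=1/q$ by (B). The match probability is exactly $1/q$.

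\textbf{Tilted-elsewhere case.} By (U$'$), the favored class $s$ of this token is uniform on $\mathbb{Z}_q$ and independent of $c$, given the history, the coloring, and $c$. The match probability is $\mathbb{E}_s[m_c(\mathsf{T}_s p)]$. This equals $m_c\bigl(\tfrac1q\sum_s \mathsf{T}_s p\bigr)$ because $m_c$ is linear. By \Cref{lem:mixture}, which needs only (B), the average of the tilts is $p$. So the match probability is $m_c(p) = 1/q$.

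To finish, we average over histories and over the random event that pair $i$ is enforced. This gives $\mu_i = 1/q + \bar\varepsilon_{\mathrm{enf}}\Pr(\text{pair } i \text{ enforced})$. Summing over $i\in\mathcal{T}$ and dividing by $T$ yields $\varepsilon=\kappa\,\bar\varepsilon_{\mathrm{enf}}$, using the definition of $\kappa$.

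The step I expect to be delicate is the tilted-elsewhere case. We must check that conditioning on $c$ is legitimate, given that whether the pair ends up enforced is itself random. The case split must be made on events measurable with respect to the history and the favored classes at the commit step, so that (U$'$) applies to the remaining randomness. A second subtlety is that the tap token may have been committed second while tilted toward its own left pair. Conversely, position $i$ may be tilted rightward, toward the pair at $i+\delta$. Both are covered because "tilted toward a different pair" is exactly the situation (U$'$) randomizes. The single-tap and gcd conditions are what guarantee that every tilt concerns exactly one pair and that $c$ is well defined.
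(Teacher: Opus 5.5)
Your case split is not exhaustive. All three of your cases presuppose that the two tokens of pair $i$ are committed at different steps, so that the history at the later commit step contains the earlier token and fixes the completing class $c$. The commit rule in (S), however, unmasks $K$ candidates per step. When $K>1$, the tokens $t_{i-\delta}$ and $t_i$ can be unmasked in the same step. Such a pair is unenforced: a tilt toward pair $i$ requires its other token to be unmasked already, and here both were masked when the step began. The lemma's claim about unenforced pairs must therefore cover this case. Yet there is no first-committed token, so $c$ is not a function of the history. Neither your untilted nor your tilted-elsewhere computation applies, because both condition on a fixed $c$ and use only the marginal law of one token. The case is vacuous only if every step commits a single position, and the lemma is not restricted to that.

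Closing this case needs an ingredient your argument never uses: independence \emph{between} the two tokens of the pair. The paper treats this as a separate case (its case (iii)), and the argument runs as follows.
\begin{itemize}
\item Given the state and the favored classes, the two candidates are independent by the first part of (S). The second part of (S) lets you treat the committed tokens as these draws.
\item By (U$'$), the two favored classes, when the tokens are tilted at all (necessarily toward other pairs), are independent and uniform.
\item Averaging over them and applying \Cref{lem:mixture} leaves each candidate distributed as its own context, independently of the other. By (B), each color is then uniform.
\item So $\chi(t_i)$ is uniform and independent of $\chi(t_{i-\delta})$. Hence $R_i$ is uniform and the pair matches with probability $1/q$, as in Step~1 of the proof of \Cref{prop:null}.
\end{itemize}

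With this case added, the rest of your argument agrees with the paper's. Your treatment of the tilted-elsewhere case, through linearity of $m_c$ and \Cref{lem:mixture}, is a slightly cleaner form of the paper's explicit average of $m_c(\mathsf{T}_s p)$ over $s$.

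A minor point: under \Cref{alg:bcc} the left tap takes priority, so a later-committed $t_i$ is always tilted toward pair $i$. The rightward-tilted sub-case you list for $t_i$ therefore cannot occur, though including it does no harm.
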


\begin{proof}
Fix a scored pair $i$ and apply (U$'$) to it. Condition on the history of
states and on the coloring, and treat each committed token as a draw from
its candidate distribution (second part of (S)). When the two tokens of
the pair are committed at different steps, the token committed second
decides whether the pair matches, because the other token is already
fixed when it is drawn. Let $c$ be the class that completes the pair given
that other token. The history and the coloring fix $c$.

\textit{Step 1: an enforced pair matches with probability $1/q +
\bar\varepsilon_{\mathrm{enf}}$.} The second token is tilted toward $c$,
the tilt that (U$'$) keeps as in \Cref{alg:bcc}. It lands in $c$ with
probability $m_c(\mathsf{T}_c p)$, which is $1/q +
\bar\varepsilon_{\mathrm{enf}}$ by the computation above with $m = 1/q$
from (B). This holds given every history in which the pair is enforced.

\textit{Step 2: an unenforced pair matches with probability $1/q$.} A token
tilted toward pair $i$ is tilted toward $c$, so the pair is unenforced
exactly when its second token was not tilted toward pair $i$, or when both
tokens were committed in the same step. Three cases cover these outcomes.
\begin{enumerate}
\item[(i)] \emph{The second token was drawn from its context $p$.} It lands in $c$ with
probability $m_c(p) = 1/q$ by (B).
\item[(ii)] \emph{The second token was tilted toward a class $s$ that
completes a different pair.} By (U$'$), $s$ is uniform given the history,
the coloring, and $c$. By (B), the tilt toward $s$ puts mass on $c$ equal
to
\[
m_c(\mathsf{T}_s p) \;=\;
\begin{cases}
\dfrac{1 + \gamma}{q + \gamma} & \text{if } s = c,\\[8pt]
\dfrac{1}{q + \gamma} & \text{if } s \neq c,
\end{cases}
\qquad\text{so}\qquad
\frac{1}{q} \cdot \frac{1 + \gamma}{q + \gamma} + \frac{q - 1}{q} \cdot
\frac{1}{q + \gamma} \;=\; \frac{1}{q} .
\]
\item[(iii)] \emph{Both tokens of the pair were committed in the same
step.} Then neither was tilted toward pair $i$, because a tilt toward a
pair requires its other token to be committed already. Each token was
drawn from its context or from a tilt toward a different pair, whose
favored class is uniform by (U$'$). In both cases its color is uniform, by
(B) and, for a tilt, by \Cref{lem:mixture}. By (S), the two candidates
are independent given the state and the favored classes, and by (U$'$)
the two favored classes are independent. The two colors are therefore
independent and uniform, and
\[
\Pr(R_i = b \mid \text{state})
= \sum_{u \in \mathbb{Z}_q} \Pr\big(\chi(t_{i-\delta}) = u\big)\,
\Pr\big(\chi(t_i) = b - a_\delta u\big)
= \sum_{u \in \mathbb{Z}_q} \frac{1}{q} \cdot \frac{1}{q}
= \frac{1}{q} .
\]
\end{enumerate}

\textit{Step 3: averaging over enforcement.} By Steps 1 and 2,
\[
\mu_i \;=\; \frac{1}{q} + \Pr(\text{pair $i$ is enforced})\,
\bar\varepsilon_{\mathrm{enf}} .
\]
Averaging over $i \in \mathcal{T}$ and using the definition of $\kappa$
gives $\varepsilon = \kappa\, \bar\varepsilon_{\mathrm{enf}}$.
\end{proof}

The factorization explains why enforcing from either side helps. At $\beta =
6$, either-side enforcement raises $\kappa$ from $0.62$ to $0.75$ without changing $\beta$ and therefore without changing $\bar\varepsilon_{\mathrm{enf}}$ (\Cref{tab:e9-screen}).

\FloatBarrier
\subsection{Finite-sample error bounds}
\label{app:proof-tpr}

\Cref{prop:ez} gives the expected score but bounds no error probability.
Under (I) the match indicators are independent variables in $\{0, 1\}$, and
Hoeffding's inequality turns the expected separation into bounds on both
error rates.

\begin{corollary}[Finite-sample error bounds]
\label{cor:detect}
Assume (I). For $T \geq T_\star$ the false-negative rate satisfies
\[
\Pr(z \leq z_\star) \;\leq\;
\exp\!\big({-2}\big(\sqrt{T}\,\varepsilon - z_\star \sigma_0\big)^2\big).
\]
Under the null, where every match probability is $1/q$, the false-positive
rate satisfies, for every $T$,
\[
\Pr(z > z_\star) \;\leq\; \exp\!\big({-2} z_\star^2 \sigma_0^2\big) .
\]
\end{corollary}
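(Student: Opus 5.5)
Both bounds follow from Hoeffding's inequality applied to $C = \sum_{i \in \mathcal{T}} M_i$. Under (I) this is a sum of $T$ independent variables in $\{0,1\}$, so for every $u > 0$
\[
\Pr\big(C - \mathbb{E}[C] \leq -u\big) \leq e^{-2u^2/T},
\qquad
\Pr\big(C - \mathbb{E}[C] \geq u\big) \leq e^{-2u^2/T}.
\]
The plan is to rewrite each event on $z$ as a deviation of $C$ from its mean and then substitute. By \Cref{prop:ez}, $\mathbb{E}[C] = T/q + T\varepsilon$, and $z \leq z_\star$ is equivalent to $C \leq T/q + z_\star \sqrt{T}\,\sigma_0$.

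\textbf{False negatives.} Subtracting the mean, the event $z \leq z_\star$ becomes $C - \mathbb{E}[C] \leq -(T\varepsilon - z_\star\sqrt{T}\,\sigma_0)$. Set $u = \sqrt{T}\,(\sqrt{T}\,\varepsilon - z_\star\sigma_0)$. By \Cref{cor:samples}, the hypothesis $T \geq T_\star$ is exactly $\sqrt{T}\,\varepsilon \geq z_\star\sigma_0$, so $u \geq 0$. When $u = 0$ the bound reads $1$ and holds trivially. For $u > 0$, the lower-tail inequality gives $e^{-2u^2/T} = \exp\big(-2(\sqrt{T}\,\varepsilon - z_\star\sigma_0)^2\big)$, which is the stated bound.

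\textbf{False positives.} Under the null every match probability is $1/q$, so $\mathbb{E}[C] = T/q$. The event $z > z_\star$ is $C - T/q > z_\star\sqrt{T}\,\sigma_0$. The upper-tail inequality with $u = z_\star\sqrt{T}\,\sigma_0$ gives $e^{-2u^2/T} = \exp(-2 z_\star^2\sigma_0^2)$. This holds for every $T$, since the factor $T$ in $u^2$ cancels the $1/T$ in the exponent.

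Only the bookkeeping needs care: converting between the scale of $z$ and the scale of $C$ without losing a factor of $\sqrt{T}$, and checking the sign condition $u \geq 0$, which is what the hypothesis $T \geq T_\star$ is for. The false-positive bound is weak because it does not decay in $T$, but that is what the statement claims. Independence (I) is used only to justify Hoeffding's inequality; the false-negative bound additionally uses the mean from \Cref{prop:ez}.
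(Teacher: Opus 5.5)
Your proposal is correct and follows the paper's proof step for step. Both apply Hoeffding's inequality under (I), rewrite $\{z \le z_\star\}$ and $\{z > z_\star\}$ as deviations of $C$ from its mean, and substitute $\zeta = T\varepsilon - z_\star\sigma_0\sqrt{T}$ (nonnegative once $T \ge T_\star$) for false negatives and $\zeta = z_\star\sigma_0\sqrt{T}$ for false positives, with both you and the paper implicitly taking $\varepsilon > 0$ and $z_\star > 0$ as in \Cref{cor:samples}.
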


\begin{proof}
Under (I), Hoeffding's inequality \citep{hoeffding1963} gives, for every
$\zeta \geq 0$,
\[
\Pr\big(C - \mathbb{E}[C] \leq -\zeta\big) \;\leq\; e^{-2\zeta^2/T},
\qquad
\Pr\big(C - \mathbb{E}[C] \geq \zeta\big) \;\leq\; e^{-2\zeta^2/T} .
\]

\textit{False negatives.} Since $\mathbb{E}[C] = T/q + T\varepsilon$,
\[
\{z \leq z_\star\} \;=\; \big\{C \leq T/q + z_\star \sigma_0 \sqrt{T}\big\}
\;=\; \big\{C - \mathbb{E}[C] \leq -\zeta\big\},
\qquad
\zeta \;=\; T\varepsilon - z_\star \sigma_0 \sqrt{T} .
\]
For $T \geq T_\star$ we have $\zeta \geq 0$, and $\zeta^2 / T =
(\sqrt{T}\varepsilon - z_\star \sigma_0)^2$. The first inequality gives the
bound.

\textit{False positives.} Under the null $\mathbb{E}[C] = T/q$, so
\[
\{z > z_\star\} \;=\; \big\{C - \mathbb{E}[C] > z_\star \sigma_0
\sqrt{T}\big\} .
\]
The second inequality with $\zeta = z_\star \sigma_0 \sqrt{T}$ gives
$\exp(-2 z_\star^2 \sigma_0^2)$.
\end{proof}

For $T \geq T_\star$, the false-negative bound decays exponentially in $T$.
The false-positive bound depends only on $z_\star$ and $q$. At $q = 3$ and
$z_\star = 4$ it is $\exp(-64/9) = 8.2 \times 10^{-4}$. Hoeffding's
inequality ignores the variance of the matches, so when the binomial null
of \Cref{prop:null} holds, the exact binomial tail is smaller.

Rounding in the class sizes moves each null match probability by at most
$1/V$ (\Cref{app:proof-null}), so it moves the null mean of $C$ by at most
$T/V$. Applying Hoeffding's inequality around the shifted mean changes the
false-positive bound by less than $10^{-6}$ at $V \approx 1.26 \times
10^{5}$ and $T \leq 128$.

\FloatBarrier
\subsection{The count test is the likelihood-ratio test}
\label{app:optimal}

The main text motivates the count $C$ as the natural statistic for a
checksum that holds at rate $1/q$ by chance. This subsection shows that,
under three approximations and with one-sided enforcement, thresholding
$C$ is the likelihood-ratio test. By the Neyman--Pearson lemma, that test
detects the largest fraction of watermarked texts among all detectors with
the same false-positive rate.

Given a text $t_1, \dots, t_L$, the optimal detector computes the
likelihood ratio
\begin{equation}
\label{eq:lr}
\Lambda \;=\; \frac{\Pr(\text{text} \mid \text{watermarked model})}
{\Pr(\text{text} \mid \text{no watermark})}
\end{equation}
and flags the text when $\Lambda$ exceeds a threshold. Computing
$\Lambda$ is infeasible. The numerator sums over every prompt and every
unmasking order, both terms require running the model, and for human
authors the denominator is unknown. Three approximations reduce
\Cref{eq:lr} to TANGO's detector. We index each scored pair by its scored
position $i$, so that $M_i$ is its match indicator. For an enforced pair
$i$ (\Cref{app:setting}) we write $m_i = m_{s_i}(p)$ for the probability
that the unwatermarked context $p$ of the pair's tilted token assigns to
the favored class.

\paragraph{Approximation 1: fix the prompt and the unmasking order.}
Assume that the watermarked text and its unwatermarked counterpart come
from the same model, prompt, and unmasking order. The two hypotheses then
differ only at the tilted token of each enforced pair. Consider one
enforced pair $i$ and its tilted token. Token probabilities are a softmax
of logits $\ell_v$, $p(v) = e^{\ell_v}/\Omega$ with $\Omega = \sum_w
e^{\ell_w}$. The watermark adds $\beta$ to the logits of the favored
class. We first compute how this changes the normalizer, because the
likelihood ratio of the token is its boost divided by that change. The new
normalizer $\Omega'$ satisfies
\[
\frac{\Omega'}{\Omega}
\;=\; e^{\beta} \!\!\sum_{w :\, \chi(w) = s_i}\!\! p(w)
\;+\; \sum_{w :\, \chi(w) \neq s_i} p(w)
\;=\; 1 + \gamma\, m_i .
\]
Given the contexts that this approximation fixes, both hypotheses assign
the text a product of per-position token probabilities, and
Approximation~3 reuses this factorization. The log-ratio $\log \Lambda$ is therefore a
sum of per-token log-ratios. A token
that was not tilted contributes $0$, because both hypotheses use the same
distribution there. The tilted token of an enforced pair lands in the
favored class exactly when the pair's checksum holds. Its probability is
multiplied by $e^{\beta}$ in that case and in both cases divided by $1 +
\gamma m_i$, so its log-ratio is
\[
\log \frac{\Pr_{\mathrm{wm}}(t)}{\Pr_{0}(t)} \;=\;
\begin{cases}
\beta - \log(1 + \gamma m_i) & \text{if } M_i = 1,\\
-\log(1 + \gamma m_i) & \text{if } M_i = 0.
\end{cases}
\]
Summing over enforced pairs,
\begin{equation}
\label{eq:lr-boosted}
\log \Lambda \;=\; \sum_{i \text{ enforced}} \Big[ \beta M_i \;-\; \log\big(1 +
\gamma\, m_i\big) \Big].
\end{equation}

\paragraph{Approximation 2: balance.}
The indicators $M_i$ in \Cref{eq:lr-boosted} depend only on the text and
the key, but $m_i$ requires running the model. Under (B), $m_i = 1/q$ at
every position, and every penalty becomes the constant $\log(1 +
\gamma/q)$.

\paragraph{Approximation 3: enforcement at random.}
The final text does not record the order in which positions were unmasked,
so it does not reveal which pairs were enforced. Assume one-sided
enforcement, so that the tilted token of pair $i$ is $t_i$. Assume further
that each scored pair was enforced with probability $\kappa$,
independently across pairs. This independence is itself an approximation,
since pairs $i$ and $i + \delta$ share the token $t_i$. Under the null the
factorization of Approximation~1 makes the colors of different
positions independent and, by (B), uniform, which implies (I). Under the
watermark, the probability of $t_i$ is a mixture of its tilted and its
unwatermarked probability,
\[
\Pr\nolimits_{\mathrm{wm}}(t_i = v) \;=\; \kappa\, (\mathsf{T}_{s_i} p)(v) + (1 -
\kappa)\, p(v) .
\]
Under (B) the normalizer of every tilt is $1 + \gamma/q$. Dividing by
$p(v)$ therefore cancels the token's identity and leaves only its match
indicator, which gives the per-pair likelihood ratio
\begin{equation}
\label{eq:lr-position}
\lambda(M_i) \;=\; \frac{\Pr_{\mathrm{wm}}(t_i = v)}{p(v)} \;=\; \kappa\,
\frac{e^{\beta M_i}}{1 + \gamma/q} \;+\; (1 - \kappa).
\end{equation}
Since $e^{\beta} = 1 + \gamma > 1 + \gamma/q$, \Cref{eq:lr-position}
gives $\lambda(1) > 1$ for a match and $\lambda(0) < 1$ for a miss. Both
likelihoods factor over positions, so $\Lambda$ is the product of the
per-pair ratios, and over the $T$ scored pairs
\begin{equation}
\label{eq:lr-affine}
\log \Lambda \;=\; \underbrace{\log \frac{\lambda(1)}{\lambda(0)}}_{>\,0}
\cdot\, C \;+\; T \log \lambda(0).
\end{equation}

\begin{proposition}[The count test is the likelihood-ratio test]
\label{prop:optimal}
Under Approximations~1--3, which include one-sided enforcement, and among
texts with the same number $T$ of scored positions, $\log \Lambda$ is an
increasing affine function of the match count $C$. Thresholding $\Lambda$
is therefore the same test as thresholding $C$, or equivalently $z$. The
threshold on $C$ that gives a chosen false-positive rate depends only on
the null, so the same count test is the likelihood-ratio test for every
$\beta > 0$ and every $\kappa \in (0, 1]$.
\end{proposition}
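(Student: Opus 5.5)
The plan is to read the proposition off \Cref{eq:lr-affine}, which the three approximations have already reduced $\log \Lambda$ to, and then check three things: the slope is positive, the intercept depends only on $T$, and Neyman--Pearson applies on each slice of fixed $T$.

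\emph{Step 1: the slope is positive.} I will verify $\log(\lambda(1)/\lambda(0)) > 0$ from \Cref{eq:lr-position}. We have $\lambda(1) - \lambda(0) = \kappa(e^{\beta} - 1)/(1 + \gamma/q) = \kappa\gamma/(1+\gamma/q)$, which is strictly positive whenever $\beta > 0$ and $\kappa > 0$. Both $\lambda(1)$ and $\lambda(0)$ are also positive. For $\lambda(0)$ this holds because $\lambda(0) = 1 - \kappa\,(\gamma/q)/(1 + \gamma/q)$ and $(\gamma/q)/(1+\gamma/q) < 1$, so $\lambda(0) > 0$ even at $\kappa = 1$. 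Hence the logarithm is well defined and strictly positive. The intercept $T\log\lambda(0)$ depends only on $T$, $\kappa$, $\beta$, and $q$, and not on the text beyond $T$. So within the class of texts with a given $T$, $\log\Lambda$ is a strictly increasing affine function of $C$.

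\emph{Step 2: equivalence of the tests.} Because a strictly increasing map preserves the ordering, $\{\Lambda > \lambda^\star\} = \{C > c^\star\}$ for the matching $c^\star$. The same holds for $z$, since $z$ is increasing affine in $C$ at fixed $T$. Randomized boundary cases correspond in the same way. By the Neyman--Pearson lemma, applied conditionally on $T$, the likelihood-ratio test is most powerful at its own size. So at every false-positive rate attained by a threshold on $C$, no detector detects more.

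\emph{Step 3: the threshold is independent of $\beta$ and $\kappa$.} Under the null of Approximation~3, the colors are independent and uniform by (B) together with the factorization. The argument of \Cref{prop:null} then gives $C \sim \mathrm{Binom}(T,1/q)$. This law involves neither $\beta$ nor $\kappa$, so the cutoff $c^\star$ for a given false-positive rate is the same for every $\beta>0$ and $\kappa\in(0,1]$. It follows that one count test is uniformly most powerful across this family of alternatives.

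The main point needing care is the positivity and well-definedness of $\lambda(0)$ at the endpoint $\kappa = 1$, together with making explicit that optimality is conditional on $T$. Texts with different $T$ have different intercepts, so the claim cannot be made across $T$. Everything else is direct from \Cref{eq:lr-affine}.
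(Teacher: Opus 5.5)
Your proposal is correct and follows essentially the same route as the paper's proof: at fixed $T$, read off the positive slope $\log(\lambda(1)/\lambda(0))$ in \Cref{eq:lr-affine}, conclude that thresholding $\Lambda$ and thresholding $C$ (or $z$) flag the same texts, and note that the null $\mathrm{Binom}(T,1/q)$ involves neither $\beta$ nor $\kappa$. Your explicit check that $\lambda(0) = 1 - \kappa\,(\gamma/q)/(1+\gamma/q) > 0$ even at $\kappa = 1$ is a small detail the paper leaves implicit; the paper only notes $\lambda(0) < 1 < \lambda(1)$.
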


\begin{proof}
For fixed $T$, \Cref{eq:lr-affine} is affine in $C$ with slope
$\log(\lambda(1)/\lambda(0)) > 0$. Hence $\Lambda > \lambda_\star$ holds if
and only if $C > C_\star$, for a threshold $C_\star$ determined by
$\lambda_\star$, and the two tests flag the same texts. Under the null of
Approximations~2 and~3, $C \sim \mathrm{Binom}(T, 1/q)$, which does not
involve $\beta$ or $\kappa$. The threshold $C_\star$ that gives a chosen
false-positive rate is therefore the same for every $\beta$ and $\kappa$.
\end{proof}

Because $C$ is discrete, the Neyman--Pearson optimality holds exactly at
the false-positive rates that a threshold on $C$ attains. Other rates need
a randomized threshold.

The proposition does not extend to either-side enforcement. There a token
can be tilted toward either of its two pairs, so its likelihood ratio
depends on both match indicators $M_i$ and $M_{i+\delta}$, and $\log
\Lambda$ is no longer a function of $C$ alone. We use the count test for
both modes.

Approximation~1 fits the question a provider asks about its own output,
namely whether its model applied the bias while generating this text.
Approximations~2 and~3 are (B) and (I), plus one-sided enforcement and the
assumption that every pair is enforced with the same probability
$\kappa$. Under one-sided enforcement the measured $\kappa$ is $0.61$ at
$\beta = 8$ and $0.62$ at $\beta = 6$ (\Cref{tab:e9-screen}). The calibration of \Cref{prop:null} needs neither Approximation~1 nor the enforcement assumptions of Approximation~3, so the calibration does not depend on how the model samples. In practice the threshold is still set per model on unwatermarked text (\Cref{sec:experiments-null}).

\FloatBarrier
\subsection{Token freq{}uencies and the key}
\label{app:proof-unforge}

An attacker who compares token frequencies in watermarked and
unwatermarked text learns the key if the watermark shifts frequencies
differently for different classes. This subsection proves that balanced
tilts cancel in expectation (\Cref{lem:mixture,thm:unforgeable}), bounds
the shift when balance fails (\Cref{prop:leak}), and shows that the
red--green list has no such cancellation (\Cref{prop:greensep}).

The first two results rest on one identity. If a context spreads its
mass evenly over the $q$ classes, then tilting toward each class in turn
and averaging returns the context.

\begin{lemma}[Mixture identity]
\label{lem:mixture}
If $m_s(p) = 1/q$ for every $s$, then $\tfrac{1}{q} \sum_{s=0}^{q-1}
\mathsf{T}_s p \;=\; p$ for every $\beta$.
\end{lemma}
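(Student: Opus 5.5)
The plan is a pointwise computation: fix a token $v$ and evaluate $\tfrac{1}{q}\sum_{s}(\mathsf{T}_s p)(v)$ directly from \Cref{eq:tilt}. The first step is to use the hypothesis $m_s(p) = 1/q$ for every $s$. It makes the normalizer of every tilt the same constant, $1 + \gamma m_s(p) = 1 + \gamma/q = \bar Z$. So the average becomes
\[
\frac{1}{q}\sum_{s=0}^{q-1} (\mathsf{T}_s p)(v) \;=\; \frac{p(v)}{q\,\bar Z}\sum_{s=0}^{q-1} e^{\beta \mathbf{1}[\chi(v) = s]} ,
\]
with the factor $p(v)$ pulled out because it does not depend on $s$.

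The second step evaluates the remaining sum over classes. Exactly one class, $s = \chi(v)$, contributes $e^{\beta} = 1 + \gamma$, and the other $q-1$ classes contribute $1$ each. The sum is therefore $q + \gamma$. Since $q\bar Z = q + \gamma$, the ratio is $1$ and the average equals $p(v)$. This holds for every $v$ and every $\beta$, which is the claim.

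No step is a real obstacle. The only point to watch is that equal normalizers are essential: without (B) the denominators $1+\gamma m_s(p)$ differ and the cancellation fails, which is exactly the case \Cref{prop:leak} treats. The sum over $s$ uses no property of $\chi$ beyond its being a function, so each token is boosted by exactly one of the $q$ tilts.
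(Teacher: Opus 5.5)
Your proof is correct and is the paper's argument. Balance makes every normalizer equal to $1+\gamma/q$, exactly one of the $q$ tilts multiplies $p(v)$ by $e^{\beta}$, and $e^{\beta}+q-1 = q+\gamma = q\bar Z$ cancels the common denominator.
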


\begin{proof}
Under the hypothesis every normalizer equals $1 + \gamma/q = (q +
\gamma)/q$. Fix a token $v$. Exactly one of the $q$ tilts multiplies $p(v)$
by $e^{\beta}$, and the other $q - 1$ leave it unchanged, so
\[
\frac{1}{q} \sum_{s} (\mathsf{T}_s p)(v)
\;=\; \frac{p(v)}{q} \cdot \frac{q}{q + \gamma}\big( e^{\beta} + q - 1
\big) \;=\; p(v),
\]
since $e^{\beta} + q - 1 = q + \gamma$.
\end{proof}

\noindent\textbf{\Cref{thm:unforgeable} (restated).} \emph{Fix a coloring
$\chi$. Under (S), (U), and (B), the generated text has the same distribution
with and without the watermark. Hence $\mathbb{E}[\Delta f(v)] = 0$ for
every $v \in \mathcal{V}$, and any attacker who ranks each token $v$ by
$g(\mathbb{E}[\Delta f(v)])$, for a fixed function $g$, separates the
color classes no better than chance (AUC $\tfrac{1}{2}$).}

Here the expectation is over the sampler and the favored classes of (U),
with the coloring fixed. The proof must compare two whole generation
processes, not one position. The model conditions on every committed token,
so a watermark that changed which tokens appear early would also change the
contexts of later positions. We therefore show that one denoising step has
the same transition law with and without the watermark, given the whole
history, and conclude by induction.

\begin{proof}[Proof of \Cref{thm:unforgeable}]
Fix the coloring $\chi$ and the prompt. Write $h_j = (x^{(0)}, \dots,
x^{(j)})$ for the history of states after $j$ steps, and $P_j$ and $P^0_j$
for the laws of $h_j$ with and without the watermark.

\textit{Step 1: one step has the same transition law.} Fix a history
$h_{j-1}$ with current state $x = x^{(j-1)}$. By (S), the candidates at the
masked positions are drawn independently given $x$ and the favored
classes of the step, from
$\mathsf{T}_{s_i} p_i$ at enforced positions and from $p_i$ elsewhere,
where the contexts $p_i$ depend only on $x$. By (U), the favored classes of
the enforced positions are independent and uniform given $h_{j-1}$ and
$\chi$. Averaging over them, each enforced candidate has distribution
\[
\frac{1}{q} \sum_{s \in \mathbb{Z}_q} \mathsf{T}_s p_i \;=\; p_i ,
\]
where the equality is \Cref{lem:mixture} and is the only place (B) enters.
Because the favored classes are independent across positions, averaging
over them keeps the candidates independent, so given $h_{j-1}$ they have
the joint distribution $\prod_i p_i$, which is the unwatermarked one. By (S) the commit rule is
also the same, so $x^{(j)}$ has the same law given $h_{j-1}$ in both
processes.

\textit{Step 2: induction over steps.} Both processes start from the fully
masked state, so $P_0 = P^0_0$. If $P_{j-1} = P^0_{j-1}$, then Step 1 shows
that the two processes extend each history $h_{j-1}$ by the same law, so
$P_j = P^0_j$. By induction the laws of the full history agree, and in
particular the final state, which is the generated text, has the same
distribution in both processes. This holds for every prompt.

\textit{Step 3: frequencies and ranking.} Every token therefore has the
same expected frequency in both processes, and $\mathbb{E}[\Delta f(v)] =
0$ for every $v$. The score $g(\mathbb{E}[\Delta f(v)]) = g(0)$ therefore
takes the same value on every token. A constant score ties
every pair of tokens from different classes, and with ties counted as one
half, the AUC is $\tfrac{1}{2}$.
\end{proof}

The theorem is a statement about the idealized sampler of (U), in which the
favored class is a fresh draw at every position (\Cref{app:setting}). It
does not say that one key produces unwatermarked text, which would
contradict detection. What it isolates is that each tilt, averaged over a
favored class that is uniform and unrelated to the token's color, leaves
the distribution unchanged. For a real key that satisfies (B), frequency shifts can therefore arise only where the favored class is related to the color of the token
drawn, which happens when the taps' colors are unevenly spread over the
classes or correlated with the color at the tilted position.

The contrast with the red--green list does not come from this theorem.
That list favors the same class at every position, so the favored class
never varies and no sampler of the kind (U) describes approximates it.
\Cref{prop:greensep} shows that its frequency shifts then separate the
green list from the rest.

The theorem also concerns expected frequencies only. An attacker sees a
finite number of texts, whose empirical frequency shifts are noisy and
need not vanish exactly. The theorem does not cover this attacker, and
\Cref{tab:stealth} measures what it recovers.

\paragraph{Frequency shifts when balance fails.}
\Cref{thm:unforgeable} rests on (B). When a context puts more than $1/q$ of
its mass on some class, the tilts no longer cancel exactly and token
frequencies shift. The next proposition gives the shift in closed form and
bounds it by the imbalance.

\begin{proposition}[Imbalance bounds the frequency shift]
\label{prop:leak}
Fix a context $p$ with class masses $m_s = m_s(p)$ and imbalances $\eta_s =
m_s - 1/q$, and let $\phi(x) = 1/(1 + \gamma x)$. For every token $v$ with
$\chi(v) = c$,
\[
\frac{1}{q} \sum_s (\mathsf{T}_s p)(v) - p(v)
\;=\; p(v)\, \frac{\gamma}{q} \Big[ \phi(m_c) - \sum_s m_s \phi(m_s) \Big] ,
\]
and its magnitude is at most $p(v)\, \tfrac{2\gamma^2}{q} \max_s |\eta_s|$.
To first order in the imbalance, the shift is $-p(v)\, \gamma^2 \eta_c /
(q \bar Z^2)$ with $\bar Z = 1 + \gamma/q$, the normalizer under balance.
\end{proposition}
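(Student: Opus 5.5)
The plan is a direct computation on the tilt formula \Cref{eq:tilt}, with the same bookkeeping as in \Cref{lem:mixture}; balance is simply not assumed. Fix $v$ with $\chi(v) = c$. Since $e^{\beta \mathbf{1}[c=s]} = 1 + \gamma \mathbf{1}[c=s]$, each tilt can be written as $(\mathsf{T}_s p)(v) = p(v)\,(1 + \gamma \mathbf{1}[c = s])\,\phi(m_s)$. Averaging over $s$ then gives
\[
\frac{1}{q}\sum_s (\mathsf{T}_s p)(v) \;=\; \frac{p(v)}{q}\Big[\sum_s \phi(m_s) + \gamma\,\phi(m_c)\Big].
\]

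The key algebraic step is the identity $\phi(x) = 1 - \gamma x\,\phi(x)$, which holds because $1/(1+\gamma x) = 1 - \gamma x/(1+\gamma x)$. Summing it over $s$ gives $\sum_s \phi(m_s) = q - \gamma \sum_s m_s \phi(m_s)$. Substituting this into the average and subtracting $p(v)$ cancels the leading $q$, and leaves exactly the closed form
\[
p(v)\,\frac{\gamma}{q}\Big[\phi(m_c) - \sum_s m_s\phi(m_s)\Big].
\]
As a check, under (B) all $\phi(m_s)$ are equal and $\sum_s m_s = 1$, so the bracket vanishes and \Cref{lem:mixture} is recovered.

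For the magnitude bound, use $\sum_s m_s = 1$ to write the bracket as an average of differences:
\[
\phi(m_c) - \sum_s m_s\phi(m_s) \;=\; \sum_s m_s\big(\phi(m_c) - \phi(m_s)\big).
\]
On $[0,1]$ we have $|\phi'(x)| = \gamma/(1+\gamma x)^2 \le \gamma$, so $\phi$ is $\gamma$-Lipschitz there. Also $|m_c - m_s| = |\eta_c - \eta_s| \le 2\max_s|\eta_s|$. Together these give $|\text{bracket}| \le 2\gamma \max_s|\eta_s|$, and multiplying by $p(v)\gamma/q$ yields the stated bound $p(v)\,\tfrac{2\gamma^2}{q}\max_s|\eta_s|$.

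For the first-order statement, expand $\phi$ around $1/q$. Write $\phi_0 = \phi(1/q) = 1/\bar Z$ and note $\phi'(1/q) = -\gamma/\bar Z^2$.
\begin{itemize}
\item Then $\phi(m_c) = \phi_0 - \gamma\eta_c/\bar Z^2 + O(\eta^2)$.
\item Also
\[
\sum_s m_s\phi(m_s) \;=\; \sum_s\big(\tfrac1q + \eta_s\big)\big(\phi_0 + \phi'(1/q)\,\eta_s\big) + O(\eta^2) \;=\; \phi_0 + \big(\phi_0 + \phi'(1/q)/q\big)\sum_s \eta_s + O(\eta^2).
\]
Since $\sum_s \eta_s = 0$ (the masses sum to one), this is $\phi_0 + O(\eta^2)$.
\item So the bracket is $-\gamma\eta_c/\bar Z^2 + O(\eta^2)$, and the shift is $-p(v)\,\gamma^2\eta_c/(q\bar Z^2)$ to first order.
\end{itemize}

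The only step that needs care is the identity $\phi = 1 - \gamma x\phi$. It is what makes the closed form come out clean, and it exposes the $\sum_s m_s = 1$ structure that both the Lipschitz bound and the vanishing first-order term in $\sum_s m_s\phi(m_s)$ rely on. The rest is routine.
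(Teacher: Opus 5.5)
Your proposal is correct and matches the paper's proof step for step. Both use the same factorization of each tilt as $p(v)(1+\gamma\mathbf{1}[c=s])\phi(m_s)$, the identity $\phi(x) = 1 - \gamma x\phi(x)$ to reach the closed form, the $\gamma$-Lipschitz bound on $[0,1]$ combined with $|\eta_c - \eta_s| \le 2\max_s|\eta_s|$, and the expansion around $1/q$ in which $\sum_s \eta_s = 0$ removes the first-order term of $\sum_s m_s\phi(m_s)$; writing out the coefficient $\phi_0 + \phi'(1/q)/q$ only makes the paper's Step~3 a little more explicit.
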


\begin{proof}
\textit{Step 1: the identity.} The tilt toward $v$'s own class multiplies
$p(v)$ by $(1 + \gamma)\phi(m_c)$, and the tilt toward any other class $s$
multiplies it by $\phi(m_s)$, so
\[
\frac{1}{q} \sum_s (\mathsf{T}_s p)(v)
\;=\; \frac{p(v)}{q} \Big[ (1 + \gamma) \phi(m_c) + \sum_{s \neq c} \phi(m_s)
\Big]
\;=\; \frac{p(v)}{q} \Big[ \gamma \phi(m_c) + \sum_{s} \phi(m_s) \Big] .
\]
From $\phi(x)(1 + \gamma x) = 1$ and $\sum_s m_s = 1$,
\[
\phi(m_s) \;=\; 1 - \gamma\, m_s \phi(m_s)
\quad\Longrightarrow\quad
\sum_s \phi(m_s) \;=\; q - \gamma \sum_s m_s \phi(m_s) .
\]
Substituting into the bracket and subtracting $p(v)$ proves the identity.

\textit{Step 2: the bound.} We compare $\phi$ at $m_c$ with $\phi$ at each
$m_s$ through the mean value theorem. Since $\sum_s m_s = 1$,
\[
\phi(m_c) - \sum_s m_s \phi(m_s) \;=\; \sum_s m_s \big( \phi(m_c) -
\phi(m_s) \big) .
\]
On $[0, 1]$ we have $|\phi'(x)| = \gamma/(1 + \gamma x)^2 \leq \gamma$, so
\[
|\phi(m_c) - \phi(m_s)| \;\leq\; \gamma\, |\eta_c - \eta_s| \;\leq\;
2\gamma \max_s |\eta_s| .
\]
Averaging over $s$ with weights $m_s$ and multiplying by $p(v)\gamma/q$
gives the bound.

\textit{Step 3: the first-order term.} Expanding around balance,
\[
\phi(m_s) \;=\; \phi(1/q) + \phi'(1/q)\, \eta_s + O(\eta^2),
\qquad
\phi'(1/q) \;=\; -\gamma/\bar Z^2 .
\]
Since $\sum_s \eta_s = 0$, we get $\sum_s m_s \phi(m_s) = \phi(1/q) +
O(\eta^2)$, so the bracket equals
\[
\phi'(1/q)\, \eta_c + O(\eta^2) \;=\; -\gamma\, \eta_c / \bar Z^2 +
O(\eta^2) .
\]
Multiplying by $p(v)\gamma/q$ gives the stated term.
\end{proof}

The shift is first order in the imbalance, and for small $\gamma$ it is
second order in $\gamma$. To first order, it depends only on the imbalance
of $v$'s own class. A token in an over-represented class ($\eta_c > 0$)
becomes less frequent under the watermark, and a token in an under-represented class more frequent. At the default $\beta = 5$,
where $\gamma \approx 147$, the two coefficients are
\[
\frac{2\gamma^2}{q} \;\approx\; 1.4 \times 10^{4}
\quad\text{(uniform bound)},
\qquad
\frac{\gamma^2}{q \bar Z^2} \;\approx\; 2.9
\quad\text{(first order)} .
\]
The uniform bound exceeds $p(v)$ at this bias, so the first-order
coefficient is the informative one. It says that a class imbalance of $0.06$
moves a token's expected frequency in one context by about $17\%$ of
$p(v)$, in a direction set by the key. The evidence that this shift does
not reveal the key in practice comes from the key-recovery experiment of
\Cref{tab:stealth}.

\paragraph{The red--green list.}
The red--green list tilts every position toward the same token set $G
\subset \mathcal{V}$, so there is no average over favored classes and
nothing cancels. To isolate this effect, we hold the contexts fixed and
compare the tilted and untilted distributions in each one. Like the tilt toward a
class in \Cref{eq:tilt}, the tilt toward $G$ multiplies the probability of
each token in $G$ by $e^{\beta}$ and renormalizes. Let ${m_G(p) =
\sum_{v \in G} p(v)}$ be the total probability that $p$ assigns to tokens in
$G$. The tilt is
\[
(\mathsf{T}_G p)(v) \;=\; \frac{p(v)\, e^{\beta \mathbf{1}[v \in G]}}{1 +
\gamma\, m_G(p)} .
\]
For contexts $p_1, \dots, p_H$, the average shift is
\[
\Delta_G f(v) \;=\; \frac{1}{H} \sum_{h=1}^{H} \Big[ (\mathsf{T}_G
p_h)(v) - p_h(v) \Big] .
\]

\begin{proposition}[The red--green list shifts frequencies toward its key]
\label{prop:greensep}
If $0 < m_G(p_h) < 1$ for every context, and every token has $p_h(v) > 0$
in at least one context, then
$\Delta_G f(v) > 0$ for every $v \in G$ and $\Delta_G f(v) < 0$ for every $v
\notin G$. Ranking tokens by $\Delta_G f$ therefore separates $G$ from its
complement with AUC $1$.
\end{proposition}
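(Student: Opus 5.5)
The plan is to compute the per-context shift in closed form, read off its sign from whether $v$ lies in $G$, and then average over contexts.

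Fix one context $p = p_h$ and write $m = m_G(p) \in (0,1)$. For $v \in G$ the tilt gives
\[
(\mathsf{T}_G p)(v) - p(v) \;=\; p(v)\Big(\frac{1+\gamma}{1+\gamma m} - 1\Big) \;=\; p(v)\,\frac{\gamma(1-m)}{1+\gamma m}.
\]
For $v \notin G$ it gives
\[
(\mathsf{T}_G p)(v) - p(v) \;=\; p(v)\Big(\frac{1}{1+\gamma m} - 1\Big) \;=\; -p(v)\,\frac{\gamma m}{1+\gamma m}.
\]
Since $\beta > 0$ gives $\gamma > 0$, and $0 < m < 1$ by hypothesis, both coefficients $\gamma(1-m)/(1+\gamma m)$ and $\gamma m/(1+\gamma m)$ are strictly positive. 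So in every context the shift is $\geq 0$ for tokens of $G$ and $\leq 0$ for the others. It is strict exactly when $p_h(v) > 0$.

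Next we average over $h$. Each summand of $\Delta_G f(v)$ has the sign fixed by membership in $G$, and the second hypothesis provides at least one context with $p_h(v) > 0$. In that context the summand is strictly signed, so the average is strictly positive for $v \in G$ and strictly negative for $v \notin G$.

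For the AUC, every token of $G$ then scores above every token of the complement. The ranking by $\Delta_G f$ therefore orders each cross pair correctly with no ties, and the AUC is $1$ (assuming $G$ and its complement are both nonempty, which $0 < m_G < 1$ forces).

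There is no real obstacle. The only point that needs care is strictness: a token with $p_h(v) = 0$ in all contexts would have zero shift, and that is exactly why the second hypothesis is needed. The condition $0<m<1$ is what excludes the degenerate contexts in which the tilt changes nothing.
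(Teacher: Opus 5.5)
Your proof is correct and follows the paper's argument. The paper likewise observes that $Z_h = 1+\gamma m_G(p_h)$ lies strictly between $1$ and $e^{\beta}$, so the per-context shift has a sign fixed by membership in $G$ and is nonzero whenever $p_h(v)>0$, and it then averages over contexts to get AUC $1$. Your explicit coefficients $\gamma(1-m)/(1+\gamma m)$ and $\gamma m/(1+\gamma m)$ state the same facts, and your remark that $0<m_G<1$ forces both $G$ and its complement to be nonempty is a small point the paper leaves implicit.
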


\begin{proof}
In context $p_h$ the normalizer is $Z_h = 1 + \gamma\, m_G(p_h)$, which
lies strictly between $1$ and $1 + \gamma = e^{\beta}$ because $0 <
m_G(p_h) < 1$. The shift in context $h$ is
\[
(\mathsf{T}_G p_h)(v) - p_h(v) \;=\;
\begin{cases}
p_h(v)\, \big( e^{\beta}/Z_h - 1 \big) & \text{if } v \in G, \text{ with }
e^{\beta}/Z_h > 1,\\[2pt]
p_h(v)\, \big( 1/Z_h - 1 \big) & \text{if } v \notin G, \text{ with }
1/Z_h < 1.
\end{cases}
\]
Each shift is therefore nonnegative for $v \in G$ and nonpositive for $v
\notin G$, and it is nonzero in every context with $p_h(v) > 0$. Averaging
over contexts, $\Delta_G f(v) > 0$ on $G$ and $\Delta_G f(v) < 0$ off $G$.
Every token of $G$ then ranks above every token outside $G$, so every pair
from different sets is ordered correctly and the AUC is $1$.
\end{proof}

Holding the contexts fixed isolates the direct effect of the tilt. In real
generation the tilt also changes later contexts, which this statement does
not model. \Cref{fig:steal} shows that the effect is visible in practice.

\FloatBarrier
\subsection{Edit robustness}
\label{app:proof-robust}

\Cref{thm:robust} models an attacker who substitutes tokens at random. It
uses only the following attack model and none of (S), (U), (U$'$), (B), or (I).
\begin{itemize}
\item The attacker draws corruption indicators $X_j \sim
\mathrm{Bern}(\rho)$ independently across positions and independently of
the text.
\item A corrupted token's color is replaced by a variable $\xi_j$ that is
uniform on $\mathbb{Z}_q$ and independent of everything else.
\item The attacker neither inserts nor deletes tokens, so every position
keeps its index.
\item The residue $b$ is fixed, and $\gcd(a_\delta, q) = 1$ for every
$\delta \in \mathcal{D}$.
\end{itemize}

The proof counts which checksums an edit touches. A checksum reads
$|\mathcal{D}| + 1$ tokens. If none of them is corrupted, the checksum
behaves as on clean text. If any is corrupted, it matches at exactly the
chance rate. The excess over chance therefore survives with the probability
that all $|\mathcal{D}| + 1$ tokens are untouched.

\noindent\textbf{\Cref{thm:robust} (restated).} \emph{Under the attack
model above, $\mathbb{E}[z_{\mathrm{att}}] = (1 - \rho)^{|\mathcal{D}| +
1}\, \mathbb{E}[z_{\mathrm{clean}}]$, where $z_{\mathrm{att}}$ and
$z_{\mathrm{clean}}$ are the scores of the attacked and the clean text.}

\begin{proof}[Proof of \Cref{thm:robust}]
Fix a scored position $i$ and the set of positions its checksum reads,
\[
A_i \;=\; \{i\} \cup \{i - \delta : \delta \in \mathcal{D}\} ,
\]
which has $|\mathcal{D}| + 1$ elements. Condition on the corruption pattern
$(X_j)_{j \in A_i}$, and write $\chi'$ for the colors after the attack. We show
that
\[
\Pr\big(M_i = 1 \mid (X_j)_{j \in A_i}\big) \;=\;
\begin{cases}
\mu_i & \text{if no position in $A_i$ is corrupted},\\
1/q & \text{otherwise}.
\end{cases}
\]

\textit{Case 1: no position in $A_i$ is corrupted.} This has probability
$(1-\rho)^{|\mathcal{D}|+1}$, and the residue is computed on clean tokens.
The corruption pattern is independent of the text, so the conditional match
probability is the clean $\mu_i$.

\textit{Case 2: at least one position in $A_i$ is corrupted.} The attacked
residue is
\[
R_i \;=\; \Big( \chi'(t_i) + \sum_{\delta \in \mathcal{D}} a_\delta\,
\chi'(t_{i-\delta}) \Big) \bmod q .
\]
Pick one corrupted position $j \in A_i$. Its term in $R_i$ is $\xi_j$ if $j
= i$, and $a_\delta \xi_j$ if $j = i - \delta$. In the second case
$a_\delta \xi_j$ is uniform, because multiplication by $a_\delta$ permutes
$\mathbb{Z}_q$ when $\gcd(a_\delta, q) = 1$. Write $R_i = (w_j + y_j) \bmod
q$, where $w_j$ is this term and $y_j$ is the sum of the remaining terms,
which involves only the text and the replacement colors of the other
positions. The term $w_j$ is uniform and independent of $y_j$. The sum of a
uniform variable and an independent variable on $\mathbb{Z}_q$ is uniform,
so $R_i$ is uniform and $\Pr(R_i = b) = 1/q$.

\textit{Averaging.} Averaging over the corruption pattern and summing over
positions,
\[
\mathbb{E}[C_{\mathrm{att}}] \;=\; \sum_{i \in \mathcal{T}}
\Big[ (1-\rho)^{|\mathcal{D}|+1} \mu_i +
\big(1 - (1-\rho)^{|\mathcal{D}|+1}\big)\tfrac{1}{q} \Big]
\;=\; \tfrac{T}{q} + (1-\rho)^{|\mathcal{D}|+1}\, T \varepsilon .
\]
Substitutions do not change $T$, so the same $\sqrt{T}\,\sigma_0$
normalizes both scores. Applying \Cref{prop:ez} to the clean and attacked
texts gives $\mathbb{E}[z_{\mathrm{att}}] = (1-\rho)^{|\mathcal{D}|+1}\,
\mathbb{E}[z_{\mathrm{clean}}]$.
\end{proof}

The condition $\gcd(a_\delta, q) = 1$ is used only for a corrupted tap. If
$g = \gcd(a_\delta, q) > 1$, then $a_\delta \xi_j$ takes only the $q/g$
multiples of $g$ in $\mathbb{Z}_q$, and a checksum whose only corrupted
position is that tap need not match at rate $1/q$. When $q$ is prime, every
nonzero coefficient satisfies the condition.

\paragraph{Insertions and deletions.}
These edits shift positions, so an edit at position $j$ changes which
tokens form the pairs that span $j$. The theorem does not model this. For
these edits it predicts the direction of the effect (\Cref{fig:taps}) but
not its size.

\paragraph{Synonyms.}
The theorem gives a corrupted token a uniformly random color. The
semantic-quantile coloring is designed so that a synonym keeps its color
more often than chance. To the extent that it does, the theorem
underestimates how much of the score survives synonym substitution.

\FloatBarrier
\section{Experimental details}
\label{app:protocol}

This appendix gives the settings shared by all experiments, the two
perplexity protocols, the attacks, and the rule that selected TANGO's
defaults. Settings that differ for a single experiment are stated in the
caption of its table.

\paragraph{Decoding.}
All methods generate $128$ tokens in $128$ denoising steps at temperature
$1.0$ with no top-$p$ truncation. On LLaDA every method uses classifier-free guidance (CFG) at scale $2.0$, that is, logits $\ell_{\mathrm{u}} + 2(\ell_{\mathrm{c}} - \ell_{\mathrm{u}})$ from the unconditional and conditional logits, which is $w = 1$ in the notation of \citet{nie2025llada}. The Gumbel rule keys a position on its preceding token when that token is already unmasked and otherwise samples the position without the key, and its detector scores every token. Dream's sampler has no CFG, so Dream runs use
a plain forward pass. Except for the published diffusion watermarks of \Cref{app:results-external}, which are paired with their own controls, these settings are the same for every method on a model, so perplexities are comparable within a model. Coloring and scoring
use each model's full output vocabulary, the rows of its output layer ($V = 126{,}464$ for LLaDA and
$152{,}064$ for Dream).

\paragraph{Perplexity.}
Unless stated otherwise, the main text reports perplexity under Qwen2.5-7B-Instruct in bfloat16,
conditioned on the prompt. We pool the
continuations into one perplexity, the exponential of their total negative
log-likelihood divided by their total number of tokens. Appendix columns
marked PPL GPT-2 report the GPT-2-large \citep{radford2019gpt2}
perplexity of the continuation alone, averaged over texts, and columns
marked PPL Qwen report the pooled Qwen perplexity.

\paragraph{Degenerate continuations.}
Perplexity rewards repetition, because each repeat of a phrase is
near-certain given the earlier copies. We call a continuation
\emph{degenerate} when more than $20\%$ of its whitespace 4-grams repeat an
earlier 4-gram of the same text. In practice this means a phrase or a
number repeated up to the length limit.
\Cref{tab:headline,tab:e7,tab:e4-dream} score perplexity over the
non-degenerate continuations only, and every other appendix table scores
all continuations. On LLaDA the choice matters little. At the settings of
\Cref{tab:headline,tab:e7}, at most $6\%$ of texts are degenerate. The
two perplexities differ by at most $0.5$ at the settings of
\Cref{tab:headline} and by at most $1.3$ across \Cref{tab:e7}. The
ablation settings of \Cref{app:design} reach $12.5\%$. On Dream the
choice matters more. There,
$20\%$ of unwatermarked texts, $31\%$ of TANGO texts, and $62\%$ of
red--green texts are degenerate. Over all continuations, the perplexities
behind \Cref{tab:headline} are $4.1$ (unwatermarked), $7.0$ (TANGO), and
$7.4$ (red--green list) on LLaDA, and $3.5$, $6.2$, and $4.9$ on Dream.
Repeated phrases have low perplexity, so scoring all continuations favors
the method that degenerates most often, which on Dream is the red--green
list.

\paragraph{Attacks and metrics.}
The attacks are word-level edits. They delete $10\%$ or $30\%$ of the words (del10,
del30), replace $30\%$ or $50\%$ of the words with WordNet synonyms (syn30,
syn50), or insert words drawn at random from the text itself, amounting to
$20\%$ of the text (ins20). The deletion and synonym attacks are adapted
from the MarkLLM toolkit \citep{pan2024markllm}. The
main text reports del30, syn30, and ins20. TPR@1\%FPR sets the threshold at
the $99$th percentile of the detector scores of the unwatermarked texts
listed in each caption. With fewer than $100$ such texts, this threshold
is close to their maximum score, and these texts cannot certify a
false-positive rate as low as $1\%$. Bootstrap intervals use $1000$
resamples of the generations.

\paragraph{How the defaults were chosen.}
The generation settings were chosen in two stages. First, the ablations of
\Cref{app:design}, run with one-sided enforcement at $\beta = 8$, fixed the
coloring. We use $q = 3$ because $q = 2$ detects less under edits and
larger $q$ costs perplexity (\Cref{tab:e1}). We use a single tap because
every extra tap exposes the checksum to more edits (\Cref{thm:robust},
\Cref{tab:e3}). We use $b = 0$ because it has the lowest null rate on
natural text under the default key (\Cref{app:residue-diag}).

Second, a selection experiment chose the enforcement side and the bias.
It used $48$ held-out C4 prompts (prompts $400$ to $447$, disjoint from the
$200$ evaluation prompts), and its selection rule was fixed before it ran.
The rule compared each setting with a reference, one-sided enforcement at
$\beta = 8$, which biases a position only when its left tap is unmasked.
A setting was eligible if its clean TPR was at least
$0.98$ and its del30 TPR was at least the reference's. On $48$ prompts the
first condition requires detecting every text, so $47$ of $48$, printed as
$0.98$, does not qualify. An eligible setting was then excluded if its
success at forgery without a language model (\Cref{tab:steal-pairs})
exceeded the reference's by more than $0.10$. Among the remaining settings,
the rule took the one with the lowest Qwen perplexity. It selected
either-side enforcement at $\beta = 5$ (Qwen perplexity $7.0$).
\Cref{tab:e9-screen} lists all $14$ settings, and \Cref{app:screen}
discusses them. On the $200$ evaluation prompts the selected
setting detects $68\%$ of deleted texts, against $79\%$ on the $48$
selection prompts, a gap within the noise of $48$ prompts. For this reason
\Cref{tab:e7} compares methods over the full range of $\beta$ and does not
rely on one operating point.

\begin{algorithm}[tbp]
\caption{One denoising step of TANGO generation. On Dream, which has no
CFG, the logits come from a plain forward pass. $a_\delta^{-1}$ is the
inverse of $a_\delta$ modulo $q$, which exists because $\gcd(a_\delta, q) =
1$. One-sided enforcement omits lines 5 and 6.}
\label{alg:bcc}
\begin{algorithmic}[1]
\Require key-derived coloring $\chi$, tap lag $\delta$, coefficient $a_\delta$ with $\gcd(a_\delta, q) = 1$, residue $b$; logit bias $\beta$; temperature $\tau$; unmasking schedule $K(\cdot)$
\State $\boldsymbol{\ell} \gets$ CFG logits for all masked positions
       \Comment{one bidirectional forward pass}
\For{each masked position $i$ in the step's candidate set}
  \If{$t_{i-\delta}$ is unmasked}
    \State $s_i \gets \big(b - a_\delta\, \chi(t_{i-\delta})\big) \bmod q$
           \Comment{left tap at lag $\delta$ unmasked}
  \ElsIf{$t_{i+\delta}$ is unmasked}
    \State $s_i \gets a_\delta^{-1}\big(b - \chi(t_{i+\delta})\big) \bmod q$
           \Comment{right neighbor at lag $\delta$ unmasked}
  \Else
    \State \textbf{continue} \Comment{neither unmasked, so no bias}
  \EndIf
  \State $\ell_{i,v} \gets \ell_{i,v} + \beta$ \textbf{ for all } $v$
         \textbf{ with } $\chi(v) = s_i$
\EndFor
\State sample one candidate per masked position at temperature $\tau$;
       unmask the $K$ most confident (schedule on remaining steps)
\end{algorithmic}
\end{algorithm}

\paragraph{Computational cost.}
Generation adds one logit update over the vocabulary per biased position per denoising step, an $O(V)$ cost next to the $O(Vd)$ output layer of the
forward pass, where $d$ is the embedding dimension. The coloring is built
once per key. Building it projects the $V$ token embeddings onto the key
direction at cost $O(Vd)$ and
sorts them at cost $O(V \log V)$. Detection then scores a text in one
$O(L)$ pass over its $L$ tokens.

\paragraph{Software, models, and data.}
The code runs on Python 3.12 with a CUDA 12 build of PyTorch and
Transformers at least 4.46 and below 4.57. Perplexity is scored with
Qwen/Qwen2.5-7B-Instruct and gpt2-large. The removal attack of
\Cref{app:results} rewrites words with distilroberta-base, and the
paraphrase translates with Helsinki-NLP/opus-mt-en-zh and opus-mt-zh-en.
The deletion and synonym attacks are adapted from MarkLLM, and synonyms
come from NLTK's WordNet. Prompts and natural texts come from the processed C4 file distributed with MarkLLM. The evaluation prompts are its first $200$ records, and the
Dream runs use the first $100$. The held-out selection prompts are records
$400$ to $447$, and the forging attacker of \Cref{tab:forge-fluent}
generates on records $200$ to $299$. Bootstrap intervals use seed $0$. The
published diffusion watermarks run from their public code at pinned
commits.

\FloatBarrier
\raggedbottom
\section{Calibration of the detector on unwatermarked text}
\label{sec:experiments-null}

The threshold $z_\star$ controls false positives only if the null of
\Cref{prop:null} describes unwatermarked text. That proposition assumes
that the scored token's color is uniform and independent of its tap's
color, and that matches at different positions are independent, which (B)
and (I) idealize. This appendix checks how far unwatermarked text for both models departs from these assumptions, and where the departure comes
from. It supports the per-model calibration of the threshold in
\Cref{sec:method-coloring,sec:method-detection}.

\subsection{LLaDA}
On LLaDA the fixed threshold $z > 4$ flags no unwatermarked text. We score
two kinds of unwatermarked text, $500$ texts each. Text of
uniformly random tokens satisfies the hypotheses of \Cref{prop:null} by
construction, and its score is close to standard normal (mean $-0.07$,
standard deviation $0.99$). In natural C4 text the colors of nearby tokens are
correlated (\Cref{app:residue-diag}). Its score is shifted left (mean
$-0.46$, standard deviation $1.08$), so the threshold flags
natural text less often than the normal tail predicts. With $0$ of $500$ texts above the threshold, the exact
two-sided $95\%$ Clopper--Pearson upper bound on the false-positive rate is
$0.74\%$ \citep{clopper1934}.

\FloatBarrier
\subsection{Dream}
On Dream the random-token null is again close to standard normal, but
natural text satisfies the checksum more often than chance. Its mean
score is $+1.07$, and $1.00\%$ of $300$ natural texts score above $4$
(\Cref{tab:e6-dream}). Because random tokens are calibrated, the excess
comes from structure in natural text. The likely source is the correlation
between the colors of neighboring tokens, which \Cref{app:residue-diag}
measures on LLaDA.

Building the coloring from whitened embeddings reduces the excess. ZCA
whitening \citep{kessy2018whitening} linearly transforms the embeddings so
that their dimensions are uncorrelated with unit variance, and we call the
coloring built from them the \emph{whitened coloring}. On Dream it lowers
the mean natural score from $1.07$ to $0.67$ and the rate above $z = 4$
from $1.00\%$ to $0.33\%$. Whitening barely changes detection. With one-sided enforcement at $\beta = 8$ ($n = 100$), del30 TPR
is $0.82$ with and without it, and syn30 and ins20 differ by $0.01$. We decide per model whether to whiten. The defaults on both models
do not whiten, so the Dream rows of \Cref{tab:headline,tab:e4-dream} use
the unwhitened coloring. Some LLaDA diagnostics in
\Cref{app:results,app:design} use the whitened coloring, as their captions
state. A new model needs its
threshold set on its own unwatermarked text.

\begin{table}[htbp]
\centering
\caption{Whitening lowers the score of unwatermarked natural text on Dream,
and random-token text stays calibrated with or without it. $300$ texts per
row. KS is the Kolmogorov--Smirnov
distance to $\mathcal{N}(0,1)$.}
\label{tab:e6-dream}
\small
\begin{tabular}{llrrrr}
\toprule
coloring & null source & mean $z$ & std & KS & FPR $z{>}4$ \\
\midrule
unwhitened & natural        & $1.07$ & 1.10 & 0.403 & 1.00\% \\
unwhitened & random tokens  & $0.02$ & 1.04 & 0.047 & 0.00\% \\
ZCA        & natural        & $0.67$ & 1.17 & 0.274 & 0.33\% \\
ZCA        & random tokens  & $-0.05$ & 1.00 & 0.057 & 0.00\% \\
\bottomrule
\end{tabular}
\end{table}

\FloatBarrier
\subsection{Correlated colors and the choice of residue}
\label{app:residue-diag}

Natural text departs from the hypotheses of \Cref{prop:null} in two ways,
and both make the null depend on the residue $b$, a choice the theory says
should not matter. First, the classes have equal numbers of tokens but not
equal frequencies (the class masses in \Cref{tab:residue-diag}). Second, tokens that occur near each
other have correlated embeddings and therefore correlated colors. Both
effects enter through the $q \times q$ table $P(u, w)$ of how often a tap
of color $u$ precedes a token of color $w$. A position matches residue $b$
when $w = (b - a_\delta u) \bmod q$, so each residue sums a different set
of $q$ cells,
\[
\Pr(M_i = 1) \;=\; \sum_{u \in \mathbb{Z}_q} P\big(u, (b - a_\delta u)
\bmod q\big) ,
\qquad
\text{which equals}\quad \sum_{u \in \mathbb{Z}_q} \pi_u\,
\pi_{(b - a_\delta u) \bmod q}
\]
when the two colors are independent with class masses $\pi_0, \dots,
\pi_{q-1}$. With unequal masses this rate already differs across
residues. For the natural-text masses of \Cref{tab:residue-diag} it is
$0.329$, $0.338$, and $0.333$ for $b = 0, 1, 2$. Correlated colors move
the rates further.

\Cref{tab:residue-diag} measures these rates for the default key on $2000$
natural C4 texts and on the $200$ unwatermarked LLaDA continuations of
\Cref{tab:headline}. On natural text at lag $1$, each cell of $P(u, w)$ differs from the product of its row and column frequencies by up to
$0.013$, and the null mean score ranges from $-0.93$ to $+0.98$ across
residues. At the default lag $2$ the difference is at most $0.005$ per
cell, and the range of null means is less than half as wide. This is why
the default tap skips the adjacent token.

The residue changes the null more than the watermarked score
(\Cref{tab:e2-lag2}). The three residues give mean watermarked scores
within $0.5$ of each other but null means that differ by up to $0.7$, and
TPR at a fixed false-positive rate depends on where the null sits. The
residue with the lowest null is a property of the key. Across eight fresh
keys, it was $b = 2$ for six, $b = 0$ for one, and $b = 1$ for one. The
residue should therefore be chosen, or the threshold calibrated, per key on
unwatermarked text. A tap-keyed residue, set by a keyed hash of the tap
color, avoids choosing $b$ and lowers the null mean to $-0.93$. It detects
slightly fewer unedited texts than $b = 0$ ($0.97$ against $1.00$) and more
deleted texts ($0.91$ against $0.79$).

\begin{table}[htbp]
\centering
\caption{Unwatermarked text satisfies each residue at a different rate
under the default coloring ($q = 3$). \emph{Class mass} is the fraction of the text's tokens in each color class, an estimate of the average of $m_s$ over positions. The token counts of the classes differ by at most one.
\emph{Null rate} is the fraction of scored positions whose color pair
satisfies the checksum for residue $b$ (chance $1/3$). \emph{Null mean $z$}
is the mean detector score per text. \emph{Natural} is $2000$ C4 texts, and
\emph{model} is the $200$ unwatermarked LLaDA continuations of
\Cref{tab:headline}.}
\label{tab:residue-diag}
\small
\begin{tabular}{llccc}
\toprule
text & lag & class mass ($0/1/2$) & null rate ($b = 0/1/2$) & null mean $z$ ($b = 0/1/2$) \\
\midrule
natural & 1 & $0.317/0.291/0.392$ & $0.302/0.366/0.332$ & $-0.93/{+0.98}/{-0.05}$ \\
natural & 2 & $0.317/0.291/0.392$ & $0.319/0.341/0.340$ & $-0.42/{+0.24}/{+0.19}$ \\
model   & 1 & $0.326/0.263/0.411$ & $0.298/0.362/0.340$ & $-0.83/{+0.68}/{+0.15}$ \\
model   & 2 & $0.326/0.263/0.411$ & $0.303/0.358/0.339$ & $-0.71/{+0.58}/{+0.13}$ \\
\bottomrule
\end{tabular}
\end{table}

\begin{table}[htbp]
\centering
\caption{The residues differ in the null score more than in the
watermarked score. LLaDA, lag $2$, $a_2 = 1$, one-sided enforcement at $\beta = 8$, $n = 100$. The clean and attack columns are TPR@1\%FPR with the $100$ unwatermarked continuations of the same prompts as negatives. \emph{Mean $z$} is the mean score of unedited watermarked
text, and \emph{null mean $z$} the mean score of $500$ natural texts (the first $500$ of the $2000$ texts of \Cref{tab:residue-diag}).
\emph{Tap-keyed} sets the residue by a keyed hash of the tap color.}
\label{tab:e2-lag2}
\small
\begin{tabular}{lrrrrrrrr}
\toprule
residue & AUROC & clean & mean $z$ & null mean $z$ & del30 & syn30 & ins20 & PPL GPT-2 \\
\midrule
$b = 0$ & 1.000 & 1.00 & 6.33 & $-0.46$ & 0.79 & 0.92 & 0.93 & 27.6 \\
$b = 1$ & 0.999 & 0.97 & 6.59 & $+0.24$ & 0.32 & 0.58 & 0.68 & 26.1 \\
$b = 2$ & 1.000 & 1.00 & 6.80 & $+0.22$ & 0.76 & 0.98 & 0.99 & 25.2 \\
tap-keyed & 0.991 & 0.97 & 6.02 & $-0.93$ & 0.91 & 0.93 & 0.97 & 26.9 \\
\bottomrule
\end{tabular}
\end{table}

\FloatBarrier
\section{Full results}
\label{app:results}

This appendix gives the full tables behind
\Cref{sec:experiments-headline,sec:experiments-stealth}, the Dream
results (\Cref{app:dream}), and the attacks and robustness checks that the main text only
mentions. Unless a
caption says otherwise, results are on LLaDA with the $200$ C4 evaluation
prompts, the attack columns are TPR@1\%FPR under del30, syn30, and ins20,
and the negatives for TPR@1\%FPR are the $200$ unwatermarked LLaDA
continuations of the same prompts (\Cref{app:protocol}). Several
experiments use \emph{one-sided} enforcement at $\beta = 8$, which biases a
position only when its left tap is unmasked. This is the reference setting
of the selection experiment (\Cref{app:protocol}). Some also use the
whitened coloring (\Cref{sec:experiments-null}). Each caption states its
configuration.

\FloatBarrier
\subsection{Key recovery and forgery from token freq{}uencies}

The frequency attacker of \Cref{sec:experiments-stealth} ranks single tokens by how much
more often they appear in watermarked than in unwatermarked text.
\Cref{tab:stealth} reports how much the token frequencies shift, how well
the shift predicts the key, and how often forgeries built from the
predicted key pass the detector. Against TANGO the predicted key is at
chance and no forgery passes. Against the red--green list the key is
recovered with AUC $0.817$ and every forgery passes. The table uses
one-sided enforcement at $\beta = 8$. On the either-side texts of
\Cref{tab:headline}, key recovery against TANGO is also $0.504$.
\Cref{fig:steal} shows that the red--green list's key becomes easier to
recover as the attacker collects more texts, while TANGO's stays at chance,
consistent with \Cref{thm:unforgeable}.

\begin{table}[htbp]
\centering
\caption{Token frequencies reveal the red--green list's key but not
TANGO's. $200$ texts per method, TANGO with one-sided enforcement at $\beta
= 8$, red--green list at bias $5$. \emph{Frequency shift} is the
total-variation distance between watermarked and unwatermarked token
frequencies (sampling noise alone gives $0.210$). \emph{Key recovery} is
the AUC of predicting the coloring from per-token frequency shifts.
\emph{Forgery success} is the fraction of texts generated from the
predicted key that the detector accepts at $z > 4$.}
\label{tab:stealth}
\small
\begin{tabular}{@{}lrrr@{}}
\toprule
method & frequency shift (TV) & key recovery AUC & forgery success \\
\midrule
TANGO      & \textbf{0.217} & \textbf{0.504} & \textbf{0.00} \\
Red--green list & 0.340 & 0.817 & 1.00 \\
\bottomrule
\end{tabular}
\end{table}

\begin{figure}[htbp]
\centering
\begin{tikzpicture}
\begin{axis}[
  width=0.72\linewidth, height=0.48\linewidth,
  font=\small,
  xlabel={number of texts $N$ the attacker holds},
  ylabel={key-recovery AUC},
  xmode=log, log basis x=10,
  xmin=22, xmax=110, ymin=0.4, ymax=1.0,
  xtick={25,50,100},
  xticklabels={$25$, $50$, $100$},
  ytick={0.4,0.5,0.6,0.7,0.8,0.9,1.0},
  grid=both,
  grid style={line width=0.3pt, draw=gray!18},
  major grid style={line width=0.4pt, draw=gray!28},
  axis lines=left, tick align=outside,
  legend style={at={(0.03,0.97)}, anchor=north west,
                font=\footnotesize, draw=gray!40, rounded corners=2pt,
                fill=white, fill opacity=0.9, text opacity=1},
  legend cell align=left,
  every axis plot/.append style={line width=1.1pt},
]
\addplot[refline, densely dotted, line width=1pt, forget plot]
    coordinates {(22,0.5)(110,0.5)};
\addlegendimage{refline, densely dotted, line width=1pt}
\addplot[color=baseline, mark=square*, mark size=2.4pt,
         mark options={fill=baseline, draw=baseline}]
    coordinates {(25,0.738)(50,0.782)(100,0.806)};
\addplot[tango, mark=*, mark size=2.4pt,
         mark options={fill=tango, draw=tango}]
    coordinates {(25,0.499)(50,0.499)(100,0.499)};
\legend{chance (0.5), Red--green list, TANGO}
\end{axis}
\end{tikzpicture}
\caption{Key recovery from token frequencies improves with the number of
texts against the red--green list ($0.738$ at $25$ texts, $0.806$ at $100$)
and stays at chance against TANGO. Subsets of a pool of $100$ texts per
method, TANGO with one-sided enforcement at $\beta = 6$ and a whitened
coloring, red--green list at bias $5$.}
\label{fig:steal}
\end{figure}

\FloatBarrier
\subsection{Key recovery and forgery from token pairs}

\Cref{thm:unforgeable} covers single-token frequencies only. This
subsection measures an attacker it does not cover, one who counts token
pairs at the tap lag. \Cref{tab:steal-pairs} reports two steps of this
attack for every method. The first step ranks pairs by how much more often
they appear in watermarked than in unwatermarked text. Its \emph{pair AUC}
measures how well this ranking predicts which pairs satisfy the checksum
under the true key. The second step builds a text without a language model,
by chaining pairs that are over-represented in watermarked text, and
submits it to the detector.
Against TANGO the pair AUC is $0.62$, and $0.78$ on pairs seen at least
three times. At $200$ texts these forgeries pass TANGO's detector $74\%$ of
the time, against $96\%$ for the red--green list. Against dgMARK at its greedy defaults, at most one of $250$ forgeries passes at any $N$ (mean score $1.4$ at $200$ texts, in one of the five repetitions), because
those watermarked texts carry only a small excess of even token ids to
learn from (\Cref{app:results-external}).

Text strung together without a language model is not fluent.
\Cref{tab:forge-fluent} therefore reports a more realistic attacker, who has the
model but not the key. While generating, it adds a logit bias $\beta_{\mathrm{forge}}$,
the forging bias. Against TANGO the bias favors the tokens that followed
the tap token in the collected pairs, and against the red--green list it
favors the recovered green list. At $\beta_{\mathrm{forge}} = 4$, its texts
pass TANGO's detector $4\%$ of the time and the red--green detector $51\%$
of the time, at GPT-2-large perplexity $13.5$ and $15.2$ against $13.1$ for
unwatermarked text. At $\beta_{\mathrm{forge}} = 8$ the
rates are $55\%$ and $98\%$, and the TANGO forgeries have perplexity
$21.1$, against $13.1$ for unwatermarked text.

\begin{table}[htbp]
\centering
\caption{Forgeries built from token-pair counts pass TANGO's detector
less often than the red--green list's at every $N$. Each cell is
\emph{pair AUC / forgery success}, averaged over five random subsets of
the $N$ texts the attacker holds. Pair AUC uses each method's tap lag ($2$
for TANGO, $1$ otherwise). Forgery success is the fraction of $50$ forged
$128$-token texts accepted at $z > 4$. TANGO uses the either-side texts of
\Cref{tab:headline}. A dash (\na{}) marks dgMARK's pair AUC, which its keyless test leaves undefined (\Cref{app:results-external}), and forgeries we did not score, because the DLM watermark and KGW detectors run in their own code base. Dream has $100$ texts per method, so it has no $N = 200$ entry. KGW is the context-hashed green list of \citet{kirchenbauer2023watermark}, run through the DLM watermark's code. dgMARK runs without its beam-search lookahead (\Cref{app:results-external}).}
\label{tab:steal-pairs}
\small
\begin{tabular}{llcccc}
\toprule
model & method & $N=25$ & $N=50$ & $N=100$ & $N=200$ \\
\midrule
LLaDA & TANGO & 0.62 / 0.12 & 0.62 / 0.18 & 0.62 / 0.48 & 0.62 / 0.74 \\
LLaDA & Red--green list & 0.63 / 0.51 & 0.64 / 0.74 & 0.65 / 0.89 & 0.66 / 0.96 \\
LLaDA & Gumbel & 0.53 / 0.17 & 0.54 / 0.25 & 0.53 / 0.24 & 0.53 / 0.56 \\
LLaDA & DLM watermark, defaults & 0.54 / \na & 0.55 / \na & 0.54 / \na & 0.54 / \na \\
LLaDA & DLM watermark, matched, bias $2$ & 0.53 / \na & 0.53 / \na & 0.55 / \na & 0.55 / \na \\
LLaDA & DLM watermark, matched, bias $3$ & 0.55 / \na & 0.55 / \na & 0.56 / \na & 0.57 / \na \\
LLaDA & DLM watermark, matched, bias $4$ & 0.57 / \na & 0.58 / \na & 0.59 / \na & 0.60 / \na \\
LLaDA & KGW (context-hashed), matched, bias $2$ & 0.55 / \na & 0.55 / \na & 0.56 / \na & 0.56 / \na \\
LLaDA & dgMARK, greedy defaults & \na{} / 0.00 & \na{} / 0.00 & \na{} / 0.00 & \na{} / 0.00 \\
LLaDA & dgMARK, matched, top-$k$ $3$ & \na{} / 0.09 & \na{} / 0.21 & \na{} / 0.26 & \na{} / 0.44 \\
\midrule
Dream & TANGO & 0.58 / 0.14 & 0.59 / 0.31 & 0.59 / 0.49 & \na \\
Dream & Red--green list & 0.69 / 0.29 & 0.67 / 0.51 & 0.68 / 0.64 & \na \\
\bottomrule
\end{tabular}
\end{table}

\begin{table}[htbp]
\centering
\caption{At forging bias $\beta_{\mathrm{forge}} = 4$, an attacker with the model but not
the key passes TANGO's detector in $4\%$ of cases and the red--green
list's in $51\%$. The attacker mines the $200$ watermarked texts of each
method from \Cref{tab:headline} and generates on $100$ unseen prompts.
Against TANGO it biases each position toward the tokens that, in the
collected lag-$2$ pairs, followed the token two positions to its left.
Against the red--green list it biases every position toward the tokens
most over-represented in watermarked text. Success is at each detector's empirical $1\%$ FPR
threshold on these prompts ($z = 1.96$ for TANGO, $4.15$ for the
red--green list) and at the fixed $z > 4$. PPL is GPT-2-large perplexity,
and unwatermarked text on these prompts has PPL $13.1$.}
\label{tab:forge-fluent}
\small
\begin{tabular}{llrrrr}
\toprule
$\beta_{\mathrm{forge}}$ & detector & mean $z$ & success (1\% FPR) & success ($z > 4$) & PPL GPT-2 \\
\midrule
4 & TANGO & 0.08 & 0.04 & 0.00 & 13.5 \\
4 & Red--green list & 4.15 & 0.51 & 0.56 & 15.2 \\
8 & TANGO & 2.62 & 0.55 & 0.23 & 21.1 \\
8 & Red--green list & 10.56 & 0.98 & 0.98 & 30.2 \\
\bottomrule
\end{tabular}
\end{table}

\FloatBarrier
\subsection{The published diffusion watermarks}
\label{app:results-external}

Neither of the two published watermarks for masked-diffusion models that
we ran detects more than TANGO or the red--green list at a similar
perplexity cost (\Cref{tab:external}), which supports the comparison in
\Cref{sec:experiments-headline}. We ran the DLM watermark
\citep{wmdlm2025} and dgMARK \citep{dgmark2026} from their public code on
the same $200$ prompts, at their own default decoding settings and at
ours. We ran dgMARK without the one-step lookahead beam search of its
Section~3.4, which its generation script runs by default, and with its default $32$-token blocks in both runs. Their
samplers differ from ours, so each perplexity is paired with an
unwatermarked control from the same sampler. The best-detecting settings
of the published watermarks are the DLM watermark at bias $4$, which detects $64\%$ of clean
texts at about $1.14$ times its control perplexity, and dgMARK with
top-$k$ sampling, which detects $84\%$ at $2.2$ times. The DLM watermark
falls on the red--green list's curve in \Cref{tab:e7}. That list detects
$58\%$ of clean texts at $1.1$ times its control perplexity (bias $2$) and
$83\%$ at $1.2$ times (bias $3$). dgMARK detects fewer clean texts than TANGO, which detects
$97\%$ at $1.7$ times.

The released dgMARK detector needs no key. When a private key is set, as
in our runs, its code derives the expected
parity and the observed token parity from the same key bit, so the two
cancel, and the test reduces to counting even token ids. Any text rich in
even token ids therefore passes it. The watermarked texts have $58\%$
(greedy defaults) and $68\%$ (top-$k$) even ids, against $48\%$ for
unwatermarked text, which is why its forgeries in \Cref{tab:steal-pairs}
target even ids. We report dgMARK as released. Its authors note that a keyed pseudorandom
function can replace the token-id parity.

\begin{table}[htbp]
\centering
\caption{Neither of the two published diffusion watermarks we ran detects
more than TANGO or the red--green list at a similar perplexity increase
over its own control. $200$ C4 prompts, $128$ tokens, LLaDA-8B. TANGO, the red--green
list, and the Gumbel rule are at the settings of \Cref{tab:headline}. The
published diffusion watermarks were run from their public code at their own default decoding settings and at our decoding settings
(\emph{matched}), dgMARK without its beam-search lookahead. The DLM watermark's default run and both dgMARK runs, with their controls, decode in $32$-token blocks, and all other runs decode the $128$ tokens as one block. A dash marks a value that is undefined or that we did not compute, namely frequency AUC for the Gumbel rule, the DLM watermark, and KGW, pair AUC for dgMARK's keyless test, and forgery success for the DLM watermark and KGW, whose forgeries we did not score. Each perplexity is followed, in parentheses, by that of
unwatermarked text from the same sampler, and both are over all
continuations. \emph{Freq.\ AUC} is key recovery from token frequencies
(\Cref{tab:stealth}). \emph{Pair AUC} and \emph{forge} are at $N = 200$ from \Cref{tab:steal-pairs}, where KGW is also defined. In the DLM watermark's default row, $2$ or $3$ of the $200$ texts per column received no score from its detector and are left out.}
\label{tab:external}
\footnotesize
\setlength{\tabcolsep}{3pt}
\resizebox{\linewidth}{!}{\begin{tabular}{@{}lrccccrrccc@{}}
\toprule
 & & \multicolumn{4}{c}{TPR@1\%FPR} & \multicolumn{2}{c}{PPL (control)} & & & \\
\cmidrule(lr){3-6}\cmidrule(lr){7-8}
method & AUROC & clean & del30 & syn30 & ins20 & Qwen & GPT-2 & freq.\ AUC & pair AUC & forge \\
\midrule
TANGO ($\beta = 5$, either side) & 0.998 & 0.97 & 0.68 & 0.92 & 0.91 & 7.0 (4.1) & 18.9 (12.9) & 0.50 & 0.62 & 0.74 \\
Red--green list, bias $5$ & 0.993 & 0.98 & 0.96 & 0.94 & 0.98 & 7.4 (4.1) & 17.6 (12.9) & 0.82 & 0.66 & 0.96 \\
Gumbel rule & 0.815 & 0.15 & 0.10 & 0.07 & 0.07 & 4.5 (4.1) & 13.6 (12.9) & \na & 0.53 & 0.56 \\
\midrule
DLM watermark, defaults (bias $2$) & 0.843 & 0.37 & 0.30 & 0.22 & 0.24 & 6.5 (6.1) & 39.9 (71.1) & \na & 0.54 & \na \\
DLM watermark, matched, bias $2$ & 0.808 & 0.19 & 0.16 & 0.09 & 0.09 & 4.2 (4.0) & 24.1 (21.5) & \na & 0.55 & \na \\
DLM watermark, matched, bias $3$ & 0.873 & 0.34 & 0.20 & 0.16 & 0.14 & 4.2 (4.0) & 24.3 (21.5) & \na & 0.57 & \na \\
DLM watermark, matched, bias $4$ & 0.915 & 0.64 & 0.44 & 0.32 & 0.39 & 4.6 (4.0) & 22.9 (21.5) & \na & 0.60 & \na \\
KGW (context-hashed), matched, bias $2$ & 0.745 & 0.17 & 0.11 & 0.07 & 0.12 & 4.4 (4.0) & 23.2 (21.5) & \na & 0.56 & \na \\
dgMARK, greedy defaults & 0.871 & 0.41 & 0.43 & 0.40 & 0.41 & 3.7 (3.3) & 23.2 (59.4) & 0.59 & \na & 0.00 \\
dgMARK, matched, top-$k$ $3$ & 0.983 & 0.84 & 0.78 & 0.64 & 0.78 & 8.9 (4.0) & 34.2 (31.3) & 0.70 & \na & 0.44 \\
\bottomrule
\end{tabular}}
\end{table}

\FloatBarrier
\subsection{Detection against perplexity at every strength}

This subsection expands the comparison of
\Cref{sec:experiments-headline}. A single operating point per method can
favor whichever method happens to sit at a lower perplexity. \Cref{tab:e7} therefore varies each method's
strength under identical decoding, so that methods can be compared at
matched perplexity. At similar Qwen perplexity the red--green list detects
more deleted texts than TANGO. At perplexity $7.6$ against TANGO's $7.2$ it
detects $96\%$ against $68\%$, and at $9.9$ against $8.4$ it detects $98\%$
against $81\%$.

\begin{table}[htbp]
\centering
\caption{The red--green list detects more deleted texts than TANGO at
matched perplexity. LLaDA, $n = 200$ per setting, with the $200$
unwatermarked continuations as negatives. TANGO (either side) varies
$\beta$, the red--green list its bias, and the Gumbel rule its sampling
temperature. Perplexity is over non-degenerate continuations, which are at
least $94\%$ of every setting (\Cref{app:protocol}). The Gumbel row at
$\tau = 1.0$ is a second generation run of the setting in
\Cref{tab:headline}, so it differs from that row by sampling noise.}
\label{tab:e7}
\small
\begin{tabular}{lrrrrrrr}
\toprule
setting & AUROC & clean & del30 & syn30 & ins20 & PPL GPT-2 & PPL Qwen \\
\midrule
TANGO $\beta = 4$ & 0.998 & 0.97 & 0.61 & 0.84 & 0.85 & 16.8 & 6.1 \\
TANGO $\beta = 5$ (\Cref{tab:headline}) & 0.998 & 0.97 & 0.68 & 0.92 & 0.91 & 19.4 & 7.2 \\
TANGO $\beta = 6$ & 0.999 & 0.99 & 0.81 & 0.95 & 0.96 & 21.3 & 8.4 \\
TANGO $\beta = 8$ & 1.000 & 1.00 & 0.93 & 0.99 & 0.98 & 31.4 & 13.5 \\
TANGO $\beta = 10$ & 1.000 & 1.00 & 0.97 & 0.98 & 0.99 & 51.8 & 23.4 \\
\midrule
Red--green list, bias $2$ & 0.940 & 0.58 & 0.38 & 0.39 & 0.69 & 13.5 & 4.6 \\
Red--green list, bias $3$ & 0.979 & 0.83 & 0.71 & 0.71 & 0.90 & 13.9 & 5.2 \\
Red--green list, bias $4$ & 0.995 & 0.94 & 0.85 & 0.85 & 0.96 & 16.3 & 6.3 \\
Red--green list, bias $5$ (\Cref{tab:headline}) & 0.993 & 0.98 & 0.96 & 0.94 & 0.98 & 18.0 & 7.6 \\
Red--green list, bias $6$ & 0.999 & 0.98 & 0.98 & 0.98 & 0.98 & 21.6 & 9.9 \\
\midrule
Gumbel $\tau = 1.0$ & 0.821 & 0.13 & 0.06 & 0.07 & 0.10 & 14.0 & 4.6 \\
Gumbel $\tau = 1.2$ & 0.927 & 0.48 & 0.23 & 0.32 & 0.36 & 18.1 & 6.2 \\
Gumbel $\tau = 1.5$ & 0.999 & 0.97 & 0.83 & 0.90 & 0.96 & 49.8 & 20.6 \\
\bottomrule
\end{tabular}
\end{table}

\FloatBarrier
\subsection{Results on Dream}
\label{app:dream}

This subsection expands the Dream rows of \Cref{tab:headline}. On Dream
the red--green list degrades text more than TANGO. \Cref{tab:e4-dream}
gives the Dream rows with their intervals and GPT-2 perplexity. The
red--green list collapses $62\%$ of its texts into a repeated phrase,
against $20\%$ of unwatermarked texts and $31\%$ of TANGO texts
(\Cref{app:protocol}). On the texts that do not collapse, its perplexity
is also above TANGO's under both scorers. The collapse does not hurt its
detection. Its mean score is $12.2$ on the non-degenerate texts and
$14.3$ on the degenerate ones. The Dream null is in
\Cref{sec:experiments-null}, and Dream forgery from token pairs is in
\Cref{tab:steal-pairs}.

Key recovery from token frequencies stays at chance on Dream, as on LLaDA.
With one-sided enforcement at $\beta = 8$ ($100$ texts), the total-variation distance between watermarked and unwatermarked token frequencies is $0.319$ for TANGO, above the $0.229$ that sampling noise alone produces between two unwatermarked samples, and the red--green list reaches $0.610$. In the same run, the key-recovery AUC is $0.503$ against TANGO and $0.798$ against the red--green list, and
on the either-side texts of \Cref{tab:headline} it is $0.501$ against TANGO. In the one-sided run, no forgery built from the predicted key passes TANGO's detector,
while every one passes the red--green detector.

\begin{table}[htbp]
\centering
\caption{On Dream, TANGO and the red--green list detect every unedited
text, and TANGO has the lower perplexity on non-degenerate continuations.
$n = 100$ per method, unwhitened coloring, with the $100$ unwatermarked
continuations as negatives and $95\%$ bootstrap intervals. PPL is
GPT-2-large perplexity over non-degenerate continuations, and
\emph{degenerate} is the fraction of continuations that collapsed into a
repeated phrase (\Cref{app:protocol}).}
\label{tab:e4-dream}
\small
\resizebox{\linewidth}{!}{\begin{tabular}{lrrrrrrr}
\toprule
 & & \multicolumn{4}{c}{TPR@1\%FPR} & & \\
\cmidrule(lr){3-6}
method & AUROC & clean & del30 & syn30 & ins20 & PPL GPT-2 & degenerate \\
\midrule
TANGO ($\beta = 5$, either side) & 1.000 & 1.00 [1.00,1.00] & 0.73 [0.65,0.81] & 0.98 [0.95,1.00] & 0.97 [0.93,1.00] & 15.2 & 31\% \\
Red--green list & 1.000 & 1.00 [1.00,1.00] & 0.98 [0.95,1.00] & 0.98 [0.95,1.00] & 0.98 [0.95,1.00] & 18.1 & 62\% \\
unwatermarked & \na & \na & \na & \na & \na & 9.9 & 20\% \\
\bottomrule
\end{tabular}}
\end{table}

\FloatBarrier
\subsection{Removal by an attacker who holds the key}

A provider whose key leaks should know how much editing removes the
watermark. \Cref{tab:b3} compares two attackers who regenerate the same
number of words. Because the attacker edits words and the checksum reads
tokens, the key-aware attacker approximates each scored pair by the first
tokens of two words that are two words apart. It uses the key to find the
word pairs that satisfy the checksum and regenerates a random subset of
those words. The random attacker regenerates random words. Both replace
each chosen word with a prediction of a masked language model
(distilroberta-base) that differs from the original word. The setting is
one-sided enforcement at $\beta = 6$ with the whitened coloring ($n =
60$). Unedited texts have mean $z = 5.55$ at GPT-2 perplexity $19.2$.
At a $10\%$ budget the two attackers lower the score by about the same
amount (mean $z$ $4.34$ against $4.42$). At $25\%$ and $50\%$ the
key-aware attacker lowers it less than the random one, so for this
attacker the key does not make removal easier. At a $25\%$ budget
about a quarter of the texts remain above $z = 4$, and at $50\%$ almost
none do. The key-aware attacker chooses which words to regenerate but not
their replacements, and it locates taps at the word level while TANGO's tap
is two tokens back. The table therefore describes only attackers of this
kind.

\begin{table}[htbp]
\centering
\caption{Knowing the key does not help this attacker remove the
watermark. One-sided enforcement at $\beta = 6$, whitened coloring, $n =
60$, clean mean $z = 5.55$. Each budget is the fraction of words
regenerated. \emph{Det.\ rate} is the fraction of texts still above $z =
4$. PPL is GPT-2-large perplexity.}
\label{tab:b3}
\small
\begin{tabular}{lrrrrrr}
\toprule
 & \multicolumn{3}{c}{key-aware} & \multicolumn{3}{c}{random} \\
\cmidrule(lr){2-4}\cmidrule(lr){5-7}
budget & mean $z$ & PPL GPT-2 & det.\ rate & mean $z$ & PPL GPT-2 & det.\ rate \\
\midrule
10\% & 4.34 & 28.1 & 0.62 & 4.42 & 28.5 & 0.60 \\
25\% & 3.11 & 44.3 & 0.27 & 2.81 & 44.2 & 0.20 \\
50\% & 1.54 & 76.8 & 0.02 & 1.03 & 95.2 & 0.02 \\
\bottomrule
\end{tabular}
\end{table}

\FloatBarrier
\subsection{Paraphrase}
\label{app:paraphrase}

\Cref{thm:robust} covers edits that keep positions aligned. A paraphrase reorders whole clauses, so many tokens no longer sit at lag $2$ from their original taps. We
paraphrase by back-translation, from English to Chinese and back with the
Helsinki-NLP Marian models ($n = 100$, one-sided enforcement at $\beta = 8$, with the $100$ unwatermarked continuations of the same prompts as negatives). TANGO's TPR@1\%FPR falls from $1.00$ to $0.34$ (mean $z$ from $6.5$ to
$0.9$). The red--green list falls to $0.61$ (mean $z$ from $9.0$ to $4.5$),
because its signal sits in single tokens and survives reordering. Neither
watermark is robust to paraphrase.

\FloatBarrier
\subsection{Variance over seeds and keys}
\label{app:seeds}

Single runs hide the variation due to the sampling seed and the secret
key. Both checks below use one-sided settings, not the setting of
\Cref{tab:headline}. \Cref{tab:e6ci} repeats one-sided enforcement at
$\beta = 8$ over three generation seeds ($32$ texts per seed). The per-seed
standard deviation of TPR is $0.04$ without edits and up to $0.19$ under
the heaviest edits. \Cref{tab:b5} repeats one-sided enforcement at $\beta =
6$ with a whitened coloring over four independent keys ($40$ texts per
key). Clean AUROC is $0.998 \pm 0.002$ across keys, so detection does not
depend on a lucky key direction.

\begin{table}[htbp]
\centering
\caption{Detection over three generation seeds, one-sided enforcement at
$\beta = 8$ with a single tap at lag $2$ and the unwhitened coloring, $32$ texts per seed, with the $96$ unwatermarked continuations
as negatives. Intervals are $95\%$ percentile intervals from $1000$
bootstrap resamples of the pooled texts, and the per-seed std is the population standard deviation over the three seeds.}
\label{tab:e6ci}
\small
\begin{tabular}{lrr}
\toprule
attack & TPR@1\%FPR (95\% CI) & per-seed std \\
\midrule
clean      & 0.98 [0.95,1.00] & 0.04 \\
del10 & 0.92 [0.85,0.97] & 0.08 \\
del30 & 0.62 [0.52,0.72] & 0.18 \\
syn30 & 0.85 [0.78,0.92] & 0.11 \\
syn50 & 0.61 [0.51,0.71] & 0.19 \\
ins20 & 0.91 [0.84,0.96] & 0.09 \\
\bottomrule
\end{tabular}
\end{table}

\begin{table}[htbp]
\centering
\caption{Detection over four independent secret keys, one-sided
enforcement at $\beta = 6$, whitened coloring, $40$ texts per key, with the $40$ unwatermarked continuations of the same prompts as negatives. Each key uses its own residue, $b = 2$, $0$, $2$, and $0$. The std is the population standard deviation over the four keys. The
second row is the mean score of $300$ natural C4 texts.}
\label{tab:b5}
\small
\begin{tabular}{lrr}
\toprule
quantity & mean & std \\
\midrule
clean AUROC & 0.998 & 0.002 \\
mean $z$ on natural text & $-0.11$ & 0.28 \\
\bottomrule
\end{tabular}
\end{table}

\FloatBarrier
\section{Full ablations}
\label{app:design}

This appendix supports the design choices of
\Cref{sec:experiments-sweeps}. It gives every column of the ablations
condensed into \Cref{tab:ablations}, the selection experiment that chose the enforcement
side and bias, and a check of the length scaling of \Cref{prop:ez}. The
residue ablation is in \Cref{app:residue-diag}, next to the calibration
diagnostic that explains it. Unless a caption says otherwise, the
ablations use one-sided enforcement at $\beta = 8$ and residue $b = 0$, PPL
is GPT-2-large perplexity over all continuations, and null FPR is the
fraction of $200$ natural C4 texts with $z > 4$.

The ablations of \Cref{tab:e1,tab:e3} are small. Each cell has $24$
watermarked texts, and the TPR@1\%FPR threshold is set on the $24$
unwatermarked continuations of the same prompts, so it is close to their
maximum score. A cell at TPR $1.00$ has an exact two-sided $95\%$
Clopper--Pearson lower bound of $0.86$. The noise is also visible directly.
The $q = 3$ row of \Cref{tab:e1} and the $\{1\}$ row of \Cref{tab:e3} are
the same configuration run twice, and their TPRs differ by up to $0.13$.
Differences of that size within a table are therefore not significant, and
we draw conclusions only from trends across several rows.

\FloatBarrier
\subsection{Number of colors}
\label{app:sweeps}

The semantic-quantile coloring (\emph{quantile} for short) keeps the null
FPR at or below $0.5\%$ for every number of colors, while the
semantic-cluster coloring (\emph{cluster}) of \Cref{sec:method-coloring}
does not (\Cref{tab:e1}). The cluster coloring flags $60\%$ of
natural texts at $q = 2$ and $30.5\%$ at $q = 3$, because tokens with similar
embeddings, which often occur near each other, share a cluster, so the checksum holds by chance far more
often than $1/q$. For the quantile coloring, $q = 2$ detects less under
edits than $q \geq 3$, and perplexity grows from $22.2$ at $q = 3$ to about
$30$ at $q \geq 8$.

\begin{table}[htbp]
\centering
\caption{The semantic-quantile coloring keeps the null FPR at or below
$0.5\%$ for every $q$, while the cluster coloring flags $60\%$ of natural
texts at $q = 2$ and $30.5\%$ at $q = 3$. Lag $1$, $n = 24$ per row. The residue is the default key's, $b = 0$ for $q \leq 3$ and $b = 3$, $7$, and $15$ for $q = 4$, $8$, and $16$.}
\label{tab:e1}
\small
\resizebox{\linewidth}{!}{\begin{tabular}{lrrrrrrrrr}
\toprule
coloring & AUROC & clean & del10 & del30 & syn30 & syn50 & ins20 & PPL GPT-2 & null FPR (\%) \\
\midrule
quantile, $q = 2$ & 0.989 & 0.92 & 0.79 & 0.62 & 0.71 & 0.58 & 0.83 & 20.4 & 0.0 \\
quantile, $q = 3$ & 1.000 & 1.00 & 1.00 & 0.92 & 0.96 & 0.75 & 1.00 & 22.2 & 0.0 \\
quantile, $q = 4$ & 1.000 & 1.00 & 0.96 & 0.79 & 0.83 & 0.67 & 0.96 & 26.8 & 0.5 \\
quantile, $q = 8$ & 1.000 & 1.00 & 1.00 & 0.83 & 0.83 & 0.62 & 1.00 & 29.9 & 0.5 \\
quantile, $q = 16$ & 0.998 & 0.96 & 0.96 & 0.88 & 0.96 & 0.92 & 0.96 & 29.5 & 0.0 \\
cluster, $q = 2$ & 0.817 & 0.38 & 0.25 & 0.12 & 0.17 & 0.08 & 0.46 & 13.3 & 60.0 \\
cluster, $q = 3$ & 0.916 & 0.71 & 0.58 & 0.46 & 0.54 & 0.50 & 0.54 & 17.5 & 30.5 \\
cluster, $q = 4$ & 0.972 & 0.92 & 0.92 & 0.92 & 0.83 & 0.75 & 0.92 & 23.3 & 0.0 \\
cluster, $q = 8$ & 0.930 & 0.67 & 0.75 & 0.67 & 0.83 & 0.79 & 0.62 & 24.0 & 0.0 \\
cluster, $q = 16$ & 0.975 & 0.79 & 0.75 & 0.58 & 0.50 & 0.42 & 0.75 & 20.3 & 0.5 \\
\bottomrule
\end{tabular}}
\end{table}

\FloatBarrier
\subsection{Tap set}

Detection under edits falls as taps are added (\Cref{tab:e3,fig:taps}).
This is consistent with \Cref{thm:robust}, by which each added tap
multiplies the expected score under random substitutions by another factor $1 - \rho$. A
single tap detects best under every edit, and with four taps del30 TPR
falls from $0.92$ to $0.12$. The trend is not monotone in every column,
but the exceptions are within the noise of $24$ texts per row. The theorem
depends only on $|\mathcal{D}|$, so it predicts the same robustness for lag
$1$ and lag $2$. In \Cref{tab:e3}, lag $2$ detects as well as or better than lag $1$ in every column. We use lag $2$ because the adjacent token's color is the most correlated with the scored
token's color in natural text (\Cref{app:residue-diag}), and because the
lag-$1$ tap leaks more to the pair-counting attacker of \Cref{tab:steal-pairs}
(\Cref{tab:e9-screen}).

\begin{table}[htbp]
\centering
\caption{Adding taps lowers detection under edits. $q = 3$
semantic-quantile coloring, $n = 24$ per row.}
\label{tab:e3}
\small
\begin{tabular}{lrrrrrrrrr}
\toprule
taps $\mathcal{D}$ & AUROC & clean & del10 & del30 & syn30 & syn50 & ins20 & PPL GPT-2 & null FPR (\%) \\
\midrule
$\{1\}$ & 0.990 & 0.96 & 0.96 & 0.92 & 0.83 & 0.71 & 0.96 & 25.6 & 0.0 \\
$\{2\}$ & 1.000 & 1.00 & 1.00 & 0.92 & 0.96 & 0.92 & 0.96 & 25.0 & 0.0 \\
$\{1, 2\}$ & 0.990 & 0.96 & 0.79 & 0.25 & 0.58 & 0.17 & 0.79 & 19.6 & 0.0 \\
$\{1, 3\}$ & 1.000 & 1.00 & 0.88 & 0.50 & 0.62 & 0.62 & 0.75 & 18.5 & 0.0 \\
$\{1, 2, 3\}$ & 0.986 & 0.83 & 0.62 & 0.25 & 0.25 & 0.04 & 0.54 & 16.7 & 0.0 \\
$\{1, 2, 3, 4\}$ & 0.984 & 0.83 & 0.50 & 0.12 & 0.38 & 0.25 & 0.42 & 18.9 & 0.0 \\
\bottomrule
\end{tabular}
\end{table}

\begin{figure}[htbp]
\centering
\begin{tikzpicture}
\begin{axis}[
  width=0.55\linewidth, height=0.36\linewidth,
  font=\footnotesize,
  xlabel={number of taps $|\mathcal{D}|$},
  ylabel={TPR@1\%FPR},
  xmin=0.7, xmax=5.0, ymin=0, ymax=1.08,
  xtick={1,2,3,4}, ytick={0,0.5,1},
  yticklabels={$0$,$0.5$,$1$},
  axis lines=left, axis line style={mut, thin},
  tick style={mut, thin}, tick align=outside,
  ymajorgrids, grid style={gray!18, thin},
  clip=false,
]
\addplot[refline, dashed, line width=0.8pt, domain=1:4, samples=60] {0.92*pow(0.70,x-1)};
\addplot[tango, line width=0.9pt, mark=*, mark size=2.1pt] coordinates {
  (1,0.92) (2,0.25) (3,0.25) (4,0.12)};
\addplot[attackb, line width=0.9pt, mark=square*, mark size=1.9pt] coordinates {
  (1,0.96) (2,0.58) (3,0.25) (4,0.38)};
\addplot[attackc, line width=0.9pt, mark=triangle*, mark size=2.4pt] coordinates {
  (1,0.96) (2,0.79) (3,0.54) (4,0.42)};
\draw[gray!55, thin] (axis cs:4.06,0.42) -- (axis cs:4.30,0.50);
\draw[gray!55, thin] (axis cs:4.06,0.38) -- (axis cs:4.30,0.36);
\draw[gray!55, thin] (axis cs:4.06,0.316) -- (axis cs:4.30,0.22);
\draw[gray!55, thin] (axis cs:4.06,0.12) -- (axis cs:4.30,0.08);
\node[attackc, font=\scriptsize, anchor=west, inner sep=1.5pt] at (axis cs:4.30,0.50) {ins20};
\node[attackb, font=\scriptsize, anchor=west, inner sep=1.5pt] at (axis cs:4.30,0.36) {syn30};
\node[refline, font=\scriptsize, anchor=west, inner sep=1.5pt] at (axis cs:4.30,0.22) {theory};
\node[tango, font=\scriptsize, anchor=west, inner sep=1.5pt] at (axis cs:4.30,0.08) {del30};
\end{axis}
\end{tikzpicture}
\caption{Detection under edits falls as taps are added (rows $\{2\}$,
$\{1, 2\}$, $\{1, 2, 3\}$, and $\{1, 2, 3, 4\}$ of \Cref{tab:e3}). The
dashed curve scales the single-tap
del30 TPR by $(1-\rho)^{|\mathcal{D}| - 1}$ at $\rho = 0.3$. It shows only
the trend that \Cref{thm:robust} predicts, because the theorem concerns the
mean score under substitution.}
\label{fig:taps}
\end{figure}

\FloatBarrier
\subsection{Enforcement side, bias, lag, and unmasking order}
\label{app:screen}

Either-side enforcement raises detection under deletion, and at $\beta =
6$ it costs little Qwen perplexity ($7.8$ against $7.4$). The other
variants of the selection experiment each cost more perplexity or leak
more to forgery (\Cref{tab:e9-screen}, rule in \Cref{app:protocol}).
Besides the fixed bias $\beta$, the experiment tried a \emph{target} bias,
which at each enforced position adds the smallest bias, up to a cap, that lifts the favored class
to a target probability. It also tried a lag-$1$ tap and \emph{order
steering}. Order steering targets positions whose pair cannot yet be
enforced, because neither neighbor at lag $\delta$ is unmasked. It halves
the confidence score that the sampler uses to rank these positions for
unmasking. The sampler then tends to unmask positions with an unmasked
neighbor first.

Enforcing from either side raises the enforced fraction from $0.61$ to
$0.74$ at $\beta = 8$ and from $0.62$ to $0.75$ at $\beta = 6$, and it
raises del30 TPR at both biases. A lag-$1$ tap reaches del30 TPR $0.94$,
against $0.77$ for lag $2$ at the same $\beta = 6$ and perplexity, but its
forgery success is $0.54$ against $0.38$. Order steering raises the
enforced fraction to about $0.9$, but across the three steered settings it
multiplies perplexity by $1.3$ to $1.7$ (Qwen and GPT-2), and at $\beta =
6$ it lowers AUROC to $0.974$.

\begin{table}[htbp]
\centering
\caption{Either-side enforcement at $\beta = 5$ (bold) has the lowest Qwen
perplexity among the eligible settings of the selection experiment. $48$
held-out C4 prompts, with their $48$ unwatermarked continuations as
negatives. \emph{Pair AUC / forge} is the attack of \Cref{tab:steal-pairs}
at $N = 48$. \emph{Target, cap} biases each enforced position just enough
to lift the favored class to the target probability, with a bias of at
most the cap. \emph{Steered} is order steering. $\circ$ marks eligible
settings, $\times$ marks eligible settings excluded for forgery, and
unmarked settings are ineligible (rule in \Cref{app:protocol}). Perplexity is over all continuations.}
\label{tab:e9-screen}
\footnotesize
\setlength{\tabcolsep}{3pt}
\resizebox{\linewidth}{!}{\begin{tabular}{@{}llrrrrrrrrr@{}}
\toprule
setting & & AUROC & clean & del30 & syn30 & ins20 & enforced fraction & PPL GPT-2 & PPL Qwen & pair AUC / forge \\
\midrule
one-sided, $\beta=8$ (reference) & & 0.999 & 0.98 & 0.79 & 0.98 & 0.98 & 0.61 & 24.0 & 9.7 & 0.61 / 0.36 \\
one-sided, $\beta=6$ & & 0.999 & 0.98 & 0.75 & 0.79 & 0.96 & 0.62 & 19.6 & 7.4 & 0.61 / 0.26 \\
either side, $\beta=8$ & $\times$ & 1.000 & 1.00 & 0.94 & 1.00 & 1.00 & 0.74 & 31.1 & 12.7 & 0.62 / 0.47 \\
either side, $\beta=6$ & & 1.000 & 1.00 & 0.77 & 0.96 & 0.96 & 0.75 & 19.6 & 7.8 & 0.63 / 0.38 \\
\textbf{either side, $\boldsymbol{\beta=5}$} & $\circ$ & \textbf{1.000} & \textbf{1.00} & \textbf{0.79} & \textbf{0.92} & \textbf{0.98} & \textbf{0.76} & \textbf{17.8} & \textbf{7.0} & \textbf{0.62 / 0.40} \\
either side, $\beta=4$ & & 0.998 & 0.96 & 0.73 & 0.81 & 0.81 & 0.77 & 16.0 & 5.9 & 0.61 / 0.23 \\
either side, target $0.8$, cap $8$ & $\circ$ & 1.000 & 1.00 & 0.92 & 0.96 & 0.98 & 0.71 & 25.7 & 11.0 & 0.62 / 0.42 \\
either side, target $0.9$, cap $8$ & $\circ$ & 1.000 & 1.00 & 0.92 & 1.00 & 1.00 & 0.72 & 27.8 & 11.7 & 0.62 / 0.34 \\
either side, target $0.95$, cap $8$ & $\times$ & 1.000 & 1.00 & 0.88 & 1.00 & 1.00 & 0.73 & 29.0 & 11.9 & 0.63 / 0.49 \\
either side, target $0.9$, cap $6$ & $\circ$ & 1.000 & 1.00 & 0.79 & 0.83 & 0.92 & 0.73 & 19.8 & 8.0 & 0.62 / 0.28 \\
either side, $\beta=6$, lag $1$ & $\times$ & 1.000 & 1.00 & 0.94 & 0.98 & 0.98 & 0.73 & 19.6 & 7.6 & 0.65 / 0.54 \\
either side, $\beta=6$, steered & & 0.974 & 0.96 & 0.85 & 0.94 & 0.94 & 0.90 & 33.9 & 11.2 & 0.61 / 0.40 \\
either side, $\beta=8$, steered & & 0.979 & 0.98 & 0.96 & 0.98 & 0.98 & 0.88 & 44.9 & 21.2 & 0.62 / 0.62 \\
either side, target $0.9$, cap $8$, steered & $\times$ & 1.000 & 1.00 & 0.96 & 1.00 & 1.00 & 0.86 & 36.3 & 17.8 & 0.63 / 0.57 \\
\midrule
unwatermarked & & & & & & & & 12.1 & 4.2 & \\
\bottomrule
\end{tabular}}
\end{table}

\FloatBarrier
\subsection{Detection grows with text length}

\Cref{prop:ez} predicts that the mean score grows as $\sqrt{T}$ in the
number $T$ of scored pairs, with slope $\varepsilon / \sigma_0$ when the
margin $\varepsilon$ does not depend on length. \Cref{fig:ztheory} checks
the $\sqrt{T}$ shape on $200$ LLaDA texts. We truncate each $128$-token text
to its first $16$, $32$, $48$, $64$, $96$, or $128$ tokens and score each
prefix. Some texts end early, so the full texts have $124.9$ scored pairs on
average. The slope $0.48$ comes from the margin measured on the full texts
($\hat\varepsilon = 0.227$), so the line and the data agree at full length
by construction. The test is whether shorter prefixes fall on the same
line. At $94$ scored pairs the measured mean score matches the line ($4.67$
in both cases). For shorter prefixes it is lower, so the margin is smaller
near the start of a text than over the whole text. The line reaches $z =
4$ at $T \approx 69$ scored pairs, and the measured scores reach it at
about $T = 78$. The prediction therefore underestimates the length of text that
detection needs by about $9$ scored pairs.

\begin{figure}[htbp]
\centering
\begin{tikzpicture}
\begin{axis}[
  width=0.55\linewidth, height=0.38\linewidth,
  font=\footnotesize,
  xlabel={$\sqrt{T}$ ($T$ = scored pairs)},
  ylabel={mean detector score $z$},
  xmin=0, xmax=13, ymin=-1.4, ymax=6.4,
  xtick={0,4,8,12}, ytick={0,2,4,6},
  axis lines=left, axis line style={mut, thin},
  tick style={mut, thin}, tick align=outside,
  ymajorgrids, grid style={gray!18, thin},
  clip=false,
]
\addplot[refline, dashed, line width=0.8pt, domain=0:12.0, samples=2] {0.48*x};
\addplot[tango, line width=0.9pt, mark=*, mark size=2.1pt] coordinates {
  (3.74,0.72) (5.48,1.46) (6.78,2.45) (7.87,3.29) (9.70,4.67) (11.17,5.38)};
\addplot[mut, line width=0.7pt, mark=square*, mark size=1.7pt] coordinates {
  (3.74,-0.28) (5.48,-0.24) (6.78,-0.32) (7.87,-0.28) (9.70,-0.30) (11.17,-0.34)};
\node[refline, font=\scriptsize, rotate=27, anchor=south west, inner sep=1pt]
  at (axis cs:2.6,1.55) {theory: slope $0.48$};
\node[tango, font=\scriptsize\bfseries, anchor=north east, inner sep=2pt] at (axis cs:9.4,5.9) {TANGO};
\node[mut, font=\scriptsize, anchor=north west, inner sep=2pt]
  at (axis cs:3.5,-0.5) {unwatermarked};
\end{axis}
\end{tikzpicture}
\caption{The mean score matches the predicted line at $94$ scored pairs
and falls below it for shorter prefixes. Unwatermarked text stays near
zero. Prefixes of $200$ LLaDA texts,
one-sided enforcement at $\beta = 6$, whitened coloring. The dashed line
has slope $\hat\varepsilon / \sigma_0 = 0.48$, with $\hat\varepsilon$
measured on the full texts.}
\label{fig:ztheory}
\end{figure}

\FloatBarrier
\section{Additional related work}
\label{app:related}

This appendix extends \Cref{sec:related} with further watermarks for
diffusion language models, semantic watermarks, and attacks and defenses
around forgery.

\paragraph{Further watermarks for diffusion language models.}
DMark \citep{dmark2025} extends the context-hashed green-list bias of
\citet{kirchenbauer2023watermark} with predicted and bidirectional context.
\citet{bagchi2025ddlm} apply Gumbel-max sampling at every diffusion step,
seeded by position, and SAC-Copula \citep{li2026saccopula} replaces its
independent perturbations with locally correlated Gumbel fields.
\citet{zhao2026sketch} controls a global sketch of the whole sequence,
which gives an order-agnostic statistic. DenMark \citep{ma2026denmark},
concurrent with our work, embeds a semantic watermark in the denoising
process.

\paragraph{Semantic watermarks.}
Watermarks keyed by hashes of tokens break under paraphrase, which
motivated watermarks keyed by meaning. SIR derives the watermark logits
from an embedding of the preceding context \citep{liu2024sir}, SemStamp partitions a
sentence-embedding space and rejection-samples into keyed regions
\citep{hou2024semstamp}, and SemaMark discretizes context semantics
\citep{ren2024semamark}. These methods use meaning to make the key
assignment of a single token or sentence robust to paraphrase. TANGO uses meaning to make a pair
statistic survive synonym substitution. Like most semantic watermarks, it uses embeddings, but only to build the coloring once per key, so detection needs only the stored coloring. Image watermarks for diffusion
models share both ideas, embedding the mark inside the sampling process and
keying it to semantics. Hidden-in-the-Noise places the mark in the initial
diffusion noise \citep{arabi2024hidden}, and SEAL keys it to image
semantics \citep{arabi2025seal}.

\paragraph{Stealing, spoofing, and signatures.}
Watermark stealing is practical. With a modest number of texts an attacker
approximates the green list well enough to spoof or remove the watermark
\citep{jovanovic2024stealing}, and random-selection probing recovers the green lists of $n$-gram watermarks \citep{demark2024}. Comparing
output token frequencies also reveals token colors \citep{wu2024scts}, and
\citet{zhang2024mip} recover the green list of a single-key scheme with
mixed integer programming and use it to remove the watermark. A student model
can also learn to generate watermarked text by distillation, which lets an
attacker spoof a watermark without reading its key
\citep{gu2024learnability}. Spoofing was an early objection to
detector-based provenance \citep{sadasivan2023canai}, and
\citet{zhang2024sand} show that strong watermarking is impossible against
an attacker with a quality oracle and a perturbation oracle. Defenses
include statistical tests that flag learning-based spoofing
\citep{gloaguen2024spoofing}, and \citet{zhao2024sok} list robust,
unforgeable public attribution among the open problems of watermarking.
\citet{liu2024upv} use separate generation and detection networks so that
the public detector is hard to forge from, and Bileve embeds signature bits
to detect spoofing \citep{zhou2024bileve}. The cryptographic response embeds
a digital signature in the text \citep{fairoze2025publicly}, and
\citet{robustsig2026} make the signature robust to token substitutions. A
signature's guarantee is computational and holds against any
computationally bounded attacker. TANGO's guarantee is statistical. Under an idealized sampler, equal class masses, and an evenly varying favored class, expected token frequencies carry no information
about the key (\Cref{thm:unforgeable}), and the guarantee says nothing
about token pairs. Signature schemes embed the
signature in high-entropy blocks of text, for example by rejection sampling. TANGO adds one logit update over the vocabulary per biased position per denoising step, and its expected score decreases at a known rate under
random substitutions (\Cref{thm:robust}).

\FloatBarrier
\section{Qualitative samples}
\label{app:samples}

\Cref{fig:samples} shows continuations of three C4 prompts by the
unwatermarked model, TANGO, the red--green list, and the Gumbel rule at the
settings of \Cref{tab:headline}. Each entry is one sentence copied
verbatim from the full continuation. In these samples all four methods
produce readable text, and TANGO's text shows no visible trace of the
checksum.

\clearpage

\begin{figure}[htbp]
\centering
\qcard{
  \qprompt{\dots\ enjoying the sounds of holiday music, or taking a trip back in time to celebrate the traditions}
  \qrow{ink}{Unwatermarked}{Each experience enriches the holiday season, making it a time of personalization and cherished memories.}
  \qrow{tango}{TANGO}{Exchanging a wish list with Santa brings a sense of anticipation and joy, while the sounds of holiday music can transport you straight to the festive season.}
  \qrow{boost}{Red--green list}{It's a time to reunite with loved ones, create cherished memories, and appreciate the small joys that make the holiday season truly special.}
  \qrow{mut}{Gumbel}{Sharing your wish list with Santa can spark a sense of hope and anticipation, while the enchanting sounds of holiday music can transport you to cozy fireplaces and starlit nights.}
}
\qcard{
  \qprompt{Students have long applied to colleges and universities with applications that are heavy on test scores and grades \dots\ the founders of}
  \qrow{ink}{Unwatermarked}{However, over time, standardized test scores like the SAT and ACT (or A Levels in the UK) and academic grades have become significant components of the admissions process.}
  \qrow{tango}{TANGO}{Many founders recognize that reliance on test scores and grades does not always reflect a student's overall potential, character, or diverse interests.}
  \qrow{boost}{Red--green list}{While that's not necessarily wrong, the founders of colleges often emphasized the importance of considering a student's overall character, extracurricular activities, personal statements, letters of recommendation, and other elements that demonstrate well-roundedness.}
  \qrow{mut}{Gumbel}{Historically, founders wanted to assess a student's potential for academic success, extracurricular activities, personal qualities, and other factors.}
}
\qcard{
  \qprompt{Cluster comprises IBM's Opteron-based eServer 325 server and systems management software and storage devices that can run Linux and Windows operating systems}
  \qrow{ink}{Unwatermarked}{This setup is ideal for heterogeneous computing environments where multiple operating systems need to coexist and interact seamlessly.}
  \qrow{tango}{TANGO}{IBM's eServer 325 servers are known for their efficient performance and reliability, making them suitable for various computational needs.}
  \qrow{boost}{Red--green list}{This configuration allows for a high performance, scalable, and compatible server architecture that can accommodate various operating systems and applications.}
  \qrow{mut}{Gumbel}{This setup allows for flexibility and scalability, as it can run multiple operating systems concurrently, which can be beneficial for applications that require heterogeneous environments.}
}
\caption{Continuations of three C4 prompts at the settings of
\Cref{tab:headline}, each a verbatim span of the full continuation.}
\label{fig:samples}
\end{figure}

\end{document}